\documentclass[runningheads]{llncs}

\usepackage{wrapfig}
\usepackage[T1]{fontenc}
\usepackage[utf8]{inputenc}  
\usepackage{amsfonts}
\usepackage[export]{adjustbox}
\usepackage[colorlinks=true]{hyperref}
\usepackage{dsfont}
\usepackage{bm}
\usepackage{mathtools}
\usepackage{float}
\usepackage{times}
\usepackage[noend]{algorithm2e}
\usepackage{stmaryrd}
\usepackage[subrefformat=parens,labelformat=parens,caption=false]{subfig}
\usepackage{tikz}
\usetikzlibrary{arrows,automata}
\tikzset{terminal state/.style={draw,rectangle,minimum size=.3in}}
\usepackage{enumitem}
\usepackage{natbib}
\usepackage[symbol]{footmisc}
\usepackage{dblfloatfix}
\usepackage{xargs}       
\usepackage[colorinlistoftodos,prependcaption,textsize=tiny]{todonotes}
\usepackage[misc]{ifsym}
\newcommandx{\td}[2][1=]{\todo[inline,size=\large,#1]{#2}}

\DeclareMathOperator*{\argmax}{argmax}
\usepackage{dsfont}

\tikzstyle{startstop} = [rectangle, rounded corners, minimum width=3cm, minimum height=1cm,text centered, draw=black, fill=red!30]
\tikzstyle{io} = [trapezium, trapezium left angle=70, trapezium right angle=110, minimum width=3cm, minimum height=1cm, text centered, draw=black, fill=blue!30]
\tikzstyle{process} = [rectangle, minimum width=3cm, minimum height=1cm, text centered, draw=black, fill=orange!30]
\tikzstyle{decision} = [diamond, minimum width=3cm, minimum height=1cm, text centered, draw=black, fill=green!30]
\tikzstyle{arrow} = [thick,->,>=stealth]

\newtheorem{assumption}{Assumption}

\newcommand\blfootnote[1]{%
	\begingroup
	\renewcommand\thefootnote{}\footnote{#1}%
	\addtocounter{footnote}{-1}%
	\endgroup
}

\floatname{algorithm}{Pseudo-code}
\newenvironment{pseudocode}[1][htb]
  {
   \begin{algorithm}[#1]%
  }{\end{algorithm}}
  
\newenvironment{pseudocode*}[1][htb]
  {
   \begin{algorithm*}[#1]%
  }{\end{algorithm*}}

\providecommand{\qedsymbol}{$\square$}
\newcommand{\mathqed}{\quad\hbox{\qedsymbol}}
\DeclareRobustCommand{\qed}{%
  \ifmmode \mathqed
  \else
    \leavevmode\unskip\penalty9999 \hbox{}\nobreak\hfill
    \quad\hbox{\qedsymbol}%
  \fi
}

\spnewtheorem*{proofsketch}{Proof sketch}{\itshape}{\rmfamily}

\begin{document}

\title{Safe Policy Improvement with Soft Baseline Bootstrapping}
\titlerunning{Safe Policy Improvement with Soft Baseline Bootstrapping}
%
\author{
	Kimia Nadjahi$^{1\dagger *}$ 
	\and
	Romain Laroche$^{2*}$ 
	\and 
	R\'emi Tachet des Combes$^2$
}
\authorrunning{
	Kimia Nadjahi \and Romain Laroche \and R\'emi Tachet des Combes
}
\institute{LTCI, T\'el\'ecom Paris, Institut Polytechnique de Paris, France \\
\email{kimia.nadjahi@telecom-paris.fr}
\and 
Microsoft Research Montr\'eal, Canada \\
\email{$\{$romain.laroche,
remi.tachet$\}$@microsoft.com}.
}
%
\tocauthor{Kimia Nadjahi (T\'el\'ecom Paris),
Romain Laroche (Microsoft Research Montr\'eal),
R\'emi Tachet des Combes (Microsoft Research Montr\'eal),
}
\toctitle{Safe Policy Improvement with Soft Baseline Bootstrapping}

\maketitle

\begin{abstract}
Batch Reinforcement Learning (Batch RL) consists in training a policy using trajectories collected with another policy, called the behavioural policy. Safe policy improvement (SPI) provides guarantees with high probability that the trained policy performs better than the behavioural policy, also called baseline in this setting. Previous work shows that the SPI objective improves mean performance as compared to using the basic RL objective, which boils down to solving the MDP with maximum likelihood~\citep{Laroche2019}. Here, we build on that work and improve more precisely the SPI with Baseline Bootstrapping algorithm (SPIBB) by allowing the policy search over a wider set of policies. Instead of binarily classifying the state-action pairs into two sets (the \textit{uncertain} and the \textit{safe-to-train-on} ones), we adopt a softer strategy that controls the error in the value estimates by constraining the policy change according to the local model uncertainty. The method can take more risks on uncertain actions all the while remaining provably-safe, and is therefore less conservative than the state-of-the-art methods. We propose two algorithms (one optimal and one approximate) to solve this constrained optimization problem and empirically show a significant improvement over existing SPI algorithms both on finite MDPs and on infinite MDPs with a neural network function approximation.
\end{abstract}

\section{Introduction}

In sequential decision-making problems, a common goal is to find a good policy using a limited number of trajectories generated by another policy, usually called the behavioral policy. This approach, also known as Batch Reinforcement Learning \citep{Lange2012}, is motivated by the many real-world applications that naturally fit a setting where data collection and optimization are decoupled (contrary to online learning which integrates the two): \emph{e.g.} dialogue systems~\citep{Singh1999}, technical process control~\citep{Ernst2005,Riedmiller2005}, medical applications~\citep{Guez2008}.\setcounter{footnote}{1}\footnotetext{Equal contribution.} 
\stepcounter{footnote}\footnotetext{Work done while interning at Microsoft Research Montr\'eal.}\setcounter{footnote}{0}\renewcommand{\thefootnote}{\arabic{footnote}}\blfootnote{Finite MDPs code available at \href{https://github.com/RomainLaroche/SPIBB}{\texttt{https://github.com/RomainLaroche/SPIBB}}.}\blfootnote{SPIBB-DQN code available at \href{https://github.com/rems75/SPIBB-DQN}{\texttt{https://github.com/rems75/SPIBB-DQN}}.}

While most reinforcement learning techniques aim at finding a high-performance policy~\citep{Sutton1998}, the final policy does not necessarily perform well once it is deployed. In this paper, we focus on Safe Policy Improvement \citep[SPI,][]{thomas2015safe,Petrik2016}, where the goal is to train a policy on a batch of data and guarantee with high probability that it performs at least as well as the behavioural policy, called baseline in this SPI setting. The safety guarantee is crucial in real-world applications where bad decisions may lead to harmful consequences. 

Among the existing SPI algorithms, a recent computationally efficient and provably-safe methodology is SPI with Baseline Bootstrapping \citep[SPIBB,][]{Laroche2019,Dias2019}. Its principle consists in building the set of state-action pairs that are only encountered a few times in the dataset. This set is called the bootstrapped set. The algorithm then reproduces the baseline policy for all pairs in that set and trains greedily on the rest. It therefore assumes access to the baseline policy, which is a common assumption in the SPI literature~\citep{Petrik2016}. Other SPI algorithms use as reference the baseline performance, which is assumed to be known instead~\citep{thomas2015safe,Petrik2016}. We believe that the known policy assumption is both more informative and more common, since most Batch RL settings involve datasets that were collected using a previous system based on a previous algorithm (\emph{e.g.} dialogue, robotics, pharmaceutical treatment). While the empirical results show that SPIBB is safe and performs significantly better than the existing algorithms, it remains limited by the binary classification of the bootstrapped set: a pair either belongs to it, and the policy cannot be changed, or it does not, and the policy can be changed entirely.

Our contribution is a reformulation of the SPIBB objective that allows slight policy changes for uncertain state-action pairs while remaining safe. Instead of binarily classifying the state-action pairs into two sets, the uncertain and the \text{safe-to-train-on} ones, we adopt a strategy that extends the policy search to soft policy changes, which are constrained by an error bound related to the model uncertainty. The method is allowed to take more risks than SPIBB on uncertain actions, and still has theoretical safety guarantees under some assumptions. As a consequence, the safety constraint is softer: we coin this new SPI methodology \emph{Safe Policy Improvement with Soft Baseline Bootstrapping} (Soft-SPIBB). We develop two algorithms to tackle the Soft-SPIBB problem. The first one solves it exactly, but is computationally expensive. The second one provides an approximate solution but is much more efficient computation-wise. We empirically evaluate the performance and safety of our algorithms on a gridworld task and analyze the reasons behind their significant advantages over the competing Batch RL algorithms. We further demonstrate the tractability of the approach by designing a DQN algorithm enforcing the Soft-SPIBB constrained policy optimization. The empirical results, obtained on a navigation task, show that Soft-SPIBB safely improves the baseline, and again outperforms all competing algorithms.

\section{Background}\label{sec:background}

\subsection{Markov Decision Processes}

We consider problems in which the agent interacts with an environment modeled as a \emph{Markov Decision Process} (MDP): $M^* = \langle \mathcal{X}, \mathcal{A}, P^*, R^*, \gamma \rangle$, where $\mathcal{X}$ is the set of states, $\mathcal{A}$ the set of actions, $P^*$ the unknown transition probability function, $R^*$ the unknown stochastic reward function bounded by $\pm R_{max}$, and $\gamma \in [0, 1)$ the discount factor for future rewards. The goal is to find a policy $\pi: \mathcal{X} \rightarrow \Delta_{\mathcal{A}}$, with $\Delta_{\mathcal{A}}$ the set of probability distributions over the set of actions $\mathcal{A}$, that maximizes the expected return of trajectories $\rho(\pi, M^*) = V^{\pi}_{M^*}(x_0)=\mathbb{E}_{\pi, M^*}\left[ \sum_{t \geq 0} \gamma^t R^*(x_t, a_t) \right]$. $x_0$ is the initial state of the environment and $V^{\pi}_{M^*}(x)$ is the value of being in state $x$ when following policy $\pi$ in MDP $M^*$. We denote by $\Pi$ the set of stochastic policies. Similarly to $V^{\pi}_{M^*}(x)$, $Q^{\pi}_{M^*}(x,a)$ denotes the value of taking action $a$ in state $x$. $A^{\pi}_{M}(x,a) = Q^{\pi}_{M}(x,a) - V^{\pi}_{M}(x)$ quantifies the advantage (or disadvantage) of action $a$ in state $x$.

Given a dataset of transitions $\mathcal{D}=\langle x_j,a_j,r_j,x'_j \rangle_{j\in\llbracket 1,|\mathcal{D}|\rrbracket}$, we denote the state-action pair counts by $N_{\mathcal{D}}(x,a)$, and its Maximum Likelihood Estimator (MLE) MDP by $\widehat{M} = \langle \mathcal{X}, \mathcal{A}, \widehat{P}, \widehat{R}, \gamma \rangle$, with:
\begin{align*}
    \widehat{P}(x'|x,a) = \cfrac{\sum_{\langle x_j=x,a_j=a,r_j,x'_j=x' \rangle\in\mathcal{D}} 1}{N_{\mathcal{D}}(x,a)} \text{\; and \;} \widehat{R}(x,a) = \cfrac{\sum_{\langle x_j=x,a_j=a,r_j,x'_j \rangle\in\mathcal{D}} r_j}{N_{\mathcal{D}}(x,a)}.
\end{align*}

The difference between an estimated parameter and the true one can be bounded using classic concentration bounds applied to the state-action counts in $\mathcal{D}$~\citep{Petrik2016,Laroche2019}: for all state-action pairs $(x,a)$, we know with probability at least $1 - \delta$ that,
\begin{align}
    &||P^*(\cdot|x,a)-\widehat{P}(\cdot|x,a)||_1 \leq e_P(x,a),\ |R^*(x,a)-\widehat{R}(x,a)| \leq e_P(x,a)R_{max}, \label{eq:bound-P} \\
    &\big\lvert Q^{\pi_b}_{M^*}(x,a) - Q^{\pi_b}_{\widehat{M}}(x,a) \big\rvert\ \leq e_Q(x,a) V_{max}, \label{eq:bound-Q}
\end{align}
where $V_{max} \leq \dfrac{R_{max}}{1 - \gamma}\ $ is the maximum of the value function, and the two error functions may be derived from Hoeffding's inequality (see \ref{sup:error_bounds}) as
\begin{align*}
    e_P(x,a) := \sqrt{\cfrac{2}{N_{\mathcal{D}}(x,a)}\log\cfrac{2|\mathcal{X}||\mathcal{A}|2^{|\mathcal{X}|}}{\delta}}  \text{\; and \;} 
    e_Q(x,a) := \sqrt{\cfrac{2}{N_{\mathcal{D}}(x,a)}\log\cfrac{2|\mathcal{X}||\mathcal{A}|}{\delta}}.
\end{align*}
We will also use the following definition:
\begin{definition}
    A policy $\pi$ is said to be a policy improvement over a baseline policy $\pi_b$ in an MDP $M = \langle \mathcal{X}, \mathcal{A}, P, R, \gamma\rangle$ if the following inequality holds in every state $x\in\mathcal{X}$:
    \begin{align}
       V^{\pi}_{M}(x) \geq V^{\pi_b}_{M}(x)
    \end{align}
\end{definition}

\subsection{Safe Policy Improvement with Baseline Bootstrappping}

Our objective is to maximize the expected return of the target policy under the constraint of improving with high probability $1-\delta$ the baseline policy. This is known to be an NP-hard problem \citep{Petrik2016} and some approximations are required to make it tractable. This paper builds on the Safe Policy Improvement with Baseline Bootstrapping methodology \citep[SPIBB,][]{Laroche2019}. SPIBB finds an approximate solution to the problem by searching for a policy maximizing the expected return in the MLE MDP $\widehat{M}$, under the constraint that the policy improvement is guaranteed in the set of plausible MDPs $\Xi$:
\begin{align}
	&\argmax_{\pi} \rho(\pi,\widehat{M}),  \textnormal{ s.t. } \forall M \in \Xi, \rho(\pi,M) \geq \rho(\pi_b,M) - \zeta \\
	\Xi = &\left\{M = \langle \mathcal{X}, \mathcal{A}, R, P, \gamma\rangle 
	\textnormal{ s.t. } \forall x,a, 
	\begin{array}{ll}
	||P(\cdot|x,a)-\widehat{P}(\cdot|x,a)||_1 \leq e_P(x,a),\\
	|R(x,a)-\widehat{R}(x,a)| \leq e_P(x,a)R_{max} 
	\end{array}\right\}
	\end{align}
The error function $e_P$ is such that the true MDP $M^*$ has a high probability of at least $1-\delta$ to belong to $\Xi$~\citep{Iyengar2005,Nilim2005}. In other terms, the objective is to optimize the target performance in $\widehat{M}$ such that its performance is $\zeta$-approximately at least as good as $\pi_b$ in the admissible MDP set, where $\zeta$ is a precision hyper-parameter. Expressed this way, the problem is still intractable. SPIBB is able to find an approximate solution within a tractable amount of time by applying a special processing to state-action pair transitions that were not sampled enough in the batch of data. The methodology consists in building a set of rare thus uncertain state-action pairs in the dataset $\mathcal{D}$, called the bootstrapped set and denoted by $\mathcal{B}$: the bootstrapped set contains all the state-action pairs $(x,a) \in \mathcal{X} \times \mathcal{A}$ whose counts in $\mathcal{D}$ are lower than a hyper-parameter $N_{\wedge}$. SPIBB algorithms then construct a space of allowed policies, \textit{i.e} policies that are constrained on the bootstrapped set $\mathcal{B}$, and search for the optimal policy in this set by performing policy iteration. For example, $\Pi_b$-SPIBB is a provably-safe algorithm that assigns the baseline $\pi_b$ to the state-action pairs in $\mathcal{B}$ and trains the policy on the rest. $\Pi_{\leq b}$-SPIBB is a variant that does not give more weight than $\pi_b$ to the uncertain transitions.

SPIBB's principle amounts to search over a policy space constrained such that the policy improvement may be precisely assessed in $M^*$. Because of the hard definition of the bootstrapped set, SPIBB relies on a binary decision-making and may be too conservative. Our novel method, called Soft-SPIBB, follows the same principle, but relaxes this definition by allowing soft policy changes for the uncertain state-action pairs, and offers more flexibility than SPIBB while remaining safe. 

This idea might seem similar to Conservative Policy Iteration (CPI), Trust Region Policy Optimization (TRPO), or Proximal Policy Optimization (PPO) in that it allows changes in the policy under a proximity regularization to the old policy \citep{Kakade2002,Schulman2015,Schulman2017}. However, with Soft-SPIBB, the proximity constraint is tightened or relaxed according to the amount of samples supporting the policy change (see Definition~\ref{def:constraint}). Additionally, CPI, TRPO, and PPO are designed for the online setting. In the batch setting we consider, they would be either too conservative if the proximity regularization is applied with respect to the fixed baseline, or would converge to the fixed point obtained when solving the MLE MDP if the proximity regularization is moving with the last policy update~\citep[Corollary 3 of][]{Geist2019}.

\subsection{Linear Programming} \label{subsec:linearprog}

Linear programming aims at optimizing a linear objective function under a set of linear in-equality constraints. The most common methods for solving such linear programs are the simplex algorithm and interior point methods \citep[IPMs,][]{GVK180926950}. Even though the worst-case computational complexity of the simplex is exponential in the dimensions of the program being solved \citep{klee:1972}, this algorithm is efficient in practice: the number of iterations seems polynomial, and sometimes linear in the problem size \citep{borgwardt:87, dantzig:03}. Nowadays, these two classes of methods continue to compete with one another: it is hard to predict the winner on a particular class of problems \citep{Gondzio_interiorpoint}. For instance, the hyper-sparsity of the problem generally seems to favour the simplex algorithm, while IPMs can be much more efficient for large-scale linear programming. 


\section{Safe Policy Improvement with Soft Baseline Bootstrapping}\label{sec:spisbb}

SPIBB allows to make changes in state-action pairs where the model error does not exceed some threshold $\epsilon$, which may be expressed as a function of $N_\wedge$. This may be seen as a hard condition on the bootstrapping mechanism: a state-action pair policy may either be changed totally, or not at all. In this paper, we propose a softer mechanism where, for a given error function, a local error budget is allocated for policy changes in each state $x$. Similarly to SPIBB, we search for the optimal policy in the MDP model $\widehat{M}$ estimated from the dataset $\mathcal{D}$, but we reformulate the constraint by using Definitions \ref{def:constraint} and \ref{def:advantageous_constraint}.
 
    \begin{definition}
        A policy $\pi$ is said to be $(\pi_b, e,\epsilon)$-constrained with respect to a baseline policy $\pi_b$, an error function $e$, and a hyper-parameter $\epsilon$ if, for all states $x\in\mathcal{X}$, the following inequality holds:
        \begin{equation*}
            \sum_{a \in \mathcal{A}} e(x,a) \big\lvert\pi(a|x)-\pi_b(a|x)\big\rvert \leq \epsilon.
        \end{equation*}
    \label{def:constraint}
    \end{definition}
    
        \begin{definition}
        A policy $\pi$ is said to be $\pi_b$-advantageous in an MDP $M = \langle \mathcal{X}, \mathcal{A}, P, R, \gamma\rangle$ if the following inequality holds in every state $x\in\mathcal{X}$:
        \begin{align}
           \sum_{a \in \mathcal{A}} A^{\pi_b}_{M}(x,a)\pi(a|x) \geq 0
        \end{align}
    \label{def:advantageous_constraint}
    \end{definition}
    
    \begin{remark}
        By the policy improvement theorem, a $\pi_b$-advantageous policy is a policy improvement over $\pi_b$. The converse is not guaranteed.
    \end{remark}


\subsection{Theoretical safe policy improvement bounds}

We show that constraining $\pi_b$-advantageous policies appropriately allows safe policy improvements. Due to space limitation, all proofs have been moved to the appendix, Section \ref{sup:proofs}.
\begin{theorem}
    Any $(\pi_b, e_Q,\epsilon)$-constrained policy $\pi$ that is $\pi_b$-advantageous in $\widehat{M}$ satisfies the following inequality in every state $x$ with probability at least $1 - \delta$:
    \begin{align}
        V^\pi_{M^*}(x)-V^{\pi_b}_{M^*}(x) &\geq -\cfrac{\epsilon V_{max}}{1-\gamma}.
    \end{align}
    \label{th:1-step}
\end{theorem}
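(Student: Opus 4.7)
The plan is to start from the performance difference lemma, which states that
\begin{equation*}
V^\pi_{M^*}(x) - V^{\pi_b}_{M^*}(x) \;=\; \sum_{t=0}^{\infty} \gamma^t \,\mathbb{E}_{x_t \sim P^*, \pi \mid x_0 = x}\!\left[\sum_{a} A^{\pi_b}_{M^*}(x_t,a)\,\pi(a\mid x_t)\right].
\end{equation*}
If I can show that for every state $y$ the inner expression $\sum_a A^{\pi_b}_{M^*}(y,a)\pi(a\mid y)$ is bounded below by $-\epsilon V_{\max}$, then summing the geometric series $\sum_t \gamma^t$ immediately gives the claimed bound $-\epsilon V_{\max}/(1-\gamma)$. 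So the entire argument reduces to a pointwise bound on the expected advantage in $M^*$.

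To obtain that pointwise bound, I would exploit two key facts. First, $\sum_a \pi_b(a\mid y)\,A^{\pi_b}_{M}(y,a)=0$ for any MDP $M$, so I can rewrite the expected advantage as $\sum_a (\pi(a\mid y)-\pi_b(a\mid y))\,A^{\pi_b}_{M}(y,a)$ for both $M=M^*$ and $M=\widehat{M}$. Second, the difference $A^{\pi_b}_{M^*}(y,a)-A^{\pi_b}_{\widehat{M}}(y,a)$ equals $\bigl(Q^{\pi_b}_{M^*}(y,a)-Q^{\pi_b}_{\widehat{M}}(y,a)\bigr)-\bigl(V^{\pi_b}_{M^*}(y)-V^{\pi_b}_{\widehat{M}}(y)\bigr)$, where the second, $a$-independent term is annihilated when weighted by $\pi(a\mid y)-\pi_b(a\mid y)$ (both sum to $0$ over $a$). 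Therefore
\begin{equation*}
\sum_a (\pi(a\mid y)-\pi_b(a\mid y))\bigl(A^{\pi_b}_{M^*}-A^{\pi_b}_{\widehat{M}}\bigr)(y,a) \;=\; \sum_a (\pi(a\mid y)-\pi_b(a\mid y))\bigl(Q^{\pi_b}_{M^*}-Q^{\pi_b}_{\widehat{M}}\bigr)(y,a).
\end{equation*}

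Applying the absolute value, the Q-value concentration bound~(\ref{eq:bound-Q}) yields $|Q^{\pi_b}_{M^*}(y,a)-Q^{\pi_b}_{\widehat{M}}(y,a)|\le e_Q(y,a)V_{\max}$ with probability at least $1-\delta$, so the right-hand side is upper bounded in absolute value by $V_{\max}\sum_a e_Q(y,a)|\pi(a\mid y)-\pi_b(a\mid y)|\le \epsilon V_{\max}$ by the $(\pi_b,e_Q,\epsilon)$-constraint (Definition~\ref{def:constraint}). Combining this with the $\pi_b$-advantageous hypothesis $\sum_a A^{\pi_b}_{\widehat{M}}(y,a)\pi(a\mid y)\ge 0$ gives $\sum_a A^{\pi_b}_{M^*}(y,a)\pi(a\mid y)\ge -\epsilon V_{\max}$ for every $y$, which feeds back into the performance-difference lemma to close the proof.

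The main technical subtlety, and the step that takes the most care, is the second one: turning the advantage-function difference into a quantity governed purely by the Q-function concentration bound. The cancellation of the $V$-difference crucially relies on the identity $\sum_a(\pi(a\mid y)-\pi_b(a\mid y))=0$, which is what makes the rewrite in terms of $(\pi-\pi_b)$ available and in turn lets the constraint definition (an $\ell_1$-type bound weighted by $e_Q$) interact cleanly with the elementwise Q-value error bound. Everything else---the performance-difference lemma and the geometric summation---is routine.
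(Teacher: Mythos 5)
Your proposal is correct and is essentially the paper's own proof: the performance-difference lemma you invoke is exactly the paper's Proposition~\ref{prop:Q-dpi-d} (written as $V^{\pi}_{M^*}-V^{\pi_b}_{M^*}=Q^{\pi_b}_{M^*}(\pi-\pi_b)d^{\pi}_{M^*}$), and your pointwise decomposition of the expected advantage into a nonnegative $\widehat{M}$-advantage term plus a $(Q^{\pi_b}_{M^*}-Q^{\pi_b}_{\widehat{M}})(\pi-\pi_b)$ error term bounded by $\epsilon V_{max}$ via the constraint and the concentration bound is precisely the paper's splitting of $Q^{\pi_b}_{M^*}$ into $Q^{\pi_b}_{\widehat{M}}$ plus the error. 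The final geometric summation corresponds to the paper's bound $\lVert d^{\pi}_{M^*}\rVert_1\le 1/(1-\gamma)$, so the two arguments coincide step for step.
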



Constraining the target policy to be advantageous over the baseline is a strong constraint that leads to conservative solutions. To the best of our findings, it is not possible to prove a more general bound on $(\pi_b, e_Q,\epsilon)$-constrained policy improvements. However, the search over $(\pi_b, e_P,\epsilon)$-constrained policies, where $e_P$ is an error bound over the probability function $P$ (Equation \ref{eq:bound-Q}), allows us to guarantee safety bounds under Assumption \ref{ass:kappa}, which states: 
    \begin{assumption}
        There exists a constant $\kappa < \frac{1}{\gamma}$ such that, for all state-action pairs $(x,a)\in\mathcal{X}\times\mathcal{A}$, the following inequality holds:
        \begin{align}
            \sum_{x',a'} e_P(x',a') \pi_b(a'|x') P^*(x'|x,a) \leq \kappa e_P(x,a).
        \end{align}
        \label{ass:kappa}
    \end{assumption}

Lemma \ref{lem:induction}, which is essential to prove Theorem \ref{thm:SSPIBB} below, relies on Assumption \ref{ass:kappa}.
    
    \begin{lemma}  Under Assumption \ref{ass:kappa}, any $(\pi_b, e_P,\epsilon)$-constrained policy $\pi$ satisfies the following inequality for every state-action pair $(x,a)$ with probability at least $1 - \delta$:
        \begin{align}
            \big\lvert Q^{\pi}_{M^*}(x,a) - Q^{\pi}_{\widehat{M}}(x,a)\big\rvert \leq \left(\cfrac{e_P(x,a)}{1-\kappa\gamma} + \cfrac{\gamma\epsilon}{\left(1-\gamma\right)\left(1-\kappa\gamma\right)}\right)V_{max}. \nonumber
        \end{align}
        \label{lem:induction}
    \end{lemma}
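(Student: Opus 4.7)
My plan is to bound $\Delta(x,a) \defeq |Q^\pi_{M^*}(x,a) - Q^\pi_{\widehat{M}}(x,a)|$ by deriving a Bellman-style recursion and then verifying by induction that the claimed closed form satisfies it.

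First, I would write out the one-step decomposition by subtracting the Bellman equations for $Q^\pi_{M^*}$ and $Q^\pi_{\widehat{M}}$, adding and subtracting $\gamma\sum_{x'}P^*(x'|x,a)V^{\pi}_{\widehat{M}}(x')$:
\begin{align*}
Q^\pi_{M^*}(x,a) - Q^\pi_{\widehat{M}}(x,a) &= \bigl(R^*(x,a)-\widehat{R}(x,a)\bigr) + \gamma\sum_{x'}\bigl(P^*(x'|x,a)-\widehat{P}(x'|x,a)\bigr)V^{\pi}_{\widehat{M}}(x') \\
&\quad + \gamma\sum_{x'}P^*(x'|x,a)\bigl(V^{\pi}_{M^*}(x')-V^{\pi}_{\widehat{M}}(x')\bigr).
\end{align*}
Taking absolute values, using (\ref{eq:bound-P}) to control the first two terms by $e_P(x,a)R_{max} + \gamma e_P(x,a)V_{max} = e_P(x,a)V_{max}$ (because $R_{max}+\gamma V_{max} = V_{max}$), and expanding $V^\pi = \sum_a \pi(a|\cdot)Q^\pi$, I obtain the recursion
\begin{equation*}
\Delta(x,a) \leq e_P(x,a)V_{max} + \gamma\sum_{x'}P^*(x'|x,a)\sum_{a'}\pi(a'|x')\Delta(x',a').
\end{equation*}

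Next, I would denote $f(x,a) \defeq \bigl(\tfrac{e_P(x,a)}{1-\kappa\gamma}+\tfrac{\gamma\epsilon}{(1-\gamma)(1-\kappa\gamma)}\bigr)V_{max}$ and prove $\Delta \leq f$ by induction on the iterates of the Bellman operator that defines $\Delta$ (or equivalently by a fixed-point/monotonicity argument, noting the operator is a $\gamma$-contraction in sup norm). The key algebraic step is to split $\pi(a'|x') = \pi_b(a'|x') + (\pi(a'|x')-\pi_b(a'|x'))$ inside the recursion. The constant $\tfrac{\gamma\epsilon V_{max}}{(1-\gamma)(1-\kappa\gamma)}$ part of $f$ commutes with the $(\pi-\pi_b)$ difference to give zero (since both are probability distributions), while the $e_P$ part combines with the constraint of Definition~\ref{def:constraint} to yield at most $\tfrac{\epsilon V_{max}}{1-\kappa\gamma}$. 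For the $\pi_b$ part, Assumption~\ref{ass:kappa} applied to $\sum_{x',a'}P^*(x'|x,a)\pi_b(a'|x')e_P(x',a')$ gives a factor of $\kappa e_P(x,a)$.

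Finally, collecting terms should give exactly
\begin{equation*}
e_P(x,a)V_{max}\Bigl(1+\tfrac{\gamma\kappa}{1-\kappa\gamma}\Bigr) + \tfrac{\gamma\epsilon V_{max}}{1-\kappa\gamma}\Bigl(1+\tfrac{\gamma}{1-\gamma}\Bigr) = \tfrac{e_P(x,a)V_{max}}{1-\kappa\gamma} + \tfrac{\gamma\epsilon V_{max}}{(1-\gamma)(1-\kappa\gamma)} = f(x,a),
\end{equation*}
closing the induction. The main obstacle I anticipate is bookkeeping: making sure the policy difference is split so that the $\pi_b$-weighted sum is the one to which Assumption~\ref{ass:kappa} directly applies, and verifying that the ansatz $f$ exactly saturates the recursion so that the induction step reproduces $f$ rather than a larger quantity. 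The condition $\kappa\gamma<1$ from Assumption~\ref{ass:kappa} is what makes the geometric series converge and keeps the bound finite.
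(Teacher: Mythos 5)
Your proposal is correct and follows essentially the same route as the paper's proof: an induction on Bellman iterates with the same three-term decomposition (reward error, transition error via H\"older, and the propagated $Q$-difference), the same split of $\pi$ into $\pi_b + (\pi-\pi_b)$ so that Definition~\ref{def:constraint} handles the difference term and Assumption~\ref{ass:kappa} handles the $\pi_b$-weighted term, and the same final arithmetic showing the ansatz is a fixed point of the recursion. The only differences are cosmetic bookkeeping (e.g., which factor the constant part of $f$ is paired with), so there is nothing substantive to add.
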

    
    \begin{theorem}
        Under Assumption \ref{ass:kappa}, any $(\pi_b, e_P,\epsilon)$-constrained policy $\pi$ satisfies the following inequality in every state $x$ with probability at least $1 - \delta$:
        \begin{align}
             V^\pi_{M^*}(x)-V^{\pi_b}_{M^*}(x) \geq V^\pi_{\widehat{M}}(x)-V^{\pi_b}_{\widehat{M}}(x) &- 2\big\lVert d^{\pi_b}_{M^*}(\,\boldsymbol{\cdot}\,|x)-d^{\pi_b}_{\widehat{M}}(\,\boldsymbol{\cdot}\,|x) \big\rVert_1 V_{max} \nonumber  \\
             &- \cfrac{1 + \gamma}{\left(1-\gamma\right)^2\left(1-\kappa\gamma\right)}\;\;\epsilon V_{max}.
        \end{align}
        \label{thm:SSPIBB}
    \end{theorem}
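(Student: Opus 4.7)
The plan is to use the Bellman-based performance-difference identity
\[
V^\pi_M(x) - V^{\pi_b}_M(x) \;=\; \sum_{x'} d^{\pi_b}_M(x'|x)\,\sum_{a}\bigl[\pi(a|x')-\pi_b(a|x')\bigr]\,Q^\pi_M(x',a),
\]
and apply it separately with $M = M^*$ and $M = \widehat{M}$. This is the \emph{$\pi_b$-occupancy} form of the performance-difference lemma (not the more common $d^\pi$-weighted form), and I expect to derive it by writing $V^\pi - V^{\pi_b}$ as $\sum_a[\pi-\pi_b]Q^\pi(x,a) + \gamma\,\mathbb{E}_{x'\sim P^{\pi_b}(\cdot|x)}[V^\pi(x')-V^{\pi_b}(x')]$ and unrolling the recursion, with the $\gamma^t P^{\pi_b}_t$ weights aggregating into $d^{\pi_b}_M(\cdot|x)$.

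I would then subtract the $\widehat{M}$-version from the $M^*$-version and add/subtract $d^{\pi_b}_{\widehat{M}}(x'|x)\,[\pi-\pi_b]\,Q^\pi_{M^*}$ to obtain the decomposition
\[
\bigl[V^\pi_{M^*}(x)-V^{\pi_b}_{M^*}(x)\bigr] - \bigl[V^\pi_{\widehat{M}}(x)-V^{\pi_b}_{\widehat{M}}(x)\bigr] \;=\; T_{\mathrm{dist}} + T_{\mathrm{val}},
\]
where $T_{\mathrm{dist}} = \sum_{x'}\bigl[d^{\pi_b}_{M^*}-d^{\pi_b}_{\widehat{M}}\bigr](x'|x)\sum_a[\pi-\pi_b](a|x')\,Q^\pi_{M^*}(x',a)$ and $T_{\mathrm{val}} = \sum_{x'} d^{\pi_b}_{\widehat{M}}(x'|x)\sum_a[\pi-\pi_b](a|x')\bigl[Q^\pi_{M^*}-Q^\pi_{\widehat{M}}\bigr](x',a)$. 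I would bound $|T_{\mathrm{dist}}|$ by Hölder, using that $\sum_a|\pi-\pi_b|\leq 2$ and $\|Q^\pi_{M^*}\|_\infty\leq V_{max}$ give $\bigl|\sum_a[\pi-\pi_b]Q^\pi_{M^*}\bigr|\leq 2V_{max}$, yielding exactly the $2\,\bigl\lVert d^{\pi_b}_{M^*}(\cdot|x)-d^{\pi_b}_{\widehat{M}}(\cdot|x)\bigr\rVert_1 V_{max}$ penalty of the theorem.

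For $T_{\mathrm{val}}$ I would invoke Lemma~\ref{lem:induction} (which is why $\pi$ being $(\pi_b,e_P,\epsilon)$-constrained is needed) to substitute $\bigl|Q^\pi_{M^*}(x',a)-Q^\pi_{\widehat{M}}(x',a)\bigr|\leq \bigl(\tfrac{e_P(x',a)}{1-\kappa\gamma} + \tfrac{\gamma\epsilon}{(1-\gamma)(1-\kappa\gamma)}\bigr)V_{max}$. The $e_P$-piece pairs with $\sum_a|\pi-\pi_b|\,e_P \leq \epsilon$ (the constraint of Definition~\ref{def:constraint}), and the $\epsilon$-piece pairs with $\sum_a|\pi-\pi_b|\leq 2$; summing over $x'$ against $d^{\pi_b}_{\widehat{M}}(\cdot|x)$, whose mass is $\tfrac{1}{1-\gamma}$, produces $\tfrac{\epsilon V_{max}}{(1-\gamma)(1-\kappa\gamma)} + \tfrac{2\gamma\epsilon V_{max}}{(1-\gamma)^2(1-\kappa\gamma)}$, which collapses to $\tfrac{(1+\gamma)\epsilon V_{max}}{(1-\gamma)^2(1-\kappa\gamma)}$. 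Rearranging gives the theorem.

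The main obstacle is a conceptual one rather than a technical one: picking the $d^{\pi_b}$-weighted performance-difference identity instead of the standard $d^\pi$-weighted one. That choice is what makes the distribution-discrepancy term come out in terms of $\pi_b$ (so that it can later be controlled from data generated by $\pi_b$) and simultaneously leaves behind a $Q^\pi_{M^*}-Q^\pi_{\widehat{M}}$ residual that matches exactly the hypothesis of Lemma~\ref{lem:induction}; after this choice, the remainder is careful bookkeeping using $\sum_a|\pi-\pi_b|\,e_P(x,\cdot)\leq\epsilon$, $\sum_a|\pi-\pi_b|\leq 2$, and $\sum_{x'}d^{\pi_b}_{\widehat{M}}(x'|x)=\tfrac{1}{1-\gamma}$.
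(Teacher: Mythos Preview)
Your proposal is correct and follows essentially the same route as the paper: both start from the $d^{\pi_b}$-weighted performance-difference identity $V^{\pi}_M-V^{\pi_b}_M = Q^{\pi}_M(\pi-\pi_b)d^{\pi_b}_M$, split the difference between $M^*$ and $\widehat M$ into a distribution-shift term and a $Q$-gap term, bound the first by H\"older with $\|\pi-\pi_b\|_1\le 2$ and $\|Q^\pi\|_\infty\le V_{max}$, and bound the second via Lemma~\ref{lem:induction} together with the $(\pi_b,e_P,\epsilon)$-constraint. The only cosmetic difference is that the paper adds and subtracts $Q^{\pi}_{\widehat M}(\pi-\pi_b)d^{\pi_b}_{M^*}$ (so its distribution term carries $Q^{\pi}_{\widehat M}$ and its value term carries $d^{\pi_b}_{M^*}$), whereas you add and subtract $Q^{\pi}_{M^*}(\pi-\pi_b)d^{\pi_b}_{\widehat M}$; since both $Q$'s are bounded by $V_{max}$ and both occupancy measures have total mass $\tfrac{1}{1-\gamma}$, the resulting bounds are identical.
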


\begin{remark} The theorems hold for any error function $e_P$ verifying \ref{eq:bound-Q} w.p. $1 - \delta$.
\end{remark}

\begin{remark} $\Pi_b$-SPIBB~\citep{Laroche2019} is a particular case of Soft-SPIBB where the error function $e_P(x,a)$ equals $\infty$ if $(x,a)\in\mathcal{B}$ and $\frac{\epsilon}{2}$ otherwise.
\end{remark}

\begin{remark} Theorem \ref{thm:SSPIBB} has a cubic dependency in the horizon $\frac{1}{1-\gamma}$, which is weaker than SPIBB's bounds, but allow us to safely search over more policies, when using tighter error functions. We will observe in Section \ref{sec:emp} that Soft-SPIBB empirically outperforms SPIBB both in mean performance and in safety.
\end{remark}

\subsection{Algorithms}\label{subsec:algos}

In this section, we design two safe policy improvement algorithms to tackle the problem defined by the Soft-SPIBB approach. They both rely on the standard policy iteration process described in Pseudo-code~\ref{ps:Soft-SPIBB}, where the policy improvement step consists in solving in every state $x \in \mathcal{X}$ the locally constrained optimization problem below:
\begin{equation}
    \pi^{(i+1)}(\cdot | x) = \argmax_{\pi \in \Pi} \sum_{a \in \mathcal{A}} Q^{(i)}_{\widehat{M}}(x, a) \pi(a|x) \label{eq:lp}
\end{equation}
subject to:

\textbf{Constraint 1:} $\pi$ being a probability: $\sum_{a \in \mathcal{A}} \pi(a|x) = 1$ and $\forall a$, $\pi(a|x) \geq 0$.

\textbf{Constraint 2:} $\pi$ being $(\pi_b, e, \epsilon)$-constrained.

\RestyleAlgo{ruled}
\begin{pseudocode}
\caption{Policy iteration process for Soft-SPIBB}  \label{ps:Soft-SPIBB}
 \KwIn{Baseline policy $\pi_b$, MDP model precision level $\epsilon$ and dataset $\mathcal{D}$.}
 Compute the model error concentration bounds $e(x,a)$.\\
 Initialize $i=0$ and $\pi^{(0)}(\cdot | x) = \pi_b (\cdot | x)$.\\
 \While{policy iteration stopping criterion not met}{
    Policy evaluation: compute $Q_{\widehat{M}}^{(i)}$ with dynamic programming. \\
    Policy improvement: set $\pi^{(i+1)}(\cdot | x)$ as the (exact or approximate) solution of the optimization problem defined in Equation \ref{eq:lp}.\\
    $i \leftarrow i+1$
}
\Return $\pi^{(i)}$ \\
\end{pseudocode}

\subsubsection{Exact-Soft-SPIBB:} The Exact-Soft-SPIBB algorithm computes the exact solution of the local optimization problem in \eqref{eq:lp} during the policy improvement step. For that, we express the problem as a Linear Program (LP) and solve it by applying the simplex algorithm. Note that we chose the simplex over IPMs as it turned out to be efficient enough for our experimental settings. For tractability in large action spaces, we reformulate the non-linear Constraint 2 as follows: we introduce $|\mathcal{A}|$ auxiliary variables $\{ z(x,a) \}_{(x,a) \in \mathcal{X}\times\mathcal{A}}$, which bound from above each element of the sum. For a given $x \in \mathcal{X}$, Constraint 2 is then replaced by the following $2|\mathcal{A}|+1$ linear constraints: 
\begin{align}
    &\forall a \in \mathcal{A},&\ \pi(a|x) - \pi_b(a|x) &\leq z(x,a), \\
    &\forall a \in \mathcal{A},&\ -\pi(a|x) + \pi_b(a|x) &\leq z(x,a), \\
    &&\sum_a e(x,a) z(x,a) &\leq \epsilon.
\end{align}

\subsubsection{Approx-Soft-SPIBB:} We also propose a computationally-efficient algorithm, which returns a sub-optimal target policy $\pi^\odot_{\sim}$. It relies on the same policy iteration, but computes an approximate solution to the optimization problem. The approach still guarantees to improve the baseline in $\widehat{M}$: $\rho(\pi^\odot_{\sim}, \widehat{M}) \geq \rho(\pi_b, \widehat{M})$, and falls under the Theorems \ref{th:1-step} and \ref{thm:SSPIBB} SPI bounds. Approx-Soft-SPIBB's local policy improvement step consists in removing, for each state $x$, the policy probability mass $m^-$ from the action $a^-$ with the lowest $Q$-value. Then, $m^-$ is attributed to the action that offers the highest $Q$-value improvement by unit of error $\partial \epsilon$:
\begin{align}
    a^+ &= \argmax_{a\in\mathcal{A}} \cfrac{\partial \pi(a|x)}{\partial \epsilon}\left(Q^{(i)}_{\widehat{M}}(x,a)-Q^{(i)}_{\widehat{M}}(x,a^-)\right) \\
    &= \argmax_{a\in\mathcal{A}} \cfrac{Q^{(i)}_{\widehat{M}}(x,a)-Q^{(i)}_{\widehat{M}}(x,a^-)}{e(x,a)}
\end{align}

Once $m^-$ has been reassigned to another action with higher value, the budget is updated accordingly to the error that has been spent, and the algorithm continues with the next worst action until a stopping criteria is met: the budget is fully spent, or $a^-=a^*$, where $a^*$ is the action with maximal state-action value. The policy improvement step of Approx-Soft-SPIBB is further formalized in Pseudo-code~\ref{alg:SSPIBB-sort-Q}, found in the appendix, Section \ref{sup:policy-approximate-policy-improvement}. 

\begin{theorem}
    The policy improvement step of Approx-Soft-SPIBB generates policies that are guaranteed to be $(\pi_b,e,\epsilon)$-constrained.
    \label{thm:PI}
\end{theorem}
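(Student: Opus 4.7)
The plan is to prove the statement by induction on the steps of Approx-Soft-SPIBB's local policy improvement loop for a fixed state $x$, maintaining the loop invariant that the current partial policy $\pi(\cdot|x)$ satisfies $\sum_{a\in\mathcal{A}} e(x,a)\, \lvert \pi(a|x) - \pi_b(a|x) \rvert \leq \epsilon$. The base case is immediate, since the loop starts from $\pi(\cdot|x) = \pi_b(\cdot|x)$, so the sum is $0$.

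For the inductive step, I would analyze a single elementary transfer of probability mass $m$ from $a^-$ to $a^+$. Let $\mathcal{A}^-$ denote the set of actions that have been selected as $a^-$ at some previous iteration, and $\mathcal{A}^+$ the analogous set for $a^+$. The key structural claim is that $\mathcal{A}^- \cap \mathcal{A}^+ = \emptyset$ throughout the run; consequently, every action in $\mathcal{A}^-$ satisfies $\pi(a|x) \leq \pi_b(a|x)$ (it has only ever lost mass), and every action in $\mathcal{A}^+$ satisfies $\pi(a|x) \geq \pi_b(a|x)$. Under this disjointness, the transfer increases the left-hand side of the constraint by exactly $m \bigl(e(x,a^-) + e(x,a^+)\bigr)$ because both absolute values grow. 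The algorithm explicitly caps $m$ by the remaining budget divided by $e(x,a^-) + e(x,a^+)$, and also terminates when $a^- = a^*$, so the invariant is preserved.

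The main obstacle will be justifying the disjointness of $\mathcal{A}^-$ and $\mathcal{A}^+$. My argument would proceed as follows: the sequence of $a^-$ values has non-decreasing $Q^{(i)}_{\widehat{M}}(x,\cdot)$ since $a^-$ is always chosen as the remaining positive-mass action of lowest $Q$-value, and mass is only ever added to actions of $Q$-value strictly greater than the current $a^-$. Now the $a^+$ selection rule $a^+ = \argmax_a [Q^{(i)}_{\widehat{M}}(x,a) - Q^{(i)}_{\widehat{M}}(x,a^-)] / e(x,a)$ yields a non-positive ratio for any action whose $Q$-value is at most $Q^{(i)}_{\widehat{M}}(x,a^-)$ and a strictly positive ratio for any action strictly above; whenever the loop has not yet terminated via $a^- = a^*$, some action has strictly higher $Q$, so $a^+$ is chosen above $a^-$ in $Q$-value and in particular lies outside $\mathcal{A}^-$ (with ties broken in favour of actions not yet in $\mathcal{A}^-$).

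Once the disjointness claim is settled, the remainder is a routine bookkeeping argument: sum the per-step increments and observe that the explicit truncation ensures that the budget $\epsilon$ is reached with at most equality and never strictly exceeded, which is exactly the $(\pi_b,e,\epsilon)$-constrained condition of Definition \ref{def:constraint}.
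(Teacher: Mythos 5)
Your bookkeeping framework (loop invariant, budget $E$, per-transfer increment of at most $m\bigl(e(x,a^-)+e(x,a^+)\bigr)$) is the right one and is essentially what the paper's one-line proof ("below the budget by design") amounts to. However, the structural claim you identify as the main obstacle --- that $\mathcal{A}^-\cap\mathcal{A}^+=\emptyset$, so that every transfer increases the constraint sum by \emph{exactly} $m\bigl(e(x,a^-)+e(x,a^+)\bigr)$ --- is false. Your argument only rules out one direction: an action already drained as $a^-$ is never later chosen as $a^+$ (true, since $a^+$ has strictly larger $Q$-value than the current $a^-$ and the $a^-$'s are processed in increasing $Q$-order). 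It does not rule out the converse, and the converse does happen: the for-loop in Pseudo-code~\ref{alg:SSPIBB-sort-Q} visits \emph{every} action as $a^-$ in increasing $Q$-order, so an action $a_2$ that received mass while $a_1$ was being drained (because $a_2$ had the best ratio $\bigl(Q(x,a_2)-Q(x,a_1)\bigr)/e(x,a_2)$, e.g.\ due to a small $e(x,a_2)$) will itself become $a^-$ at the next iteration and can then be drained toward a yet higher-$Q$ action $a_3$. In that run $a_2\in\mathcal{A}^-\cap\mathcal{A}^+$, the quantity $\lvert\pi(a_2|x)-\pi_b(a_2|x)\rvert$ first grows and then shrinks, and the total budget deducted strictly exceeds the final value of $\sum_a e(x,a)\lvert\pi(a|x)-\pi_b(a|x)\rvert$.

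The theorem survives because exactness is never needed: by the triangle inequality, a transfer of mass $m^+$ changes each of $\lvert\pi(a^-|x)-\pi_b(a^-|x)\rvert$ and $\lvert\pi(a^+|x)-\pi_b(a^+|x)\rvert$ by at most $m^+$ regardless of sign history, so the constraint sum increases by \emph{at most} $m^+\bigl(e(x,a^-)+e(x,a^+)\bigr)$, which is precisely the amount deducted from $E$. Together with the caps $m^-\leq E/\bigl(2e(x,a^-)\bigr)$ and $m^+\leq E/\bigl(2e(x,a^+)\bigr)$, which guarantee $m^+\bigl(e(x,a^-)+e(x,a^+)\bigr)\leq E$ and hence $E\geq 0$ throughout, the invariant $\sum_a e(x,a)\lvert\pi(a|x)-\pi_b(a|x)\rvert\leq\epsilon-E\leq\epsilon$ holds at every step. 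Dropping the disjointness claim and replacing ``exactly'' by ``at most'' turns your proposal into a correct proof.
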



\begin{remark}
The $\argmax$ operator in the result returned by Pseudo-code \ref{alg:SSPIBB-sort-Q} is a convergence condition. Indeed, the approximate algorithm does not guarantee that the current iteration policy search space includes the previous iteration policy, which can cause divergence: the algorithm may indefinitely cycle between two or more policies. To ensure convergence, we update $\pi^{(i)}$ with $\pi^{(i+1)}$ only if there is a local policy improvement, \textit{i.e.} when $\mathbb{E}_{a \sim \pi^{(i+1)}(\cdot|x)}[Q^{(i)}_{\widehat{M}}(x,a)] \geq \mathbb{E}_{a \sim \pi^{(i)}(\cdot|x) }[Q^{(i)}_{\widehat{M}}(x,a)]$.
\end{remark}

Both implementation of the Soft-SPIBB strategy comply to the requirements of Theorem \ref{th:1-step} if only one policy iteration is performed. In Section \ref{sec:RandomMDPs}, we empirically evaluate the 1-iteration versions, which are denoted by the `1-step' suffix.

\subsubsection{Complexity Analysis:}
\label{subsec:complexity}
We study the computational complexity of Exact-Soft-SPIBB and Approx-Soft-SPIBB. The error bounds computation and the policy evaluation step are common to both algorithms, and have a complexity of $\mathcal{O}(|\mathcal{D}|)$ and $\mathcal{O}(|\mathcal{X}|^3|\mathcal{A}|^3)$ respectively. The part that differs between them is the policy improvement.

Exact-Soft-SPIBB solves the LP with the simplex algorithm, which, as recalled in Section~\ref{subsec:linearprog}, is in practice polynomial in the dimensions of the program being solved. In our case, the number of constraints is $3|\mathcal{A}| + 1$.  

\begin{theorem}
Approx-Soft-SPIBB policy improvement has a complexity of $\mathcal{O}(|\mathcal{X}||\mathcal{A}|^2)$.
\label{thm:complexity}
\end{theorem}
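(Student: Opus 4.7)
The plan is to bound separately the cost of the per-state optimization problem solved by the Approx-Soft-SPIBB policy improvement step, and then multiply by the number of states. Since the policy is updated state-by-state and the updates are independent across states, it suffices to show that the local procedure described in Pseudo-code \ref{alg:SSPIBB-sort-Q} runs in $\mathcal{O}(|\mathcal{A}|^2)$ time for each fixed $x\in\mathcal{X}$.

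First I would argue that the outer loop, which successively transfers probability mass from the current lowest-value action $a^-$ to a higher-value action $a^+$, terminates after at most $|\mathcal{A}|$ iterations. The stopping criteria are that the error budget $\epsilon$ is exhausted, or $a^- = a^*$. Between these, each iteration either depletes the remaining budget (terminating immediately) or empties $a^-$ of its probability mass, after which $a^-$ is no longer a candidate in subsequent iterations. Hence at most one iteration per action, giving the $|\mathcal{A}|$ bound on the number of iterations. Next I would bound the per-iteration work: identifying the next $a^-$ (the lowest-$Q$ action still carrying mass), computing the ratios $(Q^{(i)}_{\widehat{M}}(x,a)-Q^{(i)}_{\widehat{M}}(x,a^-))/e(x,a)$ over the remaining actions to determine $a^+$, and computing the transferable mass $m^-$ (the minimum of the residual mass on $a^-$ and the amount allowed by the remaining error budget divided by $e(x,a^-)+e(x,a^+)$) each cost $\mathcal{O}(|\mathcal{A}|)$. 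Combining, each state costs $\mathcal{O}(|\mathcal{A}|)\times\mathcal{O}(|\mathcal{A}|)=\mathcal{O}(|\mathcal{A}|^2)$; pre-sorting actions by $Q$-value adds at most $\mathcal{O}(|\mathcal{A}|\log|\mathcal{A}|)$, which is absorbed.

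Finally, summing over all $x\in\mathcal{X}$ yields the announced $\mathcal{O}(|\mathcal{X}||\mathcal{A}|^2)$ bound. The convergence safeguard mentioned in the remark below Theorem \ref{thm:PI} (updating $\pi^{(i+1)}$ only when it locally improves upon $\pi^{(i)}$) only requires computing two inner products of size $|\mathcal{A}|$, so it does not affect the asymptotic complexity.

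The main subtlety, and the place where the argument could fail if one is not careful, is justifying that the outer loop really runs at most $|\mathcal{A}|$ times. Because the ratio criterion for $a^+$ depends on the current $a^-$, the greedy rule is not simply a sort followed by a sweep, and one could worry about oscillations or revisitations. The key observation that rules this out is that after each transfer step $a^-$ carries zero probability mass and the algorithm henceforth examines only actions with strictly higher $Q$-value; combined with the monotone consumption of the error budget, this ensures strict progress on a potential that is bounded by $|\mathcal{A}|$.
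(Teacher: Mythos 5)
Your proposal is correct and follows essentially the same argument as the paper: a loop over the $|\mathcal{X}|$ states, within each state a single sort of the actions by $Q$-value followed by at most $|\mathcal{A}|$ iterations each performing an $\mathcal{O}(|\mathcal{A}|)$ $\argmax$, yielding $\mathcal{O}(|\mathcal{X}||\mathcal{A}|^2)$ overall. Your extra care about termination of the outer loop is not strictly needed here, since Pseudo-code \ref{alg:SSPIBB-sort-Q} is by construction a single pass over the sorted action list, but it does no harm.
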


\subsubsection{Model-free Soft-SPIBB:}
\label{subsec:Soft-SPIBB-DQN}
The Soft-SPIBB fixed point may be found in a model-free manner by fitting the $Q$-function
to the target $y^{(i+1)}$ on the transition samples $\mathcal{D} = \langle x_j,a_j,r_j,x'_j\rangle_{j\in\llbracket 1,N\rrbracket}$:
\begin{align}
    y^{(i+1)}_j &= r_j + \gamma \sum_{a'\in\mathcal{A}} \pi^{(i+1)}(a'|x_j') Q^{(i)}(x_j',a'), \label{eq:fittedQ}
\end{align}
where $\pi^{(i+1)}$ is obtained either exactly or approximately with the policy improvement steps described in Section \ref{subsec:algos}. Then, the policy evaluation consists in fitting $Q^{(i+1)}(x,a)$ to the set of $y^{(i+1)}_j$ values computed using the samples from $\mathcal{D}$.

\begin{theorem}
    Considering an MDP with exact counts, the model-based policy iteration of (Exact or Approx)-Soft-SPIBB is identical to the model-free policy iteration of (resp. Exact or Approx)-Soft-SPIBB.
    \label{thm:modelfree}
\end{theorem}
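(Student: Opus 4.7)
The plan is to show the two procedures produce identical Q-functions and identical policies at every iteration, by induction on the iteration index $i$. The base case is trivial since both algorithms initialize with $\pi^{(0)}=\pi_b$ and (by the same argument applied to the policy evaluation of $\pi_b$) with the same Q-function. The inductive step splits into two pieces: showing that the policy improvement produces the same $\pi^{(i+1)}$, and showing that the policy evaluation produces the same $Q^{(i+1)}$.

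For the policy improvement, I would simply observe that both Exact-Soft-SPIBB and Approx-Soft-SPIBB define $\pi^{(i+1)}(\cdot|x)$ as a function purely of the Q-values $\{Q^{(i)}(x,a)\}_{a\in\mathcal{A}}$, the baseline $\pi_b(\cdot|x)$, the error function $e$, and the hyper-parameter $\epsilon$. None of these depend on whether we computed Q in a model-based or a model-free way, so by the inductive hypothesis $Q^{(i)}_{\widehat{M}} = Q^{(i)}$ the two variants produce exactly the same $\pi^{(i+1)}$.

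The crux is the policy evaluation step. The model-based update is the Bellman operator applied to $\widehat{M}$,
\begin{equation*}
    Q^{(i+1)}_{\widehat{M}}(x,a) = \widehat{R}(x,a) + \gamma \sum_{x'}\widehat{P}(x'|x,a)\sum_{a'}\pi^{(i+1)}(a'|x')\,Q^{(i)}_{\widehat{M}}(x',a').
\end{equation*}
In the model-free version, with exact counts and a tabular representation, fitting $Q^{(i+1)}(x,a)$ to the targets in Equation \eqref{eq:fittedQ} amounts to averaging the targets over all transitions in $\mathcal{D}$ with $x_j=x,\, a_j=a$. Substituting the definition of $y^{(i+1)}_j$ and using the MLE formulas for $\widehat{R}$ and $\widehat{P}$,
\begin{equation*}
    \tfrac{1}{N_{\mathcal{D}}(x,a)}\!\!\sum_{j:\,x_j=x,a_j=a}\!\!\left(r_j + \gamma\sum_{a'}\pi^{(i+1)}(a'|x_j')Q^{(i)}(x_j',a')\right) = \widehat{R}(x,a) + \gamma\sum_{x'}\widehat{P}(x'|x,a)\sum_{a'}\pi^{(i+1)}(a'|x')Q^{(i)}(x',a'),
\end{equation*}
which, by the inductive hypothesis on $Q^{(i)}$ and the equality of the policies established above, coincides with $Q^{(i+1)}_{\widehat{M}}(x,a)$.

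The main potential obstacle is that one might worry the model-free fit could in principle be something other than the empirical mean of targets. The key is to make the statement rigorous by noting that in the tabular setting with exact counts the sample-average is the unique minimizer of the squared-error fit, so it coincides with the MLE Bellman backup; once this identification is made, the induction goes through without further difficulty.
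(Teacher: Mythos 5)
Your proposal is correct and follows essentially the same route as the paper's proof: the core step in both is expanding $\widehat{R}$ and $\widehat{P}$ as empirical averages over the transitions with $x_j=x$, $a_j=a$ and matching the model-based Bellman backup to the mean of the fitted-Q targets from Equation \eqref{eq:fittedQ}. You merely make explicit the induction and the observation that the policy improvement step depends only on the (shared) $Q^{(i)}$, which the paper leaves as an assumption in its one-step argument.
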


The model-free versions are less computationally efficient than their respective model-based versions, but are particularly useful since it makes function approximation easily applicable. In our infinite MDP experiment, we consider Approx-Soft-SPIBB-DQN as the DQN algorithm fitted to the model-free Approx-Soft-SPIBB targets. The Exact-Soft-SPIBB counterpart is not considered for tractability reasons. We recall that the computation of the policy improvement step relies on the estimates of an error function $e_P$, which may, for instance, be indirectly inferred from pseudo-counts $\widetilde{N}_{\mathcal{D}}(x,a)$ \citep{Bellemare2016,Fox2018,Burda2019}.

\section{Soft-SPIBB Empirical Evaluation}\label{sec:emp}
This section intends to empirically validate the advances granted by Soft-SPIBB. We perform the study on two domains: on randomly generated finite MDPs, where the Soft-SPIBB algorithms are compared to several Batch RL competitors: basic RL, High Confidence Policy Improvement~\citep[][HCPI]{thomas2015safe}, Reward-Adjusted MDPs \citep[][RaMDP]{Petrik2016}, Robust MDPs \citep{Iyengar2005,Nilim2005}, and to Soft-SPIBB natural parents: $\Pi_b$-SPIBB and $\Pi_{\leq b}$-SPIBB~\citep{Laroche2019}; and on a helicopter navigation task requiring function approximation, where Soft-SPIBB-DQN is compared to basic DQN, RaMDP-DQN, and SPIBB-DQN. All the benchmark algorithms had their hyper-parameters optimized beforehand. Their descriptions and the results of the hyper-parameter search is available in the appendix, Section \ref{sup:benchmarkalgos} for finite MDPs algorithms and Section \ref{sup:helicopter_training} for DQN-based algorithms.

In order to assess the safety of an algorithm, we run a large number of times the same experiment with a different random seed. Since the environments and the baselines are stochastic, every experiment generates a different dataset, and the algorithms are evaluated on their mean performance over the experiments, and on their conditional value at risk performance (CVaR), sometimes also called the expected shortfall: $X\%$-CVaR corresponds to the mean performance over the $X\%$ worst runs.

\subsection{Random MDPs}
\label{sec:RandomMDPs}
In the random MDPs experiment, the MDP and the baseline are themselves randomly generated too. The full experimental process is formalized in Pseudo-code \ref{alg:garnet_benchmark} found in the appendix, Section \ref{sup:randommdps}. Because every run involves different MDP and baseline, there is the requirement for a normalized performance. This is further defined as $\overline{\rho}$:
\begin{align} \label{eq:normalized_perf}
    \overline{\rho}(\pi,M^*)=\cfrac{\rho(\pi,M^*) - \rho(\pi_b,M^*)}{\rho(\pi^*,M^*)-\rho(\pi_b,M^*)}.
\end{align}

In order to demonstrate that Soft-SPIBB algorithms are safely improving the baselines on most MDPs in practice, we use a random generator of MDPs. All the details may be found in the appendix, Section \ref{sup:MDPgen}. The number of states is set to $|\mathcal{X}|=50$, the number of actions to $|\mathcal{A}|=4$ and the connectivity of the transition function to 4, \emph{i.e.}, for a given state-action pair $(x,a)$, its transition function $P(x'|x,a)$ is non-zero on four states $x'$ only. The reward function is 0 everywhere except when entering the goal state, which is terminal and where the reward is equal to 1. The goal is chosen in such a way that the optimal value function is minimal. 

\paragraph{Random baseline:} For a randomly generated MDP $M$, baselines are generated according to a predefined level of performance $\eta\in \left\{0.1,0.2,0.3,0.4,0.5,0.6,0.7,0.8,0.9\right\}$: $\rho(\pi_b,M) = \eta\rho(\pi^*,M) + (1-\eta)\rho(\tilde{\pi},M)$, where $\pi^*$ and $\tilde{\pi}$ are respectively the optimal and the uniform policies. The generation of the baseline consists in three steps: optimization, where the optimal $Q$-value is computed; softening, where a softmax policy is generated; and randomization, where the probability mass is randomly displaced in the baseline. The process is formally and extensively detailed in the appendix, Section~\ref{sup:baselinegen}.

\paragraph{Dataset generation:} Given a fixed size number of trajectories, a dataset is generated on the following modification of the original MDPs: addition of another goal state (reward is set to 1). Since the original goal state was selected so as to be the hardest to reach, the new one, which is selected uniformly, is necessarily a better goal.

\begin{wrapfigure}{r}{0.43\columnwidth}
    \centering
    \includegraphics[trim = 5pt 15pt 5pt 5pt, clip, width=0.43\columnwidth]{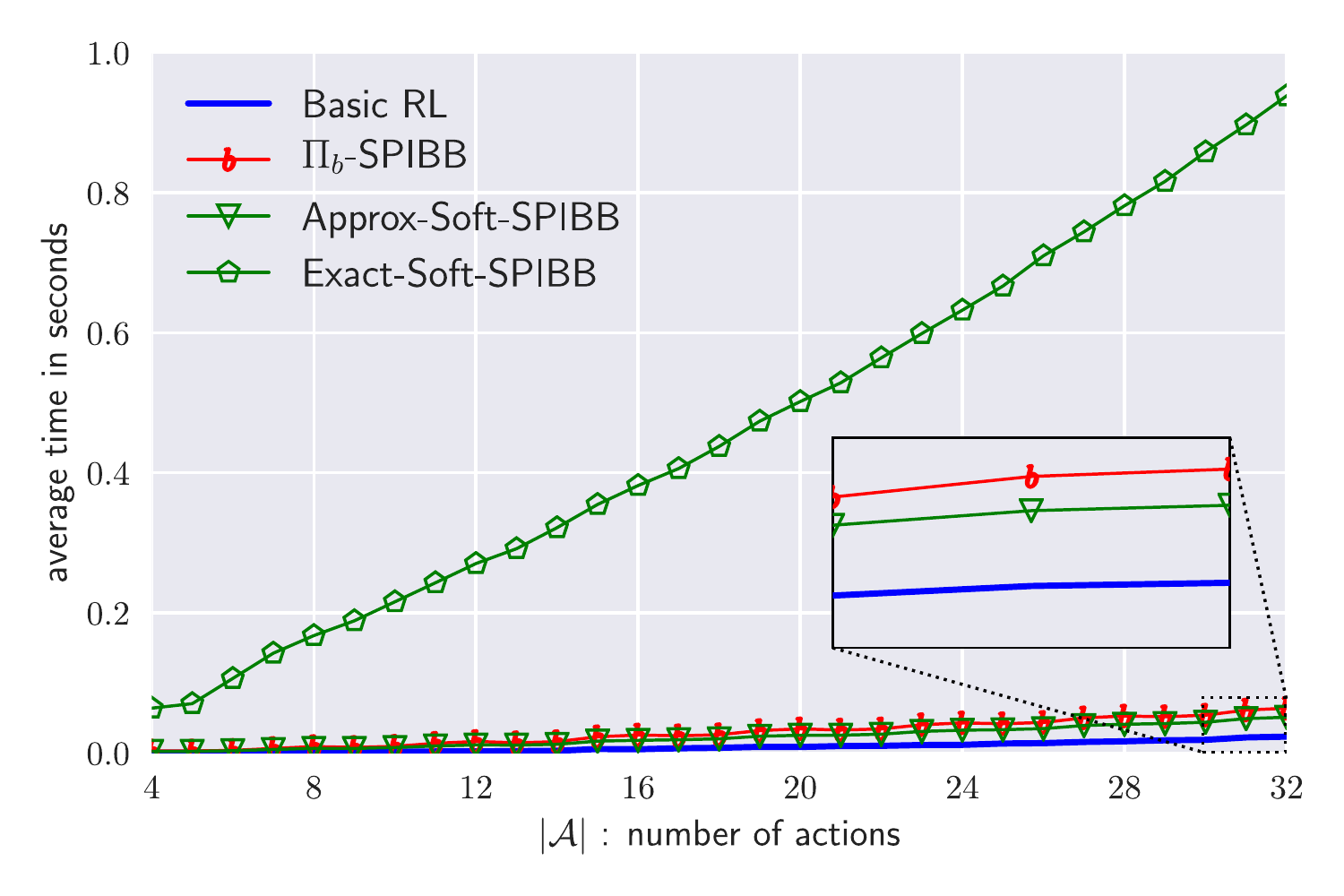}
    \caption{Average time to convergence.}
    \label{subfig:complarge}
    \vspace{-10pt}
\end{wrapfigure}
\paragraph{Complexity empirical analysis:} In Figure \ref{subfig:complarge}, we show an empirical confirmation of the complexity results on the gridworld task. Exact-Soft-SPIBB has a linear dependency in the number of actions. We also notice that Approx-Soft-SPIBB runs much faster: even faster than $\Pi_b$-SPIBB, and 2 times slower than basic RL. Note that the policy improvement step is by design exactly linearly dependent on the number of states $|\mathcal{X}|$. This is the reason why we do not report experiments on the dependency on $|\mathcal{X}|$. We do not report complexity empirical analysis of the other competitors because we do not pretend to have optimal implementations of them, and the purpose of this analysis is to show that Approx-Soft-SPIBB solves the tractability issues of Exact-Soft-SPIBB. In Theory, Robust MDPs and HCPI are more complex by at least an order of magnitude.

\begin{figure}[b!]
	\centering
	\subfloat[Mean: $\eta = 0.9$, $\epsilon = 2$]{
		\includegraphics[trim = 5pt 5pt 5pt 5pt, clip, width=0.5\columnwidth]{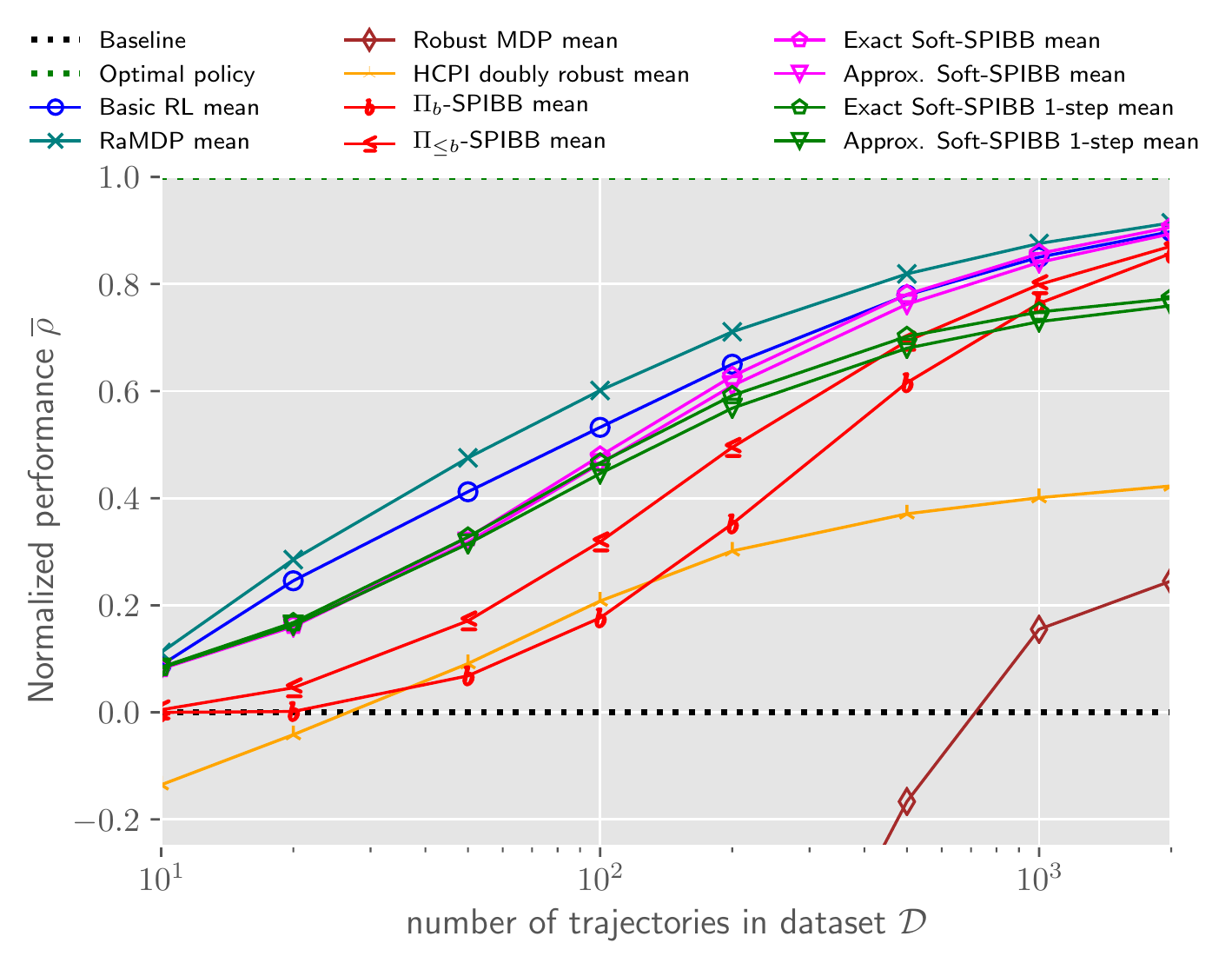}
        \label{subfig:benchmark_mean_eta=0.9_eps=2}
	}
	\subfloat[1\%-CVAR: $\eta = 0.9$, $\epsilon = 2$]{
		\includegraphics[trim = 5pt 5pt 5pt 5pt, clip, width=0.5\columnwidth]{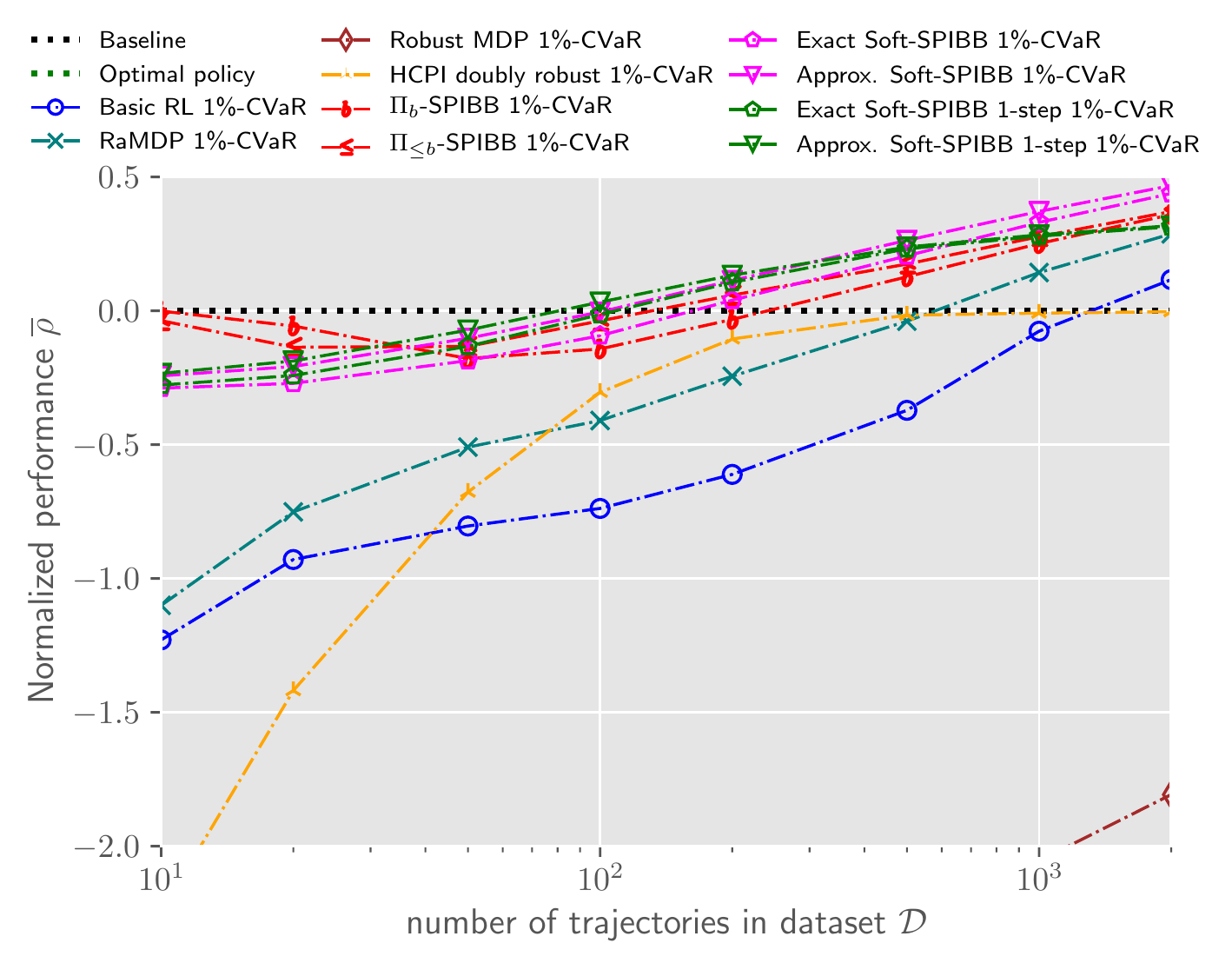}
        \label{subfig:benchmark_percent_eta=0.9_eps=2}
	}
    \caption{Benchmark on Random MDPs domain: mean and 1\%-CVAR performances for a hard scenario ($\eta = 0.9$) and Soft-SPIBB with $\epsilon = 2$}
    \label{fig:soft_benchmark}
\end{figure}
\paragraph{Benchmark results:} Figures~\ref{subfig:benchmark_mean_eta=0.9_eps=2} and \ref{subfig:benchmark_percent_eta=0.9_eps=2} respectively report the mean and 1\%-CVAR performances with a strong baseline ($\eta = 0.9$). Robust MDPs and HCPI perform poorly and are not further discussed. Basic RL and RaMDP win the benchmark in mean, but fail to do it safely, contrary to Soft-SPIBB and SPIBB algorithms. Exact-Soft-SPIBB is slightly better than Approx-Soft-SPIBB in mean, but also slightly worse in safety. Still in comparison to Approx-Soft-SPIBB, Exact-Soft-SPIBB's performance does not justify the computational complexity increase and will not be further discussed. Approx-Soft-SPIBB demonstrates a significant improvement over SPIBB methods, both in mean and in safety. Finally, the comparison of Approx-Soft-SPIBB with Approx-Soft-SPIBB 1-step shows that the safety is not improved in practice, and that the asymptotic optimality is compromised when the dataset becomes larger. 

\begin{figure}[t!]
	\centering
	\subfloat[0.1\%-CVaR: RaMDP]{
		\includegraphics[trim = 10pt 40pt 45pt 60pt, clip, width=0.5\columnwidth]{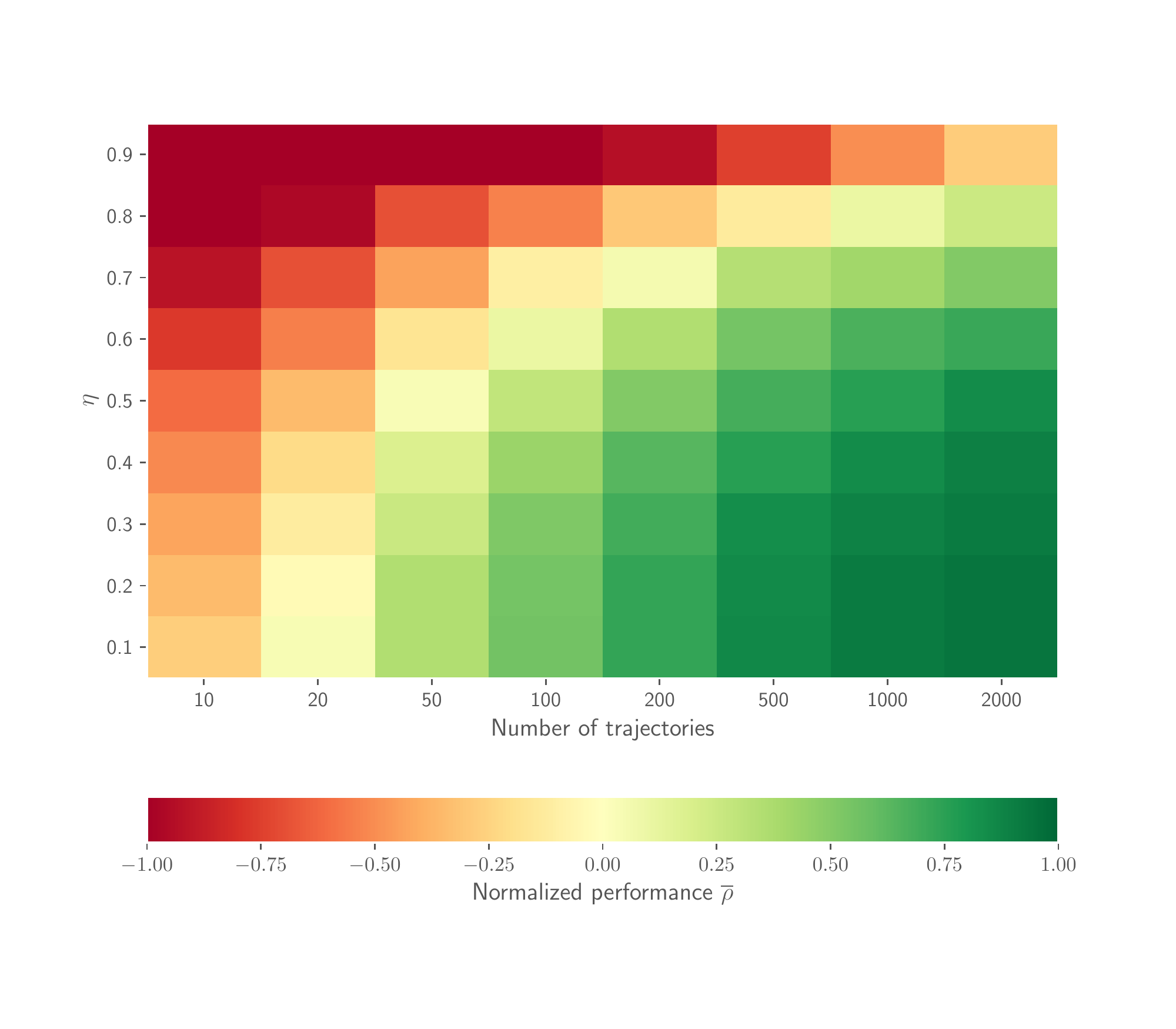}
        \label{subfig:influence_eta_ramdp}
	}
	\subfloat[0.1\%-CVaR: Approx-Soft-SPIBB, $\epsilon=2$]{
		\includegraphics[trim = 10pt 40pt 45pt 60pt, clip, width=0.5\columnwidth]{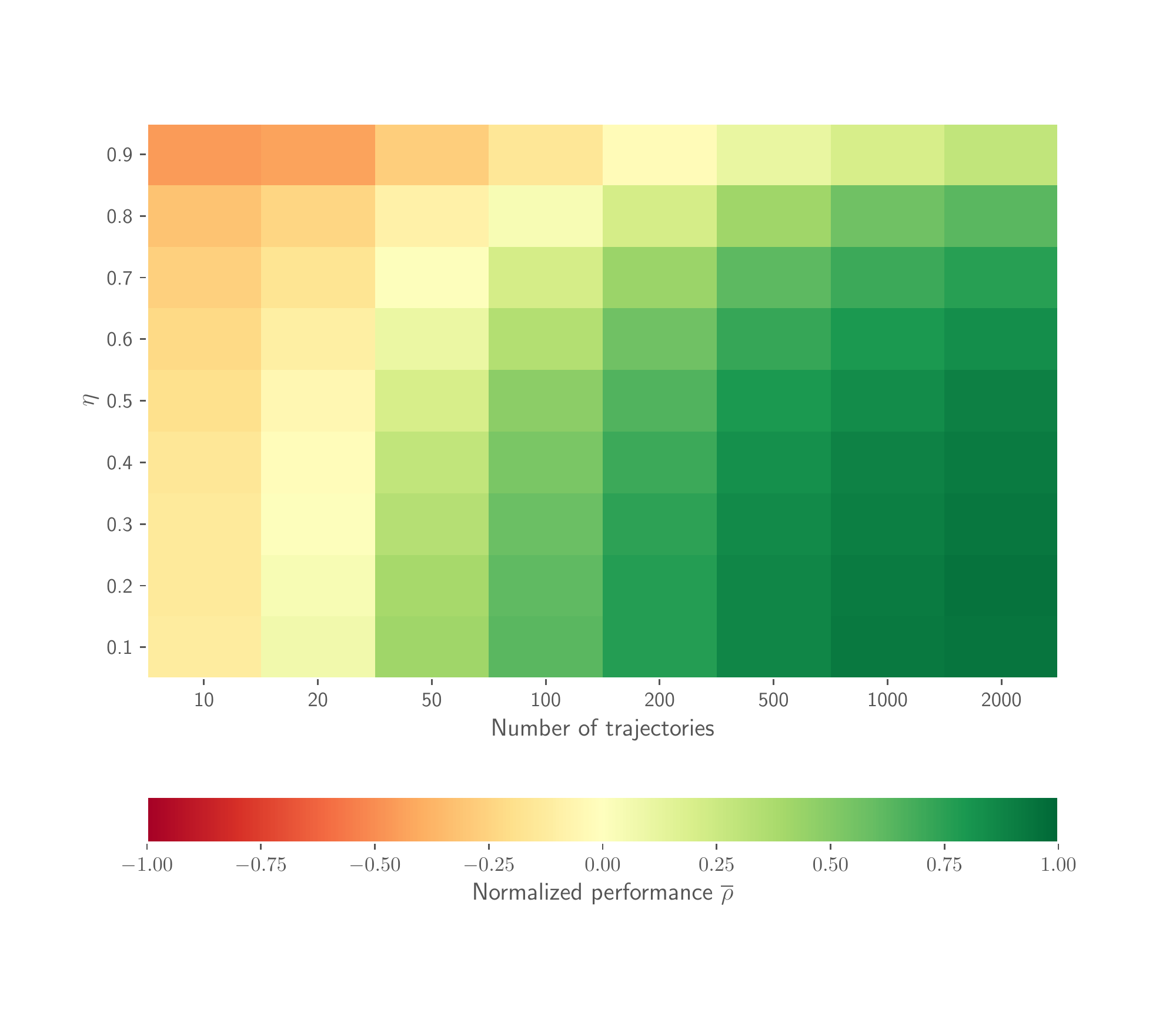}
        \label{subfig:influence_eta_approx_soft_spibb}
	} 
    \caption{Influence of $\eta$ on Random MDPs domain: 0.1\%-CVaR heatmaps as a function of $\eta$}
    \label{fig:influence_eta}
\end{figure}


\begin{figure}[b!]
	\centering
	\subfloat[1\%-CVaR: RaMDP, $\eta = 0.9$]{
		\includegraphics[trim = 10pt 40pt 45pt 60pt, clip, width=0.5\columnwidth]{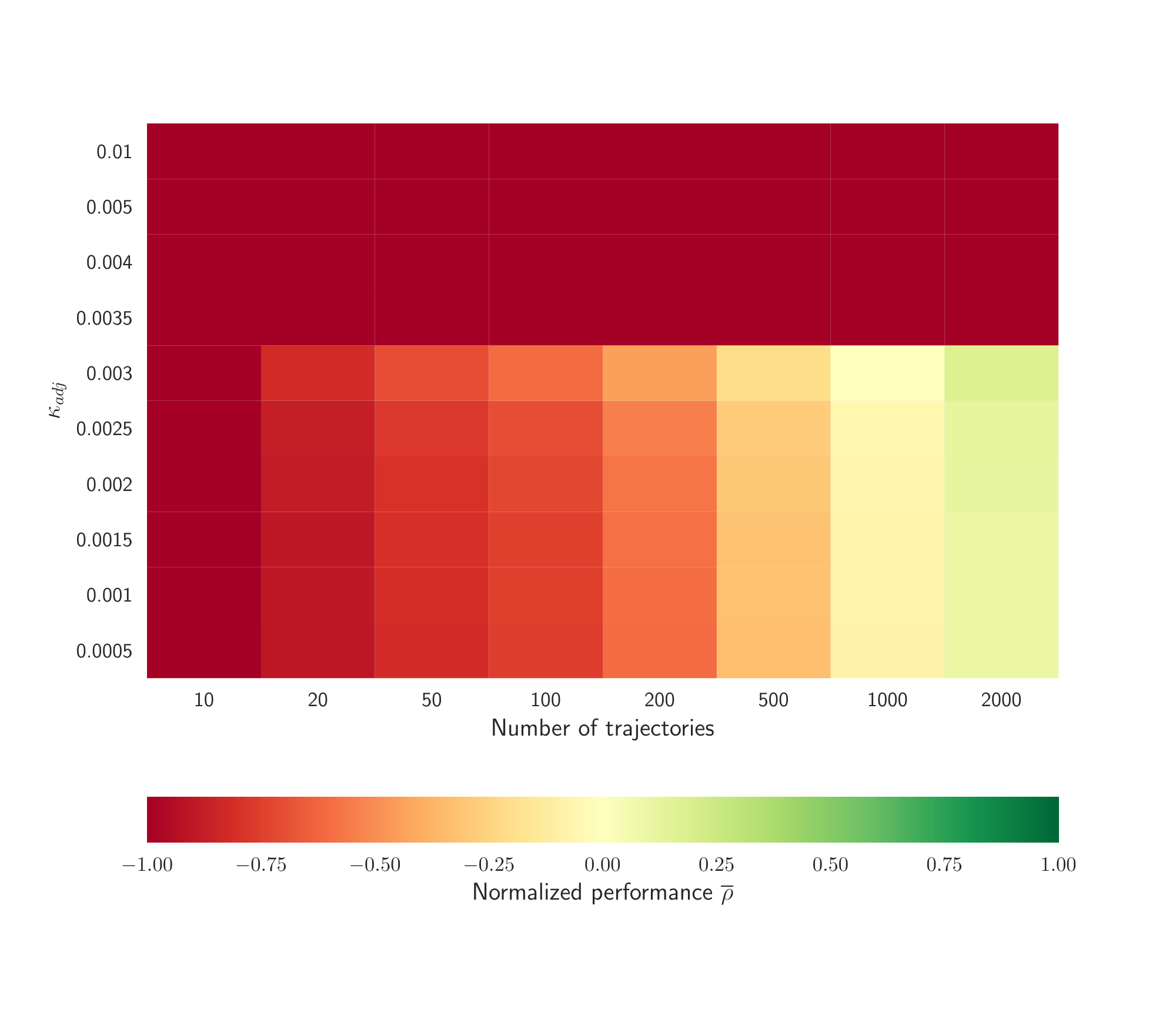}
        \label{subfig:influence_eps_exact_soft_spibb}
	}
	\subfloat[1\%-CVaR: Approx-Soft-SPIBB, $\eta = 0.9$]{
		\includegraphics[trim = 10pt 40pt 45pt 60pt, clip, width=0.5\columnwidth]{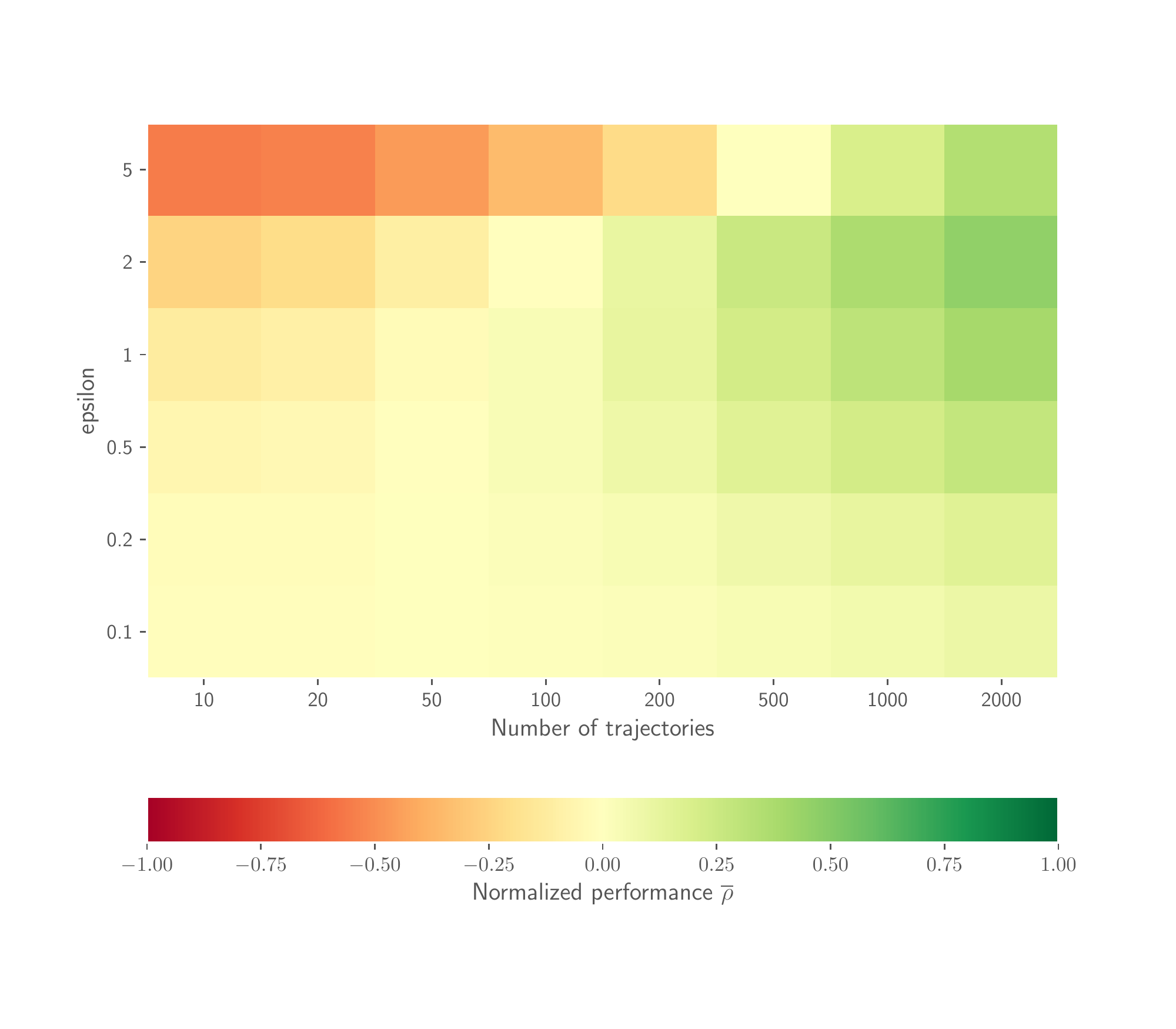}
        \label{subfig:influence_eps_approx_soft_spibb}
	}
    \caption{Sensitivity to hyperparameter on Random MDPs: 1\%-CVaR heatmaps for $\eta=0.9$}
    \label{fig:influence_eps}
\end{figure}


\paragraph{Sensitivity to the baseline:} We continue the analysis with a heatmap representation as a function of the strength of the baseline: Figures~\ref{subfig:influence_eta_ramdp} and \ref{subfig:influence_eta_approx_soft_spibb} display heatmaps of the 0.1\%-CVaR performance for RaMDP and Approx-Soft-SPIBB ($\epsilon = 2$) respectively. The colour of a cell indicates the improvement over the baseline normalized with respect to the optimal performance:
red, yellow, and green respectively mean below, equal to,
and above baseline performance. We observe that RaMDP is unsafe for strong baselines (high $\eta$ values) and small datasets, while Soft-SPIBB methods become slightly unsafe only with $\eta = 0.9$ and less than 20 trajectories, but are safe everywhere else.

\paragraph{Sensitivity to hyper-parameters:} We carry on with 1\%-CVaR performance heatmaps as a function of the hyper-parameters for RaMDP (Figure~\ref{subfig:influence_eps_exact_soft_spibb}) and Approx-Soft-SPIBB (Figure~\ref{subfig:influence_eps_approx_soft_spibb}) in the hardest scenario ($\eta = 0.9$). The choice of 1\%-CVaR instead of 0.1\%-CVaR is justified by the fact that the 0.1\%-CVaR RaMDP heatmap is almost completely red, which would not allow us to notice the interesting thresholding behaviour: when $\kappa_{adj}\geq 0.0035$, RaMDP becomes over-conservative to the point of not trying to reach the goal anymore. In contrast, Approx-Soft-SPIBB behaves more smoothly with respect to its hyper-parameter, its optimal value being in interval [0.5,2], depending on the safety/performance trade-off one wants to achieve. In the appendix, Section~\ref{sup:random_MDPs_full_results}, the interested reader may find the corresponding heatmaps for all Soft-SPIBB algorithms for mean and 1\%-CVaR performances. In particular, we may observe that, despite not having as strong theoretical guarantees as their 1-step versions, the Soft-SPIBB algorithms demonstrate similar CVaR performances.


\subsection{Helicopter domain}
\label{sec:spibb-dqn-exp}

To assess our algorithms on tasks with more complex state spaces, making the use of function approximation inevitable, we apply them to a helicopter navigation task (Figure \subref*{fig:helicopter}). The helicopter's start point is randomly picked in the teal region, its initial velocity is random as well. The agent can choose to apply or not a fixed amount of thrust forward and backward in the two dimensions, resulting in 9 actions total. An episode ends when the agent reaches the boundary of the blue box or has a speed larger than some maximal value. In the first case, it receives a reward based on its position with respect to the top right corner of the box (the exact reward value is chromatically indicated in the figure). In the second, it gets a reward of $-1$. The dynamics of the helicopter obey Newton's second law with an additive centered Gaussian noise applied to its position and velocity. We refer the reader to the appendix, Section \ref{sup:dummy-parameters} for the detailed specifications. We generated a baseline by training online a DQN \citep{Mnih2015} and applying a softmax on the learnt $Q$-network. During training, a discount factor of $0.9$ is used, but the reported results show the undiscounted return obtained by the agent. 

\begin{figure}[t]
	\subfloat[Helicopter benchmark with $|\mathcal{D}|=10,000$]{
        \includegraphics[trim = 5pt 5pt 15pt 5pt, clip, width=0.92\columnwidth,left]{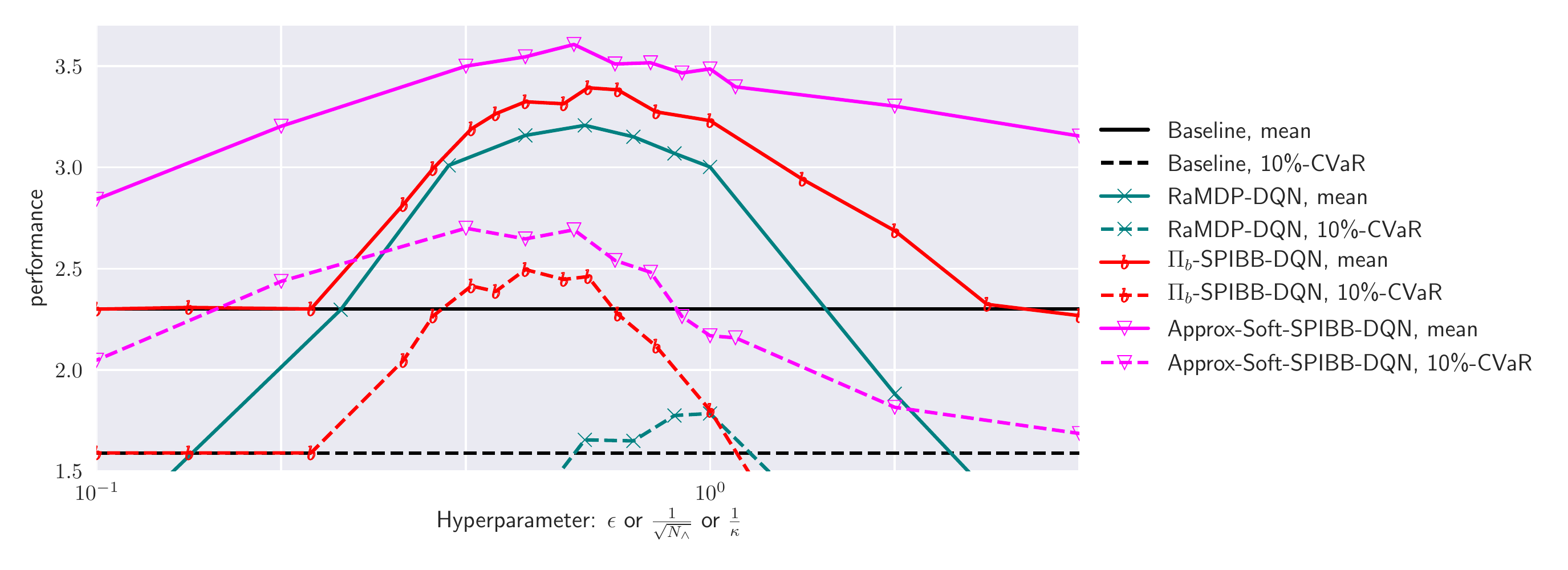}
        \label{fig:soft_spibb_dqn_10k}
	}\\
    \centering
	\subfloat[Helicopter benchmark with $|\mathcal{D}|=3,000$]{
        \includegraphics[trim = 5pt 5pt 240pt 5pt, clip, width=0.65\columnwidth]{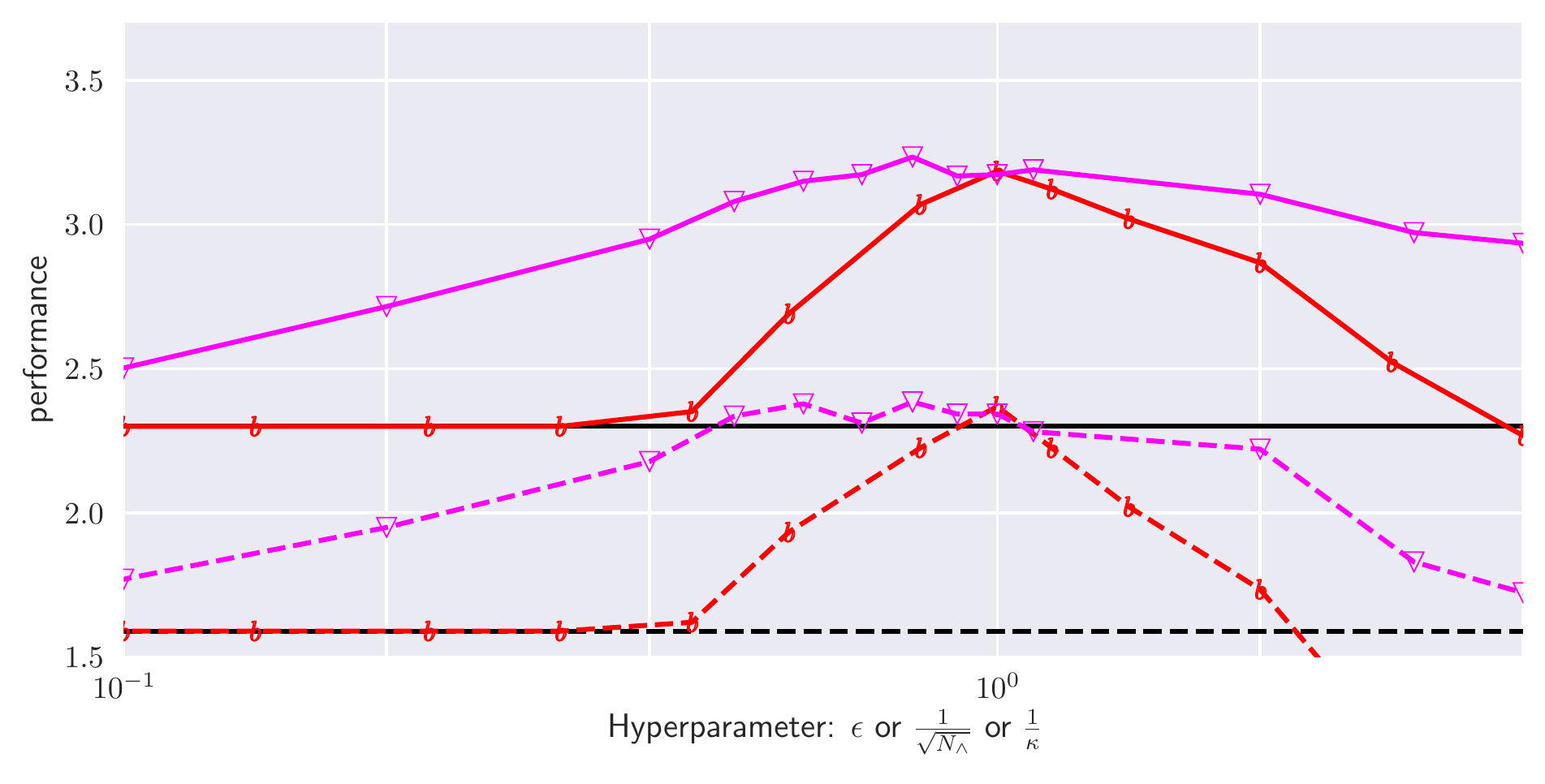}
        \label{fig:soft_spibb_dqn_3k}
	}
	\subfloat[Helicopter domain]{
		\includegraphics[trim = 5pt 5pt 5pt 5pt, clip, width=0.35\columnwidth]{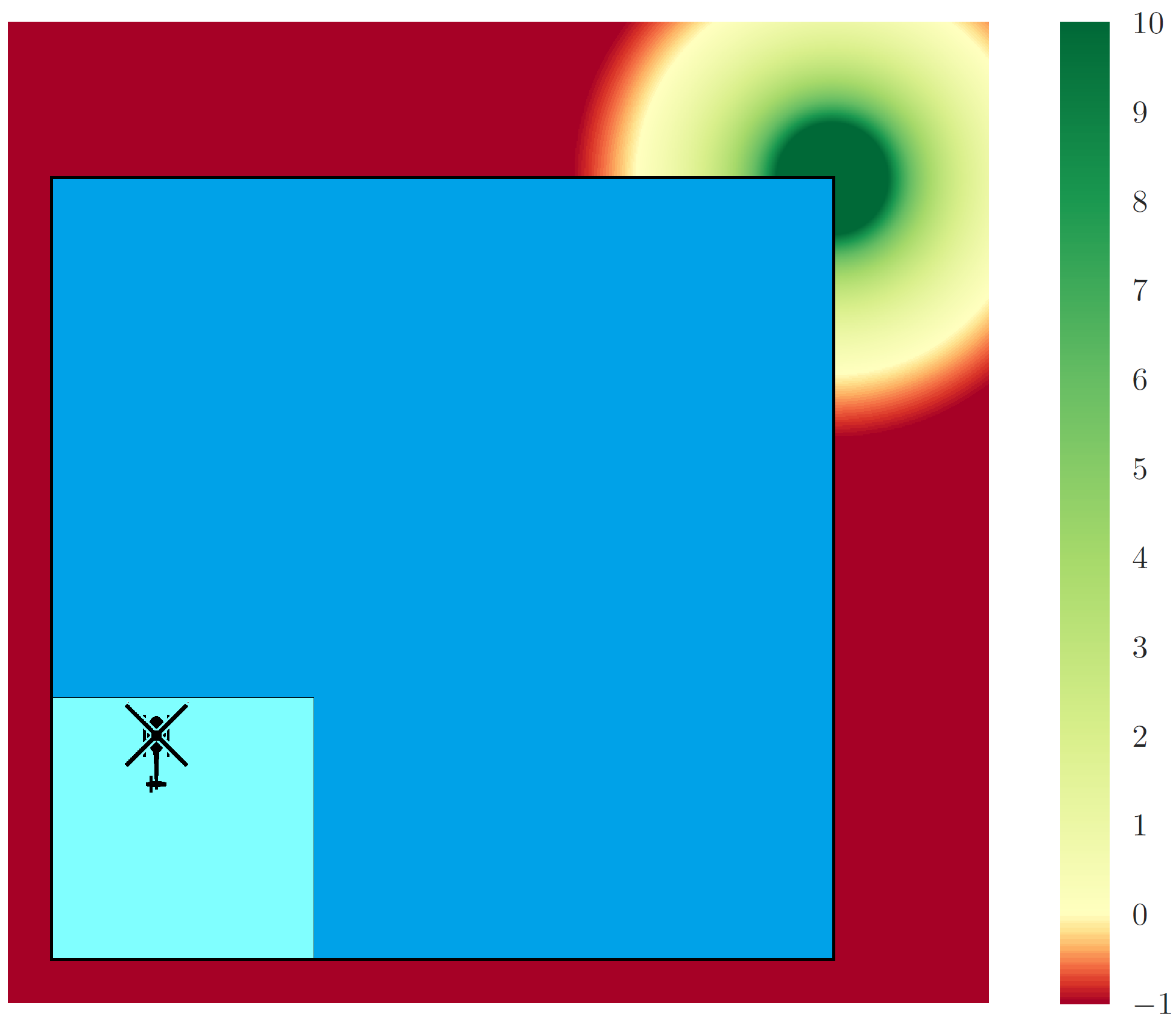}
        \label{fig:helicopter}
	}
    \caption{Helicopter: mean and 10\%-CVaR as a function of the hyper-parameter value}
    \label{fig:Helicopter_experiments}
\end{figure}

The experiments consist in 300 training runs (necessary to obtain reasonable estimates of algorithms' safety, the full training procedure is described in the appendix, Section \ref{sup:helicopter_training}) of RaMDP-DQN, SPIBB-DQN and Approx-Soft-SPIBB-DQN, for different values of their hyper-parameters (resp. $\kappa$, $N_{\wedge}$ and $\epsilon$). We note that for $\kappa = 0$, $N_{\wedge} = 0$ or $\epsilon = +\infty$, those three algorithms become standard DQN, and that for $N_{\wedge} = \infty$ or $\epsilon = 0$, the SPIBB and Soft-SPIBB algorithms produce a copy of the baseline. The three algorithms rely on some estimates of the state-action counts. In this work, we used a pseudo-count estimate heuristic based on Euclidean distance, also detailed in Section \ref{sup:helicopter_training}. For scalability, we may consider several pseudo-count methodologies from the literature~\cite{Bellemare2016,Fox2018}. This is left for future work.

The results of our evaluation can be found in Figure \ref{fig:Helicopter_experiments}, where we plot the mean and 10\%-CVaR performances of the different algorithms for two sizes of datasets (more results may be found in the appendix, Section \ref{sup:helicopter_more_results}). In order to provide meaningful comparisons, the abscissa represents the different hyper-parameters transformed to account for their dimensional homogeneity (except for a scaling factor). Both Approx-Soft-SPIBB-DQN and SPIBB-DQN outperform RaMDP-DQN by a large margin on the datasets of size 10,000. On the smaller datasets, RaMDP-DQN performs very poorly and does not even appear on the graph. For the same reason, vanilla DQN (mean: 0.22 and 10\%-CVaR: -1 with $|\mathcal{D}|=10,000$) does not appear on any of the graphs. The two SPIBB algorithms significantly outperform the baseline both in mean and 10\%-CVaR. At their best hyper-parameter value, their 10\%-CVaR is actually better than the mean performance of the baseline. Approx-Soft-SPIBB-DQN performs better than SPIBB-DQN both in mean and 10\%-CVaR performances. Finally, it is less sensitive than SPIBB-DQN with respect to their respective hyperparameters, and demonstrates a better stability over different dataset sizes. That stability is a useful property as it reduces the requirement for hyper-parameter optimization, which is crucial for Batch RL.

\section{Conclusion}
\label{sec:conclusion}
We study the problem of safe policy improvement in a Batch RL setting. Building on the SPIBB methodology, we relax the constraints of the policy search to propose a family of algorithms coined Soft-SPIBB. We provide proofs of safety and of computational efficiency for an algorithm called Approx-Soft-SPIBB based on the search of an approximate solution that does not compromise the safety guarantees. We support the theoretical work with an extensive empirical analysis where Approx-Soft-SPIBB shines as the best compromise average performance vs. safety. We further develop Soft-SPIBB in a model-free manner which helps its application to function approximation. Despite the lack of theoretical safety guarantees with function approximation, we observe in our experiments where the function approximation is modelled as a neural network, that Soft-SPIBB allows safe policy improvement in practice and significantly outperforms the competing algorithms both in safety and in performance.

\newpage


\newpage
\appendix



\section{Proofs}
\label{sup:proofs}

\setcounter{theorem}{0}
\setcounter{constraint}{0}
\setcounter{remark}{0}
\setcounter{definition}{0}
\setcounter{corollary}{0}
\setcounter{lemma}{0}
\setcounter{assumption}{0}

\subsection{Preliminaries}

We start by recalling the various definitions used in the proofs. 

\begin{definition}
    A policy $\pi$ is said to be a policy improvement over a baseline policy $\pi_b$ in an MDP $M = \langle \mathcal{X}, \mathcal{A}, P, R, \gamma\rangle$ if the following inequality holds in every state $x\in\mathcal{X}$:
    \begin{align}
       V^{\pi}_{M}(x) \geq V^{\pi_b}_{M}(x)
    \end{align}
\end{definition}

\begin{definition}
        A policy $\pi$ is said to be $\pi_b$-advantageous in an MDP $M = \langle \mathcal{X}, \mathcal{A}, P, R, \gamma\rangle$ if the following inequality holds in every state $x\in\mathcal{X}$:
        \begin{align}
           \sum_{a \in \mathcal{A}} A^{\pi_b}_{M}(x,a)\pi(a|x) \geq 0
        \end{align}
        where $A^{\pi_b}_{M}(x,a) = Q^{\pi_b}_{M}(x,a) - V^{\pi_b}_{M}(x)$ quantifies the advantage (or disadvantage) of taking action $a$ in state $x$ instead of following policy $\pi_b$.
\end{definition}
    
    \begin{definition}
        A policy $\pi$ is said to be $(\pi_b, e,\epsilon)$-constrained for baseline policy $\pi_b$, error function $e$, and a hyper-parameter $\epsilon$ if, for all states $x\in\mathcal{X}$, the following inequality holds:
        \begin{equation*}
            \sum_{a \in \mathcal{A}} e(x,a) \big\lvert\pi(a|x)-\pi_b(a|x)\big\rvert \leq \epsilon.
        \end{equation*}
    \end{definition}
    
\subsection{Error Bounds} 
\label{sup:error_bounds}

The difference between an estimated parameter and the true one can be bounded using concentration bounds (or equivalently, Hoeffding's inequality) applied to the state-action counts in $\mathcal{D}$~\citep{Petrik2016,Laroche2019}. Specifically, the following inequalities hold with probability at least $1 - \delta$ for any state-action pair $(x,a) \in \mathcal{X} \times \mathcal{A}$:
\begin{align}
    ||P^*(\cdot|x,a)-\widehat{P}(\cdot|x,a)||_1 &\leq e_P(x,a)  \label{eq:bound-P-2} \\
    |R^*(x,a)-\widehat{R}(x,a)| &\leq e_P(x,a)R_{max} \label{eq:bound-R} \\
    \big\lvert Q^{\pi_b}_{M^*}(x,a) - Q^{\pi_b}_{\widehat{M}}(x,a) \big\rvert &\leq e_Q(x,a) V_{max} \label{eq:bound-Q-2}
\end{align}
where: 
\begin{align}
    e_P(x,a) &:= \sqrt{\cfrac{2}{N_{\mathcal{D}}(x,a)}\log\cfrac{2|\mathcal{X}||\mathcal{A}|2^{|\mathcal{X}|}}{\delta}} \label{eq:error-function-P-2} \\
    e_Q(x,a) &:= \sqrt{\cfrac{2}{N_{\mathcal{D}}(x,a)}\log\cfrac{2|\mathcal{X}||\mathcal{A}|}{\delta}}. \label{eq:error-function-Q-2}
\end{align}

\begin{proof}
    The three inequalities can be proved similarly to \citep[Proposition 9]{Petrik2016}. We only detail the proof for \eqref{eq:bound-Q-2}: for any $(x,a) \in \mathcal{X} \times \mathcal{A}$, and from the two-sided Hoeffding's inequality, 
    \begin{align*}
        &\mathbb{P} \left( \big\lvert Q^{\pi_b}_{M^*}(x,a) - Q^{\pi_b}_{\widehat{M}}(x,a) \big\rvert > e_Q(x,a) V_{max} \right) \\
        &= \mathbb{P} \left( \frac{\big\lvert Q^{\pi_b}_{M^*}(x,a) - Q^{\pi_b}_{\widehat{M}}(x,a) \big\rvert}{2 V_{max}} > \sqrt{\frac{1}{2 N_\mathcal{D}(x,a)} \log \frac{2 |\mathcal{X}||\mathcal{A}|}{\delta}} \right) \\
        &\leq 2 \exp \left( -2 N_\mathcal{D}(x,a) \frac{1}{2 N_\mathcal{D}(x,a)} \log \frac{2 |\mathcal{X}||\mathcal{A}|}{\delta}  \right) \\
        &\leq \frac{\delta}{| \mathcal{X} | | \mathcal{A} | }
    \end{align*}
    
    By summing all $|\mathcal{X}| |\mathcal{A}|$ state-action pairs error probabilities lower than $\frac{\delta}{| \mathcal{X} | | \mathcal{A} | }$, we obtain \eqref{eq:bound-Q-2}.
    
\end{proof}

\subsection{Proof of Theorem \ref{th:1-step}}

In this section, we prove Theorem \ref{th:1-step}:
\begin{theorem}
    Any $(\pi_b, e_Q,\epsilon)$-constrained policy $\pi$ that is $\pi_b$-advantageous in $\widehat{M}$ satisfies the following inequality in every state $x$ with probability at least $1 - \delta$:
    \begin{align}
        V^\pi_{M^*}(x)-V^{\pi_b}_{M^*}(x) &\geq -\cfrac{\epsilon V_{max}}{1-\gamma}.
    \end{align}
\end{theorem}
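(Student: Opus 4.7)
The plan is to apply the performance difference lemma of Kakade and Langford in the true MDP $M^*$, and then to control the resulting local advantage term by splitting it into an $\widehat{M}$ part (handled by the $\pi_b$-advantageous hypothesis) and a model-error part (handled by the $(\pi_b,e_Q,\epsilon)$-constrained hypothesis together with the concentration bound \eqref{eq:bound-Q-2}).

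Concretely, I would first write, for every state $x$,
\begin{equation*}
V^\pi_{M^*}(x) - V^{\pi_b}_{M^*}(x) \;=\; \frac{1}{1-\gamma}\,\mathbb{E}_{y\sim d^\pi_{M^*}(\cdot|x)}\!\Bigl[\textstyle\sum_{a} A^{\pi_b}_{M^*}(y,a)\,\pi(a|y)\Bigr],
\end{equation*}
so it suffices to prove the pointwise lower bound $\sum_{a} A^{\pi_b}_{M^*}(y,a)\,\pi(a|y) \geq -\epsilon V_{max}$ for every $y$. Using $\sum_a \pi_b(a|y) A^{\pi_b}_{M^*}(y,a)=0$, this quantity rewrites as $\sum_a Q^{\pi_b}_{M^*}(y,a)\bigl[\pi(a|y)-\pi_b(a|y)\bigr]$.

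Next, I would add and subtract $Q^{\pi_b}_{\widehat{M}}(y,a)$ inside the sum. The $\widehat{M}$ piece, $\sum_a Q^{\pi_b}_{\widehat{M}}(y,a)\bigl[\pi(a|y)-\pi_b(a|y)\bigr] = \sum_a A^{\pi_b}_{\widehat{M}}(y,a)\,\pi(a|y)$, is $\geq 0$ by the $\pi_b$-advantageous hypothesis in $\widehat{M}$. The remaining error piece is lower bounded by
\begin{equation*}
-\sum_{a}\bigl\lvert Q^{\pi_b}_{M^*}(y,a)-Q^{\pi_b}_{\widehat{M}}(y,a)\bigr\rvert\,\bigl\lvert \pi(a|y)-\pi_b(a|y)\bigr\rvert \;\geq\; -V_{max}\sum_{a} e_Q(y,a)\bigl\lvert \pi(a|y)-\pi_b(a|y)\bigr\rvert \;\geq\; -\epsilon V_{max},
\end{equation*}
where the first inequality uses the concentration bound \eqref{eq:bound-Q-2} (which holds simultaneously at all $(x,a)$ with probability at least $1-\delta$) and the second uses the $(\pi_b,e_Q,\epsilon)$-constrained assumption. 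Combining the two pieces yields $\sum_{a} A^{\pi_b}_{M^*}(y,a)\,\pi(a|y) \geq -\epsilon V_{max}$, and inserting this into the performance difference identity gives the claimed bound $-\epsilon V_{max}/(1-\gamma)$.

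The entire argument is essentially a direct combination of three ingredients (the performance difference lemma, the $\ell_1$-style constraint, and Hoeffding-based error bounds), so there is no real obstacle; the only subtle point is to remember that the $\pi_b$-advantageous hypothesis is stated in $\widehat{M}$ while the performance difference we care about is in $M^*$, which is exactly what forces the add/subtract step and makes the $e_Q$ error function (rather than $e_P$) the natural one to use.
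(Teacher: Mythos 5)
Your proof is correct and follows essentially the same route as the paper's: the paper's Proposition~\ref{prop:Q-dpi-d} with $\pi_1=\pi$, $\pi_2=\pi_b$ is exactly the performance difference identity you invoke, and the paper then performs the same add-and-subtract of $Q^{\pi_b}_{\widehat{M}}$, bounding the error piece via the $(\pi_b,e_Q,\epsilon)$-constraint and H\"older (with the $\lVert d^{\pi}_{M^*}\rVert_1\le \tfrac{1}{1-\gamma}$ factor) and using the $\pi_b$-advantageous hypothesis in $\widehat{M}$ for the remaining piece. The only difference is cosmetic: you work pointwise in the normalized-expectation form of the lemma, while the paper writes the same argument in matrix notation.
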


    \begin{proof}
        We start from Proposition \ref{prop:Q-dpi-d} with $M=M^*$, $\pi_1=\pi$, and $\pi_2=\pi_b$:
    \begin{align}
        V^{\pi}_{M^*}-V^{\pi_b}_{M^*} &= Q^{\pi_b}_{M^*}\left(\pi-\pi_b\right)d_{M^*}^{\pi} \\
        &= \left(Q^{\pi_b}_{M^*} - Q^{\pi_b}_{\widehat{M}} + Q^{\pi_b}_{\widehat{M}}\right)\left(\pi-\pi_b\right)d_{M^*}^{\pi} \\
        &= \left(Q^{\pi_b}_{M^*} - Q^{\pi_b}_{\widehat{M}}\right)\left(\pi-\pi_b\right)d_{M^*}^{\pi}
            + Q^{\pi_b}_{\widehat{M}}\left(\pi-\pi_b\right)d_{M^*}^{\pi}. \label{eq:30}
    \end{align}
    
    The first term is bounded by $\frac{\epsilon V_{max}}{1-\gamma}$ thanks to inequality \ref{eq:error-function-Q-2}, the $(\pi_b, e_Q,\epsilon)$-constrained policy property and Holder's inequality:
    \begin{align}
        \Big\lVert&\left(Q^{\pi_b}_{M^*} - Q^{\pi_b}_{\widehat{M}}\right)\left(\pi-\pi_b\right)d_{M^*}^{\pi}\Big\rVert_\infty \leq \Big\lVert\left(Q^{\pi_b}_{M^*} - Q^{\pi_b}_{\widehat{M}}\right)\left(\pi-\pi_b\right)\Big\rVert_\infty\big\lVert d_{M^*}^{\pi}\big\rVert_1 \\
        &\leq \max_{x} \sum_{a} (Q^{\pi_b}_{M^*}(x,a) - Q^{\pi_b}_{\widehat{M}}(x,a)) \left(\pi(a|x) -\pi_b(a|x)\right) \cfrac{1}{1-\gamma} \\
        &\leq \cfrac{\epsilon V_{max}}{1-\gamma}.
    \end{align}

    Let us now prove that the second term of Equation \ref{eq:30} is positive. It is the product of vector $Q^{\pi_b}_{\widehat{M}}\left(\pi-\pi_b\right)$ with matrix $d_{M^*}^{\pi}$. All the terms of $d_{M^*}^{\pi}$ are positive so it suffices to show that each element of the vector is positive. We have for each $x\in\mathcal{X}$:
    \begin{align}
        \left(Q^{\pi_b}_{\widehat{M}}\left(\pi-\pi_b\right)\right)(x) &= \sum_a Q^{\pi_b}_{\widehat{M}}(x,a)\left(\pi(a|x)-\pi_b(a|x)\right) \\
        &= \sum_a Q^{\pi_b}_{\widehat{M}}(x,a)\pi(a|x) - V^{\pi_b}_{\widehat{M}}(x) \\
        &= \sum_{a \in \mathcal{A}} A^{\pi_b}_{M}(x,a)\pi(a|x) \geq 0
    \end{align}
    where the last inequality comes from $\pi$ being $\pi_b$-advantageous in $\widehat{M}$. This concludes the proof.
    \qed\end{proof}

\subsection{Proof of Theorem \ref{thm:SSPIBB}}

In this section, we prove Theorem \ref{thm:SSPIBB} from the main text. Let us start with:
    \begin{proposition}
    Let $M = \langle \mathcal{X}, \mathcal{A}, P, R, \gamma\rangle$ be an MDP, and $\pi_1$ and $\pi_2$ be two policies defined on this MDP. We have:
    \begin{align}
        V^{\pi_1}_{M}-V^{\pi_2}_{M} = Q^{\pi_1}_{M}\left(\pi_1-\pi_2\right)d_M^{\pi_2} = Q^{\pi_2}_{M}\left(\pi_1-\pi_2\right)d_M^{\pi_1},
    \end{align}
    where $d_M^{\pi}$ is a matrix that assigns to element $d_M^{\pi}(x'|x)$ the expectation of the discounted sum of visits to state $x'$, starting from $x$, and when following policy $\pi$ in MDP $M$ (it is actually only dependent on the transition dynamics $P$). Formally, it is written as:
    \begin{equation}
        d_M^{\pi}(x'|x) = \sum_{t=0}^{\infty} \gamma^t \mathbb{P}(X_t=x'|X_t\sim P\pi X_{t-1}, X_0=x)
    \end{equation}
    \label{prop:Q-dpi-d}
    \end{proposition}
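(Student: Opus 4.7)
My plan is to prove this via the standard performance difference identity (essentially Kakade and Langford's lemma), by setting up a one-step decomposition and unrolling it.

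First, I would fix a starting state $x$ and write $V^{\pi_1}_M(x) = \sum_a \pi_1(a|x) Q^{\pi_1}_M(x,a)$, then add and subtract $\sum_a \pi_2(a|x) Q^{\pi_1}_M(x,a)$ to get
\begin{equation*}
V^{\pi_1}_M(x) - V^{\pi_2}_M(x) = \sum_a (\pi_1(a|x) - \pi_2(a|x))\, Q^{\pi_1}_M(x,a) + \sum_a \pi_2(a|x)\,(Q^{\pi_1}_M(x,a) - Q^{\pi_2}_M(x,a)).
\end{equation*}
Using the Bellman equations $Q^{\pi_i}_M(x,a) = R(x,a) + \gamma \sum_{x'} P(x'|x,a) V^{\pi_i}_M(x')$, the rewards cancel and the second sum becomes $\gamma \sum_{x'} \big(\sum_a \pi_2(a|x) P(x'|x,a)\big)\,(V^{\pi_1}_M(x') - V^{\pi_2}_M(x'))$.

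Setting $\Delta(x) \defeq V^{\pi_1}_M(x) - V^{\pi_2}_M(x)$ and $\delta(x) \defeq \sum_a (\pi_1(a|x) - \pi_2(a|x))\, Q^{\pi_1}_M(x,a)$, the above reduces to the linear recursion $\Delta = \delta + \gamma P^{\pi_2} \Delta$, where $P^{\pi_2}$ is the Markov kernel induced on states by following $\pi_2$. Iterating (or equivalently applying $(I - \gamma P^{\pi_2})^{-1}$, which is well-defined since $\gamma < 1$) yields $\Delta(x) = \sum_{t \geq 0} \gamma^t (P^{\pi_2})^t \delta\,(x) = \sum_{x'} d^{\pi_2}_M(x'|x) \delta(x')$, which is exactly the first equality $V^{\pi_1}_M - V^{\pi_2}_M = Q^{\pi_1}_M(\pi_1 - \pi_2)\, d^{\pi_2}_M$ in the matrix-vector notation of the proposition.

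For the second equality, I would simply swap the roles of $\pi_1$ and $\pi_2$: the same argument applied to $V^{\pi_2}_M - V^{\pi_1}_M$ gives $V^{\pi_2}_M - V^{\pi_1}_M = Q^{\pi_2}_M(\pi_2 - \pi_1)\, d^{\pi_1}_M$, and negating both sides produces the claimed identity. I do not expect any real obstacle here: the only care needed is in justifying the interchange of the infinite sum over $t$ with the sum over states when unrolling the recursion, which follows from absolute convergence ($\|\delta\|_\infty$ is bounded by $2 V_{\max}$ and the geometric series in $\gamma$ converges), and in matching the column/row conventions of the matrix notation $Q^{\pi}(\pi_1 - \pi_2) d^{\pi'}_M$ used in the statement with the pointwise formulas above.
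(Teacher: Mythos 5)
Your proof is correct and follows essentially the same route as the paper: the add-and-subtract of $Q^{\pi_1}_M\pi_2$ yields the recursion $\Delta = Q^{\pi_1}_M(\pi_1-\pi_2) + \gamma P\pi_2\,\Delta$, which is then solved by inverting $\mathbb{I}-\gamma P\pi_2$ (your Neumann-series unrolling is equivalent), and the second identity follows by symmetry exactly as you describe. No gaps.
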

    \begin{proof}
    \begin{align}
        V^{\pi_1}_{M}-V^{\pi_2}_{M} &= (R+\gamma V^{\pi_1}_{M} P)\pi_1 - (R+\gamma V^{\pi_2}_{M} P)\pi_2 \\
        &= (R+\gamma V^{\pi_1}_{M} P)(\pi_1-\pi_2) + \gamma (V^{\pi_1}_{M} - V^{\pi_2}_{M}) P\pi_2 \\
        &= Q^{\pi_1}_{M}(\pi_1-\pi_2) + \gamma (V^{\pi_1}_{M} - V^{\pi_2}_{M}) P\pi_2 \\
        \left(V^{\pi_1}_{M}-V^{\pi_2}_{M}\right)\left(\mathbb{I} - \gamma P\pi_2\right)&= Q^{\pi_1}_{M}(\pi_1-\pi_2) \\
        V^{\pi_1}_{M}-V^{\pi_2}_{M}&= Q^{\pi_1}_{M}(\pi_1-\pi_2)\left(\mathbb{I} - \gamma P\pi_2\right)^{-1}
    \end{align}
    The last line is obtained because $P\pi_2$ is a stochastic matrix and $0\leq\gamma<1$. As a result, $\mathbb{I} - \gamma P\pi_2$ is invertible and is indeed the state distribution matrix $d_{M}^{\pi_2}$. The other equality: $V^{\pi_1}_{M}-V^{\pi_2}_{M}= Q^{\pi_2}_{M}(\pi_1-\pi_2)d_{M}^{\pi_1}$ is obtained in a symmetrical way.
    \qed\end{proof}

    \begin{proposition}
    Let $M_1$ and $M_2$ be two MDPs: $M_1 = \langle \mathcal{X}, \mathcal{A}, P_1, R_1, \gamma\rangle$ and $M_2 = \langle \mathcal{X}, \mathcal{A}, P_2, R_2, \gamma\rangle$. Let $\pi_1, \pi_2$ be two policies defined on these MDPs, then:
    \begin{align}
        V^{\pi_1}_{M_1}-V^{\pi_2}_{M_1} = V^{\pi_1}_{M_2}-V^{\pi_2}_{M_2} &+ Q^{\pi}_{M_2}\left(\pi_1-\pi_2\right)\left(d_{M_1}^{\pi_2} - d_{M_2}^{\pi_2}\right) \\ 
        &+ \left(Q^{\pi}_{M_1} - Q^{\pi}_{M_2}\right)\left(\pi_1-\pi_2\right)d_{M_1}^{\pi_2}
    \end{align}
    \label{prop:dV*-d^V}
    \end{proposition}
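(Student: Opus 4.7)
The identity has the flavour of a telescoping decomposition: the left-hand side is a value gap computed in $M_1$, while the right-hand side measures the same gap in $M_2$ plus two correction terms that track the discrepancies between the two MDPs in the $Q$-function and in the discounted occupancy, respectively. The natural plan is therefore to reduce both sides to Proposition~\ref{prop:Q-dpi-d} (with the convention, dictated by the shape of the correction terms, that the unlabeled $Q^{\pi}$ stands for $Q^{\pi_1}$) and then insert a judicious add/subtract pivot.

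Concretely, I would first apply Proposition~\ref{prop:Q-dpi-d} in each MDP to obtain
\begin{align*}
V^{\pi_1}_{M_1}-V^{\pi_2}_{M_1} &= Q^{\pi_1}_{M_1}\left(\pi_1-\pi_2\right) d_{M_1}^{\pi_2}, \\
V^{\pi_1}_{M_2}-V^{\pi_2}_{M_2} &= Q^{\pi_1}_{M_2}\left(\pi_1-\pi_2\right) d_{M_2}^{\pi_2}.
\end{align*}
Subtracting the second line from the first turns the desired identity into a statement about the difference of the two right-hand sides. The key algebraic step is then to add and subtract the mixed term $Q^{\pi_1}_{M_2}\left(\pi_1-\pi_2\right) d_{M_1}^{\pi_2}$, which splits the difference into the two pieces predicted by the statement: one where the $Q$-function changes but the occupancy is fixed at $d_{M_1}^{\pi_2}$, giving $(Q^{\pi_1}_{M_1}-Q^{\pi_1}_{M_2})(\pi_1-\pi_2) d_{M_1}^{\pi_2}$, and one where the $Q$-function is fixed at $Q^{\pi_1}_{M_2}$ but the occupancy changes, giving $Q^{\pi_1}_{M_2}(\pi_1-\pi_2)(d_{M_1}^{\pi_2}-d_{M_2}^{\pi_2})$. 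Rearranging yields exactly the claimed equality.

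There is essentially no obstacle here; the only subtle point is bookkeeping the matrix/vector conventions used implicitly in Proposition~\ref{prop:Q-dpi-d} (policies acting as $|\mathcal X|\times(|\mathcal X||\mathcal A|)$ block-matrices, $d^{\pi}_M$ as the resolvent $(\mathbb I-\gamma P\pi)^{-1}$, and $Q^{\pi}_M$ as a row of state-action values) so that the pivot term is well-typed and can legitimately be added and subtracted on both sides of the $(\pi_1-\pi_2)$ factor. Once the conventions match those used in the proof of Proposition~\ref{prop:Q-dpi-d}, the derivation is a one-line telescoping and the identity follows immediately.
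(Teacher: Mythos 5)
Your proof is correct and is essentially the paper's own argument: both reduce to Proposition~\ref{prop:Q-dpi-d} applied in $M_1$ and $M_2$ and pivot on the mixed term $Q^{\pi_1}_{M_2}(\pi_1-\pi_2)d_{M_1}^{\pi_2}$ (the paper merely performs the two add/subtract steps inside a single chain of equalities rather than subtracting the two identities first). Your reading of the unlabeled $Q^{\pi}$ as $Q^{\pi_1}$ also matches the paper's proof.
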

    \begin{proof}
    From Proposition \ref{prop:Q-dpi-d}:
    \begin{align}
        V^{\pi_1}_{M_1}-V^{\pi_2}_{M_1} &= Q^{\pi_1}_{M_1}\left(\pi_1-\pi_2\right)d_{M_1}^{\pi_2}\label{eq:48}\\
         &= \left(Q^{\pi_1}_{M_2} + Q^{\pi_1}_{M_1} - Q^{\pi_1}_{M_2}\right)\left(\pi_1-\pi_2\right)d_{M_1}^{\pi_2} \\
         &= Q^{\pi_1}_{M_2}\left(\pi_1-\pi_2\right)d_{M_1}^{\pi_2} + \left(Q^{\pi_1}_{M_1} - Q^{\pi_1}_{M_2}\right)\left(\pi_1-\pi_2\right)d_{M_1}^{\pi_2} \\
         &= Q^{\pi_1}_{M_2}\left(\pi_1-\pi_2\right)\left(d_{M_2}^{\pi_2} + d_{M_1}^{\pi_2} - d_{M_2}^{\pi_2}\right) + \left(Q^{\pi_1}_{M_1} - Q^{\pi_1}_{M_2}\right)\left(\pi_1-\pi_2\right)d_{M_1}^{\pi_2} \\
         &= Q^{\pi_1}_{M_2}\left(\pi_1-\pi_2\right)d_{M_2}^{\pi_2} + Q^{\pi_1}_{M_2}\left(\pi_1-\pi_2\right)\left(d_{M_1}^{\pi_2} - d_{M_2}^{\pi_2}\right)\\
         &\quad\quad\quad\quad\quad\quad\quad\quad\quad\;\: + \left(Q^{\pi_1}_{M_1} - Q^{\pi_1}_{M_2}\right)\left(\pi_1-\pi_2\right)d_{M_1}^{\pi_2} \label{eq:37}
    \end{align}
    
    The first term of Equation \ref{eq:37} is transformed by applying Proposition \ref{prop:Q-dpi-d} to $M_2$:
    \begin{equation}
        Q^{\pi_1}_{M_2}\left(\pi_1-\pi_2\right)d_{M_2}^{\pi_2} = V^{\pi_1}_{M_2}-V^{\pi_2}_{M_2}, \label{eq:54}
    \end{equation}
    which concludes the proof.
    \qed\end{proof}

The proof of the theorem requires the following assumption (see the main text for a discussion on it):
    \begin{assumption}\label{ass:kappa_supp}
        There exists a constant $\kappa < \frac{1}{\gamma}$ such that, for all state-action pairs $(x,a)\in\mathcal{X}\times\mathcal{A}$ the following inequality holds:
        \begin{align}
            \sum_{x',a'} e_P(x',a') \pi_b(a'|x') P^*(x'|x,a) \leq \kappa e(x,a)
        \end{align}
    \end{assumption}

That assumption allows to prove Lemma \ref{lem:induction}: 
    \begin{lemma}  Under Assumption \ref{ass:kappa}, any $(\pi_b, e_P,\epsilon)$-constrained policy $\pi$ satisfies the following inequality for every state-action pair $(x,a)$ with probability at least $1 - \delta$:
        \begin{align}
            \big\lvert Q^{\pi}_{M^*}(x,a) - Q^{\pi}_{\widehat{M}}(x,a)\big\rvert \label{eq:lem_1} \leq \left(\cfrac{e_P(x,a)}{1-\kappa\gamma} + \cfrac{\gamma\epsilon}{\left(1-\gamma\right)\left(1-\kappa\gamma\right)}\right)V_{max}. \nonumber
        \end{align}
    \end{lemma}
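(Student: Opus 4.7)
The plan is to establish a recursive inequality for the absolute error $\Delta(x,a) := |Q^{\pi}_{M^*}(x,a) - Q^{\pi}_{\widehat{M}}(x,a)|$ and to identify the claimed bound as a supersolution of the induced monotone operator. First, I would subtract the two Bellman equations for $Q^{\pi}_{M^*}$ and $Q^{\pi}_{\widehat{M}}$, add and subtract $\sum_{x'} P^*(x'|x,a) V^{\pi}_{\widehat{M}}(x')$ inside the transition term, and then invoke the reward bound $|R^*-\widehat{R}| \leq e_P R_{max}$, the $\ell_1$-bound $\|P^*(\cdot|x,a)-\widehat{P}(\cdot|x,a)\|_1 \leq e_P(x,a)$, and the trivial estimate $\|V^{\pi}_{\widehat{M}}\|_\infty \leq V_{max}$. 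Using the identity $R_{max} + \gamma V_{max} = V_{max}$, this yields the clean recursion
\begin{equation*}
\Delta(x,a) \;\leq\; e_P(x,a)\, V_{max} \;+\; \gamma \sum_{x',a'} P^*(x'|x,a)\, \pi(a'|x')\, \Delta(x',a').
\end{equation*}

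Second, the pivotal algebraic step is to control the expectation of $e_P$ under the coupled kernel $P^*\pi$. Writing $\pi = \pi_b + (\pi - \pi_b)$, Assumption~\ref{ass:kappa} absorbs the $\pi_b$-part into $\kappa\, e_P(x,a)$, while the $(\pi_b, e_P, \epsilon)$-constraint bounds the deviation by $\epsilon$, yielding
\begin{equation*}
\sum_{x',a'} P^*(x'|x,a)\, \pi(a'|x')\, e_P(x',a') \;\leq\; \kappa\, e_P(x,a) + \epsilon.
\end{equation*}

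Third, I would introduce the operator $T g(x,a) := \gamma \sum_{x',a'} P^*(x'|x,a) \pi(a'|x') g(x',a')$, which is a $\gamma$-contraction in supremum norm, and rewrite the recursion as $\Delta/V_{max} \leq e_P + T(\Delta/V_{max})$. Guided by the homogeneity of the recursion and Assumption~\ref{ass:kappa} (which ensures $\kappa\gamma < 1$), I would then verify the ansatz
\begin{equation*}
g^*(x,a) \;:=\; \frac{e_P(x,a)}{1-\kappa\gamma} \;+\; \frac{\gamma\,\epsilon}{(1-\gamma)(1-\kappa\gamma)}.
\end{equation*}
Applying the pointwise bound from the second step, together with the trivial identity $T(c) = \gamma c$ for any constant $c$, a short computation shows $e_P + T g^* \leq g^*$. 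A standard monotone-operator argument (iterating the contraction, or comparing $\Delta/V_{max}$ to the Neumann series $\sum_{t\geq 0} T^t e_P$) then gives $\Delta/V_{max} \leq g^*$, which is the claim.

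The main obstacle is the second step: decomposing the $\pi$-average of $e_P$ into a $\pi_b$-piece absorbed by $\kappa$ and a $(\pi-\pi_b)$-piece absorbed by $\epsilon$ in exactly the form required for the fixed-point verification to close cleanly. The constants must align precisely, since any looseness there compounds through the geometric series induced by $T$ and would degrade the factor $1/(1-\kappa\gamma)$, producing a weaker bound than the one claimed.
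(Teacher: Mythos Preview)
Your proposal is correct and follows essentially the same route as the paper: both subtract the two Bellman equations, bound the reward and transition errors by $e_P$, split $\pi=\pi_b+(\pi-\pi_b)$ to invoke Assumption~\ref{ass:kappa} and the $(\pi_b,e_P,\epsilon)$-constraint, and then close via a contraction argument. The only cosmetic difference is that the paper runs an explicit induction on the value-iteration iterates $|Q^*_t-\widehat{Q}_t|$ and passes to the limit, whereas you work directly with the fixed points and verify the claimed bound as a supersolution of the monotone operator $g\mapsto e_P+Tg$; these are equivalent formulations of the same argument.
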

    \begin{proofsketch} Let $Q^*_0 = 0$, $\widehat{Q}_0 = 0$, $Q^*_{t+1} = \mathcal{B}^*(Q^*_{t})$ and $\widehat{Q}_{t+1} = \widehat{\mathcal{B}}(\widehat{Q}_{t})$, where $\mathcal{B}^*$ and $\widehat{\mathcal{B}}$ denote the Bellman operators for policy $\pi$ in the true and estimated MDP. We proceed by induction to prove inequality \ref{eq:lem_1} on $\lvert Q^*_{t}(x,a) - \widehat{Q}_{t}(x,a)\lvert$:
    \begin{align}
        Q^*_{t+1}(x,a) - \widehat{Q}_{t+1}(x,a) &= R^*(x,a) - \widehat{R}(x,a) \\
        &+ \sum_{x',a'}\gamma \left(Q^*_{t}(x',a')-\widehat{Q}_{t}(x',a')\right) \pi(a'|x')P^*(x'|x,a)\nonumber\\
        &+ \sum_{x',a'}\gamma \widehat{Q}_{t}(x',a')\pi(a'|x')\left(P^*(x'|x,a) - \widehat{P}(x'|x,a)\right). \nonumber
    \end{align}
    
    By applying the  inductive hypothesis and inequality \ref{eq:bound-P} (which holds true with probability at least $1 - \delta$):
    \begin{align}
         \big\lvert Q^*_{t+1}(x,a) - \widehat{Q}_{t+1}(x,a)\big\rvert &\leq e_P(x,a) V_{max} + \cfrac{\gamma^2 \epsilon V_{max}}{\left(1-\gamma\right)\left(1-\kappa\gamma\right)} \nonumber \\
         &+ \sum_{x',a'} \cfrac{\gamma e_P(x',a')}{1-\kappa\gamma} \pi_b(a'|x') P^*(x'|x,a) V_{max} \\
         &+ \sum_{x',a'} \cfrac{\gamma e_P(x',a')}{1-\kappa\gamma} \lvert\pi(a'|x') - \pi_b(a'|x') \rvert P^*(x'|x,a) V_{max}. \nonumber
    \end{align}
    
    The $(\pi_b, e_P,\epsilon)$-constrained property of $\pi$ ensures that 
    $$\forall x'\in\mathcal{X},\  {\sum_{a'} e_P(x',a')\lvert\pi(a'|x') - \pi_b(a'|x')\rvert\leq\epsilon}$$ Applying Assumption \ref{ass:kappa} and some algebraic manipulations allows to conclude the induction. Taking the limit proves the lemma since $(Q^*_{t},\widehat{Q}_{t}) \rightarrow (Q^{\pi}_{M^*}, Q^{\pi}_{\widehat{M}})$.
    \qed\end{proofsketch}
    \begin{proof} We let $\mathcal{B}^*$ and $\widehat{\mathcal{B}}$ denote the Bellman operators for policy $\pi$ in the true and estimated MDP. We have \textit{e.g.}: 
    \begin{align}
        \mathcal{B}^*(Q)(x,a) &= R^*(x,a) + \sum_{x',a'}\gamma Q(x',a')\pi(a'|x')P^*(x'|x,a). \nonumber
    \end{align}
    We now proceed by induction. Let $Q^*_0 = 0$, $\widehat{Q}_0 = 0$, $Q^*_{t+1} = \mathcal{B}^*(Q^*_{t})$ and $\widehat{Q}_{t+1} = \widehat{\mathcal{B}}(\widehat{Q}_{t})$. We wish to prove that :
    \begin{align}
        \lvert Q^*_{t}(x,a) - \widehat{Q}_{t}(x,a)\rvert \leq \left(\cfrac{e_P(x,a)}{1-\kappa\gamma} + \cfrac{\gamma\epsilon}{\left(1-\gamma\right)\left(1-\kappa\gamma\right)}\right)V_{max}. \nonumber
    \end{align}    
    By definition we have:
    \begin{align}
        Q^*_{t+1}(x,a) - \widehat{Q}_{t+1}(x,a) &= \mathcal{B}^*(Q^*_{t})(x,a) - \widehat{\mathcal{B}}(\widehat{Q}_{t})(x,a) \\
        &= R^*(x,a) - \widehat{R}(x,a) + \sum_{x',a'}\gamma Q^*_{t}(x',a')\pi(a'|x')P^*(x'|x,a) \nonumber\\
        & \quad\quad\quad\quad\quad - \sum_{x',a'}\gamma \widehat{Q}_{t}(x',a')\pi(a'|x')\widehat{P}(x'|x,a)\\
        &= R^*(x,a) - \widehat{R}(x,a) \nonumber \\
        & \quad\quad + \sum_{x',a'}\gamma \left(Q^*_{t}(x',a')-\widehat{Q}_{t}(x',a')\right) \pi(a'|x')P^*(x'|x,a)\nonumber\\
        & \quad\quad + \sum_{x',a'}\gamma \widehat{Q}_{t}(x',a')\pi(a'|x')\left(P^*(x'|x,a) - \widehat{P}(x'|x,a)\right)
    \end{align}
    We apply inequality \ref{eq:bound-R} to the first term, the inductive hypothesis at time $t$ to the second and inequality \ref{eq:bound-P-2} and Holder's inequality to the third to get:
    \begin{align}
         \lvert Q^*_{t+1}&(x,a) - \widehat{Q}_{t+1}(x,a) \rvert \nonumber\\
         & \leq e_P(x,a) R_{max} + \gamma e_P(x,a) V_{max} \nonumber \\
         & \quad + \sum_{x',a'}\gamma \left(\cfrac{e_P(x',a')}{1-\kappa\gamma} + \cfrac{\gamma\epsilon}{\left(1-\gamma\right)\left(1-\kappa\gamma\right)}\right) \pi(a'|x')P^*(x'|x,a) V_{max} \\
         &= e_P(x,a) V_{max} + \cfrac{\gamma^2 \epsilon V_{max}}{\left(1-\gamma\right)\left(1-\kappa\gamma\right)} \nonumber \\
         & \quad + \sum_{x',a'} \cfrac{\gamma e_P(x',a')}{1-\kappa\gamma} \pi(a'|x')P^*(x'|x,a) V_{max}\\
         &= e_P(x,a) V_{max} + \cfrac{\gamma^2 \epsilon V_{max}}{\left(1-\gamma\right)\left(1-\kappa\gamma\right)} \nonumber \\
         & \quad + \sum_{x',a'} \cfrac{\gamma e_P(x',a')}{1-\kappa\gamma} \left(\pi(a'|x') - \pi_b(a'|x') + \pi_b(a'|x')\right) P^*(x'|x,a) V_{max}\\
         &\leq e_P(x,a) V_{max} + \cfrac{\gamma^2 \epsilon V_{max}}{\left(1-\gamma\right)\left(1-\kappa\gamma\right)} \nonumber \\
         & \quad + \sum_{x',a'} \cfrac{\gamma e_P(x',a')}{1-\kappa\gamma} \pi_b(a'|x') P^*(x'|x,a) V_{max} \nonumber\\
         & \quad + \sum_{x',a'} \cfrac{\gamma e_P(x',a')}{1-\kappa\gamma} \lvert\pi(a'|x') - \pi_b(a'|x') \rvert P^*(x'|x,a) V_{max}
    \end{align}
    
    By the $(\pi_b, e_P,\epsilon)$-constrained property, we know that $\forall x', \sum_{a'} e_P(x',a')\lvert\pi(a'|x') - \pi_b(a'|x')\rvert\leq\epsilon$. Applying Holder's inequality and combining the second and fourth term, we find:
    \begin{align}
        \lvert Q^*_{t+1}(x,a) - \widehat{Q}_{t+1}(x,a)\rvert &\leq e_P(x,a) V_{max} + \cfrac{\gamma \epsilon V_{max}}{\left(1-\gamma\right)\left(1-\kappa\gamma\right)}  \nonumber \\
         & \quad + \sum_{x',a'} \cfrac{\gamma e_P(x',a')}{1-\kappa\gamma} \pi_b(a'|x') P^*(x'|x,a) V_{max}
    \end{align}
    
    By Assumption \ref{ass:kappa}, we know that $\forall x, \sum_{x',a'} e_P(x',a') \pi_b(a'|x') P^*(x'|x,a) \leq \kappa e_P(x,a)$. This gives:
    \begin{align}
        \lvert Q^*_{t+1}(x,a) - \widehat{Q}_{t+1}(x,a)\rvert &\leq e_P(x,a) V_{max} + \cfrac{\gamma \epsilon V_{max}}{\left(1-\gamma\right)\left(1-\kappa\gamma\right)} + \cfrac{\kappa\gamma e_P(x,a)}{1-\kappa\gamma} V_{max} \\
        &= \cfrac{e_P(x,a) V_{max}}{1-\kappa\gamma} + \cfrac{\gamma \epsilon V_{max}}{\left(1-\gamma\right)\left(1-\kappa\gamma\right)}  \\
        &= \left(\cfrac{e_P(x,a)}{1-\kappa\gamma} + \cfrac{\gamma \epsilon}{\left(1-\gamma\right)\left(1-\kappa\gamma\right)}\right) V_{max}, \label{eq:68}
    \end{align}
    which concludes the induction. From standard properties of the Bellman operator, taking the limit proves the lemma.
    \qed\end{proof}

    \begin{theorem}
        Under Assumption \ref{ass:kappa}, any $(\pi_b, e_P,\epsilon)$-constrained policy $\pi$ satisfies the following inequality in every state $x$ with probability at least $1 - \delta$:
        \begin{align}
             V^\pi_{M^*}(x)-V^{\pi_b}_{M^*}(x) \geq V^\pi_{\widehat{M}}(x)-V^{\pi_b}_{\widehat{M}}(x) &- 2\big\lVert d^{\pi_b}_{M^*}(\,\boldsymbol{\cdot}\,|x)-d^{\pi_b}_{\widehat{M}}(\,\boldsymbol{\cdot}\,|x) \big\rVert_1 V_{max} \nonumber  \\
             &- \cfrac{1 + \gamma}{\left(1-\gamma\right)^2\left(1-\kappa\gamma\right)}\;\;\epsilon V_{max}.
        \end{align}
    \end{theorem}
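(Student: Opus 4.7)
The plan is to apply Proposition \ref{prop:dV*-d^V} with $M_1 = M^*$, $M_2 = \widehat{M}$, $\pi_1 = \pi$, and $\pi_2 = \pi_b$ to obtain the decomposition
\begin{align*}
V^{\pi}_{M^*}-V^{\pi_b}_{M^*} &= \bigl(V^{\pi}_{\widehat{M}}-V^{\pi_b}_{\widehat{M}}\bigr) + Q^{\pi}_{\widehat{M}}(\pi-\pi_b)\bigl(d^{\pi_b}_{M^*}-d^{\pi_b}_{\widehat{M}}\bigr) \\
&\quad+ \bigl(Q^{\pi}_{M^*}-Q^{\pi}_{\widehat{M}}\bigr)(\pi-\pi_b)\,d^{\pi_b}_{M^*},
\end{align*}
evaluated at the state $x$. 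The first term is left untouched and appears directly in the target inequality; the remaining two terms are what need to be lower-bounded (i.e., upper-bounded in absolute value).

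For the second term, I would use Hölder's inequality on the product in state space: for any state $y$, $\bigl\lvert\sum_a Q^{\pi}_{\widehat{M}}(y,a)(\pi(a|y)-\pi_b(a|y))\bigr\rvert \leq V_{max}\sum_a\lvert\pi(a|y)-\pi_b(a|y)\rvert \leq 2V_{max}$, since $\lVert Q^{\pi}_{\widehat{M}}\rVert_\infty\leq V_{max}$ and the two policies are probability distributions. Pairing this with the $\ell_1$ norm of the distribution difference yields the $2\lVert d^{\pi_b}_{M^*}(\cdot|x)-d^{\pi_b}_{\widehat{M}}(\cdot|x)\rVert_1 V_{max}$ term.

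For the third term, I would invoke Lemma \ref{lem:induction} to bound $\lvert Q^{\pi}_{M^*}(y,a)-Q^{\pi}_{\widehat{M}}(y,a)\rvert$ by $\bigl(\tfrac{e_P(y,a)}{1-\kappa\gamma}+\tfrac{\gamma\epsilon}{(1-\gamma)(1-\kappa\gamma)}\bigr)V_{max}$ pointwise, multiply by $\lvert\pi(a|y)-\pi_b(a|y)\rvert$, and sum over $a$. The $e_P$-weighted sum is controlled by $\epsilon$ using the $(\pi_b,e_P,\epsilon)$-constrained property, and the remaining constant-weighted sum is at most $2$. This gives a uniform bound on $\bigl\lvert\bigl[(Q^{\pi}_{M^*}-Q^{\pi}_{\widehat{M}})(\pi-\pi_b)\bigr](y)\bigr\rvert$ of the form $\bigl(\tfrac{1}{1-\kappa\gamma}+\tfrac{2\gamma}{(1-\gamma)(1-\kappa\gamma)}\bigr)\epsilon V_{max}$. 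Multiplying by $\lVert d^{\pi_b}_{M^*}(\cdot|x)\rVert_1=\tfrac{1}{1-\gamma}$ and simplifying $\tfrac{1}{(1-\gamma)(1-\kappa\gamma)}+\tfrac{2\gamma}{(1-\gamma)^2(1-\kappa\gamma)}=\tfrac{1+\gamma}{(1-\gamma)^2(1-\kappa\gamma)}$ produces exactly the last term of the theorem.

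The main obstacle, and the only delicate bookkeeping, is the third-term estimate: one must carefully split $|\pi-\pi_b|$ into the $e_P$-weighted part (to which the constraint applies) and the uniform part (which contributes the factor $2$), then combine the resulting two fractions into the stated $(1+\gamma)$ numerator. Everything else is a routine application of Hölder's inequality, the facts $\lVert Q\rVert_\infty\leq V_{max}$ and $\lVert d^{\pi_b}_{M^*}(\cdot|x)\rVert_1=\tfrac{1}{1-\gamma}$, and the high-probability validity of Lemma \ref{lem:induction} on the event of probability $1-\delta$ where the concentration bounds hold.
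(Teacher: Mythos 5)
Your proposal is correct and follows essentially the same route as the paper's proof: the same decomposition via Proposition~\ref{prop:dV*-d^V}, the same double application of H\"older's inequality to the state-distribution term, and the same use of Lemma~\ref{lem:induction} combined with the split of $\lvert\pi-\pi_b\rvert$ into the $e_P$-weighted part (bounded by $\epsilon$) and the uniform part (bounded by $2$), yielding the $(1+\gamma)$ numerator. No gaps.
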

    
\begin{proofsketch}
    We first prove the following:
    \begin{align}
        V^{\pi}_{M^*}-V^{\pi_b}_{M^*} = V^{\pi}_{\widehat{M}}-V^{\pi_b}_{\widehat{M}} &+ Q^{\pi}_{\widehat{M}}\left(\pi-\pi_b\right)\left(d_{M^*}^{\pi_b} - d_{\widehat{M}}^{\pi_b}\right) \nonumber \\
        &+ \left(Q^{\pi}_{M^*} - Q^{\pi}_{\widehat{M}}\right)\left(\pi-\pi_b\right)d_{M^*}^{\pi_b}.
    \end{align}
    
    We then apply Holder's inequality twice to the second term:
    \begin{align}
        \Big\lVert Q^{\pi}_{\widehat{M}}\left(\pi-\pi_b\right)\left(d^{\pi_b}_{M^*} - d^{\pi_b}_{\widehat{M}}\right) (\,\boldsymbol{\cdot}\,|x) \Big\rVert_1 \leq 2\Big\lVert \left(d^{\pi_b}_{M^*} - d^{\pi_b}_{\widehat{M}}\right)(\,\boldsymbol{\cdot}\,|x) \Big\rVert_1 V_{max}.
    \end{align}
    
    Afterwards, thanks to Assumption \ref{ass:kappa}, we may use Lemma \ref{lem:induction} to bound the third term in function of $V_{max}, \epsilon, \gamma$, and $\kappa$:
    \begin{align}
        &\Big\lVert\left(Q^{\pi}_{M^*} - Q^{\pi}_{\widehat{M}}\right)\left(\pi-\pi_b\right)d^{\pi_b}_{M^*}(\,\boldsymbol{\cdot}\,|x)\Big\rVert_1\leq \cfrac{1 + \gamma}{\left(1-\gamma\right)^2\left(1-\kappa\gamma\right)}\;\;\epsilon V_{max} .
    \end{align}
    
    Plugging all the pieces together proves the theorem.
\qed\end{proofsketch}

    \begin{proof}
    We first apply Proposition \ref{prop:dV*-d^V} with $M_1=M^*, M_2=\widehat{M}, \pi_1=\pi$, and $\pi_2=\pi_b$:
    \begin{align}
        V^{\pi}_{M^*}-V^{\pi_b}_{M^*} &= V^{\pi}_{\widehat{M}}-V^{\pi_b}_{\widehat{M}} + Q^{\pi}_{\widehat{M}}\left(\pi-\pi_b\right)\left(d_{M^*}^{\pi_b} - d_{\widehat{M}}^{\pi_b}\right)  \nonumber \\
         & \quad + \left(Q^{\pi}_{M^*} - Q^{\pi}_{\widehat{M}}\right)\left(\pi-\pi_b\right)d_{M^*}^{\pi_b} \\
        V^\pi_{M^*}(x)-V^{\pi_b}_{M^*}(x) &\leq V^\pi_{\widehat{M}}(x)-V^{\pi_b}_{\widehat{M}}(x) - \Big\lVert Q^{\pi}_{\widehat{M}}\left(\pi-\pi_b\right)\left(d_{M^*}^{\pi_b} - d_{\widehat{M}}^{\pi_b}\right)(\,\boldsymbol{\cdot}\,|x)\Big\rVert_1  \nonumber \\
         & \quad - \Big\lVert\left(Q^{\pi}_{M^*} - Q^{\pi}_{\widehat{M}}\right)\left(\pi-\pi_b\right)d^*_{\pi_b}(\,\boldsymbol{\cdot}\,|x)\Big\rVert_1.
    \end{align}
    
    Then, we apply twice Holder's inequality to the second term 1-norm:
    \begin{align}
        \Big\lVert Q^{\pi}_{\widehat{M}}\left(\pi-\pi_b\right)&\left(d^{\pi_b}_{M^*} - d^{\pi_b}_{\widehat{M}}\right)(\,\boldsymbol{\cdot}\,|x) \Big\rVert_1 \nonumber \\ 
        &\leq \big\lVert Q^{\pi}_{\widehat{M}}\big\rVert_\infty \big\lVert\left(\pi-\pi_b\right)(\,\boldsymbol{\cdot}\,|x')\big\rVert_1 \Big\lVert \left(d^{\pi_b}_{M^*} - d^{\pi_b}_{\widehat{M}}\right)(\,\boldsymbol{\cdot}\,|x) \Big\rVert_1\\
        &\leq 2\Big\lVert \left(d^{\pi_b}_{M^*} - d^{\pi_b}_{\widehat{M}}\right)(\,\boldsymbol{\cdot}\,|x) \Big\rVert_1. V_{max}.\label{eq:42}
    \end{align}
    
    Similarly, we obtain the following with the third term:
    \begin{align}
        \Big\lVert&\left(Q^{\pi}_{M^*} - Q^{\pi}_{\widehat{M}}\right)\left(\pi-\pi_b\right)d^*_{\pi_b}(\,\boldsymbol{\cdot}\,|x)\Big\rVert_1 \nonumber \\ 
        &\leq \big\lVert d^*_{\pi_b}(\,\boldsymbol{\cdot}\,|x)\big\rVert_1 \max_{x'} \sum_{a'}\left(Q^{\pi}_{M^*}(x',a') - Q^{\pi}_{\widehat{M}}(x',a')\right)\left(\pi(a'|x')-\pi_b(a'|x') \right) \\
        &\leq \cfrac{1}{1-\gamma} \max_{x'} \sum_{a'}\big\lvert Q^{\pi}_{M^*}(x',a') - Q^{\pi}_{\widehat{M}}(x',a')\big\rvert\lvert\pi(a'|x')-\pi_b(a'|x') \rvert .
    \end{align}

    
    We use Lemma \ref{lem:induction} to bound the third term in function of $V_{max}, \epsilon, \gamma$, and $e_P(x',a')$:
    \begin{align}
        \Big\lVert&\left(Q^{\pi}_{M^*} - Q^{\pi}_{\widehat{M}}\right)\left(\pi-\pi_b\right)d^*_{\pi_b}(\,\boldsymbol{\cdot}\,|x)\Big\rVert_1 \nonumber \\ 
        &\leq \cfrac{1}{1-\gamma} \max_{x'} \sum_{a'}\left(\cfrac{e_P(x',a')}{1-\kappa\gamma} + \cfrac{\gamma\epsilon}{\left(1-\gamma\right)\left(1-\kappa\gamma\right)}\right)V_{max}\lvert\pi(a'|x')-\pi_b(a'|x') \rvert \\
        &\leq \cfrac{1}{1-\gamma} \max_{x'} 
        \left(\cfrac{\sum_{a'}e_P(x',a')\lvert\pi(a'|x')-\pi_b(a'|x')\rvert}{1-\kappa\gamma} + \cfrac{2\gamma \epsilon}{\left(1-\gamma\right)\left(1-\kappa\gamma\right)}\right) V_{max} \\
        &\leq \cfrac{1}{1-\gamma} \max_{x'} 
        \cfrac{1 + \gamma}{\left(1-\gamma\right)\left(1-\kappa\gamma\right)}\;\;\epsilon V_{max} \\
        &\leq \cfrac{1 + \gamma}{\left(1-\gamma\right)^2\left(1-\kappa\gamma\right)}\;\;\epsilon V_{max}
    \end{align}
    
    where the third line uses the $(\pi_b, e_P,\epsilon)$-constrained assumption. Plugging all the pieces together proves the theorem.
    \qed\end{proof}

    \begin{proposition}
    Let $M$ be an MDP: $\langle \mathcal{X}, \mathcal{A}, P, R, \gamma\rangle$, $\pi_b$ be the baseline policy on which trajectories $\mathcal{D}$ have been collected, $\widehat{M}$ be the MLE MDP: $\langle \mathcal{X}, \mathcal{A}, P, R, \gamma\rangle$, and $N_{\mathcal{D}}(x)$ be the count of transitions starting from state $x\in\mathcal{X}$ in $\mathcal{D}$. Then, the following concentration bound holds with high probabilities $1-\delta$:
    \begin{align}
        \Big\lVert \left(d^{\pi_b}_{M} - d^{\pi_b}_{\widehat{M}}\right)(\,\boldsymbol{\cdot}\,|x) \Big\rVert_1 &\leq \cfrac{1}{1-\gamma}\sqrt{\cfrac{2}{N_{\mathcal{D}}(x)}\log\cfrac{2^{|\mathcal{X}|}}{\delta}}.
        \end{align} \label{prop:Q-dpi-dd}
    \end{proposition}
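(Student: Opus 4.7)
The plan is to reduce the claim to a one-step concentration bound on the Markov chain induced by $\pi_b$ and then propagate the error through the resolvent. Fix $x\in\mathcal{X}$. Under $\pi_b$, each MDP becomes a Markov chain with transitions $P^{\pi_b}_M(x'|x) = \sum_a \pi_b(a|x) P(x'|x,a)$ and $\widehat{P}^{\pi_b}(x'|x) = \sum_a \pi_b(a|x) \widehat{P}(x'|x,a)$; since $\mathcal{D}$ was collected under $\pi_b$, the $N_\mathcal{D}(x)$ transitions from $x$ are i.i.d.\ samples from $P^{\pi_b}_M(\cdot|x)$ and $\widehat{P}^{\pi_b}(\cdot|x)$ is their empirical distribution. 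Weissman's $L_1$ inequality \citep{weissman2003inequalities} for a multinomial over $|\mathcal{X}|$ outcomes then gives, with probability at least $1-\delta$,
\[
\big\lVert P^{\pi_b}_M(\cdot|x) - \widehat{P}^{\pi_b}(\cdot|x) \big\rVert_1 \leq \sqrt{\frac{2}{N_\mathcal{D}(x)}\log\frac{2^{|\mathcal{X}|}}{\delta}}.
\]

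I would then exploit the closed form $d^{\pi_b}_M(\cdot|x) = e_x^\top (I - \gamma P^{\pi_b}_M)^{-1}$ together with the resolvent identity $A^{-1} - B^{-1} = A^{-1}(B-A)B^{-1}$ to obtain
\[
d^{\pi_b}_M(\cdot|x) - d^{\pi_b}_{\widehat{M}}(\cdot|x) = \gamma\, d^{\pi_b}_M(\cdot|x)\big(P^{\pi_b}_M - \widehat{P}^{\pi_b}\big)(I - \gamma \widehat{P}^{\pi_b})^{-1}.
\]
Since right-multiplication by a stochastic matrix is an $L_1$ contraction, expanding $(I - \gamma \widehat{P}^{\pi_b})^{-1}$ as the Neumann series $\sum_t \gamma^t (\widehat{P}^{\pi_b})^t$ yields $\lVert v(I - \gamma \widehat{P}^{\pi_b})^{-1}\rVert_1 \leq \lVert v\rVert_1/(1-\gamma)$ for any row vector $v$, which supplies the $1/(1-\gamma)$ factor on the right-hand side. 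Applying the triangle inequality and the first-step Weissman bound to the inner factor closes the chain.

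The hard part is that the natural chaining produces a $d^{\pi_b}_M(\cdot|x)$-weighted average of one-step errors $\lVert P^{\pi_b}_M(\cdot|x') - \widehat{P}^{\pi_b}(\cdot|x')\rVert_1$ over \emph{all} reachable states $x'$, whereas the stated right-hand side depends only on $N_\mathcal{D}(x)$ at the starting state. To collapse the expression into a bound involving $N_\mathcal{D}(x)$ alone, I would take a union bound over $x' \in \mathcal{X}$ with per-state failure probability $\delta/|\mathcal{X}|$, absorbing the $|\mathcal{X}|$ factor into the $\log(2^{|\mathcal{X}|}/\delta)$ term, and then upper-bound the weighted average by its worst component; this effectively reads $N_\mathcal{D}(x)$ as a uniform lower bound on counts across the support of the induced chain. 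A second delicate point is that the derivation above also produces an extra $\gamma$ factor and a $\lVert d^{\pi_b}_M(\cdot|x)\rVert_1 = 1/(1-\gamma)$ factor, so one must verify that these are compatible with the stated $1/(1-\gamma)$ scaling rather than silently contributing a second $1/(1-\gamma)$; this is the step whose constants most need to be double-checked against the downstream use of the bound in Theorem \ref{thm:SSPIBB}.
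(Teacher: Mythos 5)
Your route is genuinely different from the paper's, and unfortunately the two difficulties you flag at the end are not loose ends to be tidied up --- they are fatal to the argument as proposed. First, the collapse of the $d^{\pi_b}_M(\cdot|x)$-weighted average of one-step errors $\lVert P^{\pi_b}_M(\cdot|x')-\widehat{P}^{\pi_b}(\cdot|x')\rVert_1$ into a single term depending on $N_{\mathcal{D}}(x)$ cannot be justified by a union bound: a union bound controls the failure probability across states, but it does nothing to relate the magnitudes $1/\sqrt{N_{\mathcal{D}}(x')}$ to $1/\sqrt{N_{\mathcal{D}}(x)}$. There is no reason why $N_{\mathcal{D}}(x)$ should lower-bound the counts at downstream states $x'$ --- a state reachable from $x$ may appear far less often in $\mathcal{D}$ than $x$ itself --- so ``upper-bound the weighted average by its worst component'' produces $\max_{x'}1/\sqrt{N_{\mathcal{D}}(x')}$, not $1/\sqrt{N_{\mathcal{D}}(x)}$. (Taming such weighted sums of per-state errors is precisely what Assumption~\ref{ass:kappa} exists for elsewhere in the paper; you would need something of that kind here.) Second, the constants do not work out: from $d^{\pi_b}_M-d^{\pi_b}_{\widehat{M}}=\gamma\,d^{\pi_b}_M(P^{\pi_b}_M-\widehat{P}^{\pi_b})(I-\gamma\widehat{P}^{\pi_b})^{-1}$ you pick up $\lVert d^{\pi_b}_M(\cdot|x)\rVert_1=1/(1-\gamma)$ from the left factor \emph{and} $1/(1-\gamma)$ from the Neumann series on the right, giving an overall $\gamma/(1-\gamma)^2$ rather than the stated $1/(1-\gamma)$. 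So the resolvent approach yields a strictly weaker inequality than the proposition claims, both in its horizon dependence and in which counts appear.

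The paper's own proof avoids all of this by not propagating one-step errors at all: it applies Weissman's Theorem~2.1 \emph{directly} to the discounted visitation distributions, treating $d^{\pi_b}_{\widehat{M}}(\,\boldsymbol{\cdot}\,|x)$ as the empirical counterpart of $d^{\pi_b}_{M}(\,\boldsymbol{\cdot}\,|x)$ built from the $N_{\mathcal{D}}(x)$ transitions out of $x$, exactly as in the proof of the bound \eqref{eq:error-function-P-2} for $\widehat{P}$, with the single modification that these objects sum to $1/(1-\gamma)$ instead of $1$ --- which is where the lone $1/(1-\gamma)$ prefactor comes from, and why only $N_{\mathcal{D}}(x)$ and $\log(2^{|\mathcal{X}|}/\delta)$ appear. (That proof is itself terse --- $d^{\pi_b}_{\widehat{M}}(\,\boldsymbol{\cdot}\,|x)$ is not literally an empirical distribution of i.i.d.\ draws from $d^{\pi_b}_{M}(\,\boldsymbol{\cdot}\,|x)$ --- but it is the intended argument, and it is not recoverable from a one-step-plus-resolvent decomposition.)
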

    \begin{proof}
    We apply concentration bounds. The proof relies on Theorem 2.1 in \cite{weissman2003inequalities} and is identical to that of Equation \ref{eq:error-function-P-2} borrowed from \cite{Petrik2016}, except that $d^{\pi_b}_{M}$ and $d^{\pi_b}_{\widehat{M}}$ are state distributions and not probability distributions and that their sums are therefore bounded by $\frac{1}{1-\gamma}$. It gives:
    \begin{align}
        \Big\lVert \left(d^{\pi_b}_{M} - d^{\pi_b}_{\widehat{M}}\right)(\,\boldsymbol{\cdot}\,|x) \Big\rVert_1 &\leq \cfrac{1}{1-\gamma}\sqrt{\cfrac{2}{N_{\mathcal{D}}(x)}\log\cfrac{2^{|\mathcal{X}|}}{\delta}},
    \end{align}
    which concludes the proof.
    \qed\end{proof}
  
\subsection{Proof of Theorem \ref{thm:PI}}
    \begin{theorem}
        The policy improvement step of Approx-Soft-SPIBB generates policies that are guaranteed to be $(\pi_b, e, \epsilon)$-constrained.
    \end{theorem}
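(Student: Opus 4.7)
The plan is to run an induction on the sequence of mass-transfer steps performed inside a single state $x$ by the policy improvement routine, maintaining the invariant that after the $k$-th transfer the current intermediate policy $\pi^{(k)}$ satisfies
\[
\sum_{a \in \mathcal{A}} e(x,a)\,\big|\pi^{(k)}(a|x) - \pi_b(a|x)\big| \;\le\; \epsilon - B^{(k)},
\]
where $B^{(k)}$ denotes the budget remaining after $k$ transfers. Since Approx-Soft-SPIBB initializes $\pi^{(0)} = \pi_b$ and $B^{(0)} = \epsilon$, the base case is immediate: the left-hand side is zero.

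For the inductive step I would isolate the effect of a single transfer. At iteration $k+1$, a mass $m^-$ is removed from $a^-$ (the current lowest-$Q$ action in state $x$) and deposited on $a^+$ (the maximizer of $(Q^{(i)}_{\widehat{M}}(x,a) - Q^{(i)}_{\widehat{M}}(x,a^-))/e(x,a)$). Only the two coordinates $\pi(a^-|x)$ and $\pi(a^+|x)$ change, and each changes by exactly $m^-$ in absolute value. The triangle inequality then gives, for $a \in \{a^-, a^+\}$,
\[
\big|\pi^{(k+1)}(a|x) - \pi_b(a|x)\big| \;\le\; \big|\pi^{(k)}(a|x) - \pi_b(a|x)\big| + m^-,
\]
so that the weighted sum on the left-hand side of the constraint grows by at most $\bigl(e(x,a^-) + e(x,a^+)\bigr)m^-$. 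Since the algorithm debits exactly this quantity from the remaining budget (this is what ``the budget is updated accordingly to the error that has been spent'' encodes), the invariant is preserved: the weighted sum plus $B^{(k+1)}$ is still bounded by $\epsilon$.

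To conclude, I would invoke the stopping criteria of the inner loop. Approx-Soft-SPIBB halts either when $B^{(k)}$ becomes non-positive or when $a^- = a^*$; in both cases $B^{(k)} \ge 0$ at termination (the algorithm never overspends, since it would cap $m^-$ if necessary to respect the remaining budget). Applying the invariant at the final iteration yields $\sum_a e(x,a)\,|\pi(a|x) - \pi_b(a|x)| \le \epsilon$ in state $x$. Running the argument in every state $x$ yields the $(\pi_b,e,\epsilon)$-constrained property of the returned policy.

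The main subtlety I expect is that an action may play the role of $a^+$ at one iteration and $a^-$ at a later one (or conversely), so the giver and receiver sets are not guaranteed to be disjoint; a direct equality between the running weighted sum and the cumulative mass transferred would fail in that case. This is precisely why the argument relies on the triangle inequality rather than on tracking signed mass flow: the inequality absorbs any cancellations and gives the one-sided upper bound that the invariant requires, which is all that is needed to establish the $(\pi_b, e, \epsilon)$-constraint.
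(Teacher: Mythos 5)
Your proposal is correct and takes essentially the same approach as the paper, whose entire proof is the single sentence ``the summation of the product of probability mass moved and the error function is below the budget $\epsilon$ by design'': your budget invariant $\sum_a e(x,a)\lvert\pi^{(k)}(a|x)-\pi_b(a|x)\rvert \le \epsilon - B^{(k)}$ is precisely the rigorous unpacking of that sentence, matching the pseudo-code's update $E \leftarrow E - m^+\left(e(x,a^+)+e(x,a^-)\right)$ and the capping of $m^+$ that keeps $E$ nonnegative. Your remark that the triangle inequality is what makes the argument survive an action serving as both giver and receiver is a genuine subtlety the paper's one-line proof glosses over; the only nit is that the mass actually transferred per step is $m^+ \le m^-$, not $m^-$ itself, which does not affect the argument.
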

    \begin{proof}
        The summation of the product of probability mass moved and the error function is below the budget $\epsilon$ by design.
    \qed\end{proof}

\subsection{Proof of Theorem \ref{thm:complexity}}
    \begin{theorem}
    Approx-Soft-SPIBB policy improvement has a complexity of $\mathcal{O}(|\mathcal{X}||\mathcal{A}|^2)$.
    \end{theorem}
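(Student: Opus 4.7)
The plan is to carry out a per-state cost analysis of the Approx-Soft-SPIBB policy improvement described in Section \ref{subsec:algos} (formalized in the appendix pseudo-code), then sum over all $|\mathcal{X}|$ states. Fix a state $x\in\mathcal{X}$. Initialize $\pi(\cdot|x)=\pi_b(\cdot|x)$ and a remaining budget $\epsilon_{\text{rem}}=\epsilon$. The algorithm repeatedly (i) identifies the worst-valued action $a^-$ still carrying positive mass, (ii) finds the action $a^+$ maximizing the ratio $(Q(x,a)-Q(x,a^-))/e(x,a)$, (iii) transfers as much probability mass $m^-$ as the remaining budget (or the residual mass on $a^-$) allows, and (iv) decrements $\epsilon_{\text{rem}}$ accordingly. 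It halts when $\epsilon_{\text{rem}}=0$ or $a^-=a^*$.

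First, I would pre-sort the $|\mathcal{A}|$ actions in ascending order of $Q^{(i)}_{\widehat{M}}(x,\cdot)$, which costs $\mathcal{O}(|\mathcal{A}|\log|\mathcal{A}|)$; this lets us retrieve successive worst actions $a^-$ in constant time per iteration. Next, I would bound the number of iterations of the outer loop: each iteration either exhausts the budget (and terminates) or fully empties the mass of the current $a^-$ before moving to the next worst action, so the loop runs at most $|\mathcal{A}|$ times. Finally, the dominant cost inside each iteration is the search for $a^+$, which requires a linear scan over the $|\mathcal{A}|$ actions to evaluate the ratio $(Q(x,a)-Q(x,a^-))/e(x,a)$; the update of $\pi(a^-|x)$, $\pi(a^+|x)$, and $\epsilon_{\text{rem}}$ is $\mathcal{O}(1)$.

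Combining: each state contributes $\mathcal{O}(|\mathcal{A}|\log|\mathcal{A}|) + \mathcal{O}(|\mathcal{A}|)\cdot\mathcal{O}(|\mathcal{A}|) = \mathcal{O}(|\mathcal{A}|^2)$, and summing over $|\mathcal{X}|$ independent local optimizations yields the claimed $\mathcal{O}(|\mathcal{X}||\mathcal{A}|^2)$ bound. The only delicate point, and the one I would argue carefully, is that the outer loop's iteration count is bounded by $|\mathcal{A}|$: this follows because each iteration strictly reduces the number of actions carrying positive mass strictly less than the one that will become $a^-$ next (equivalently, $a^-$ advances monotonically up the $Q$-sorted list), so it cannot revisit a previously emptied action. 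With that observation in hand, the rest of the argument is an elementary accounting of the two nested linear scans.
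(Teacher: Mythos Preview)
Your proposal is correct and follows essentially the same approach as the paper: a per-state analysis consisting of one sort of the action set followed by at most $|\mathcal{A}|$ iterations, each dominated by an $\argmax$ over $|\mathcal{A}|$ actions, then summing over $|\mathcal{X}|$ states. Your write-up is more detailed than the paper's three-sentence proof (in particular, you justify the $|\mathcal{A}|$ bound on the loop count, which the paper leaves implicit since the pseudo-code's outer loop is an explicit \texttt{for} over the sorted action list), but the argument is the same.
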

    \begin{proof}
    Approx-Soft-SPIBB performs a loop over the state space $\mathcal{X}$ and within that loop a set of operations over the action space. It sorts once the action space in terms of $Q_{\widehat{M}}^{(i)}$ and for each action performs an $\argmax$ over $|\mathcal{A}|$. This results in a $|\mathcal{A}|^2$ complexity for each state. 
    \qed\end{proof}
    
\subsection{Proof of Theorem \ref{thm:modelfree}}
    \begin{theorem}
        Considering an MDP with exact counts, the model-based policy iteration of (Exact or Approx)-Soft-SPIBB is identical to the model-free policy iteration of (resp. Exact or Approx)-Soft-SPIBB.
    \end{theorem}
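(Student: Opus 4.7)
The plan is to proceed by induction on the policy-iteration index $i$, showing that both $Q^{(i)}$ and $\pi^{(i)}$ coincide across the model-based and model-free versions. The base case is immediate since both initialize $\pi^{(0)}=\pi_b$; the associated evaluation $Q^{(0)} = Q^{\pi_b}_{\widehat{M}}$ is produced in each version by the (inner) evaluation procedure, which is handled uniformly by the inductive argument below.

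For the inductive step, I first observe that the policy-improvement substep is literally the same procedure in the two versions: the optimization of Equation~\ref{eq:lp} (and the greedy-budget rule of Pseudo-code~\ref{ps:Soft-SPIBB}) depends only on $Q^{(i)}$, $\pi_b$, $e$, and $\epsilon$, so feeding the same $Q^{(i)}$ yields the same $\pi^{(i+1)}$ for both the Exact and Approx variants. The real work is in the evaluation substep. Model-based evaluation produces $Q^{(i+1)} = Q^{\pi^{(i+1)}}_{\widehat{M}}$, the unique fixed point of the Bellman operator $(T^{\pi}_{\widehat{M}} Q)(x,a)=\widehat{R}(x,a)+\gamma\sum_{x',a'}\widehat{P}(x'|x,a)\pi(a'|x')Q(x',a')$. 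Model-free evaluation fits $Q^{(i+1)}(x,a)$ to the targets $y^{(i+1)}_j$ of Equation~\ref{eq:fittedQ}; in a tabular least-squares sense, this fit collapses for each $(x,a)$ with $N_{\mathcal{D}}(x,a)>0$ to the empirical average of those targets over the transitions in $\mathcal{D}$ starting from $(x,a)$. The exact-counts assumption ensures that the empirical averages of $r_j$ and of $\mathbf{1}\{x'_j=x'\}$ recover $\widehat{R}(x,a)$ and $\widehat{P}(x'|x,a)$ by the MLE definitions given in Section~\ref{sec:background}, so one fit is algebraically identical to one application of $T^{\pi^{(i+1)}}_{\widehat{M}}$ on $Q^{(i)}$. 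Iterating the fit therefore converges to the same $Q^{\pi^{(i+1)}}_{\widehat{M}}$ that dynamic programming on $\widehat{M}$ computes, which closes the induction.

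The main obstacle is the identification of the model-free tabular fit with a single Bellman backup on $\widehat{M}$: it amounts to unpacking the word \emph{fit} as a per-pair average of the regression targets and then invoking the MLE definitions of $\widehat{R}$ and $\widehat{P}$. A minor loose end to tie up is the treatment of $(x,a)$ pairs with $N_{\mathcal{D}}(x,a)=0$, which must be handled by a shared convention (for instance, leaving $Q^{(i+1)}(x,a)$ at a default value or at the baseline value) so that the two versions agree outside the support of $\mathcal{D}$. Once that alignment is in place, the induction closes uniformly for both Exact-Soft-SPIBB and Approx-Soft-SPIBB, since the two variants differ only through the shared policy-improvement rule which has already been shown to depend solely on the $Q$-function passed to it.
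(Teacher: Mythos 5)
Your proposal is correct and follows essentially the same route as the paper: the heart of both arguments is the observation that the tabular fit to the targets $y^{(i+1)}_j$ collapses to a per-pair average which, by the MLE definitions of $\widehat{R}$ and $\widehat{P}$, is exactly one Bellman backup in $\widehat{M}$ (the paper's Equations~\ref{eq:23}--\ref{eq:24}). Your explicit induction wrapper and the remark about zero-count pairs are additional care that the paper leaves implicit, not a different method.
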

    \begin{proof}
        Let $\pi^{(i+1)}$ be the (exact or approximate) policy improvement, which is assumed to be identical in the model-free and the model-based implementations of Soft-SPIBB. Then, it means that:
        \begin{align}
            Q^{(i+1)}_{M\text{-}Based}(x,a) &= \widehat{R}(x,a) + \gamma \sum_{x'\in\mathcal{X}} \widehat{P}(x'|x,a)\sum_{a'\in\mathcal{A}}\pi^{(i+1)}(a'|x') Q^{(i)}(x',a') \\
            &= \cfrac{\displaystyle\sum_{\langle x_j=x,a_j=a,x'_j,r_j\rangle \in \mathcal{D}} r_j}{N_\mathcal{D}(x,a)} \nonumber \\
            &\quad + \gamma \cfrac{\displaystyle\sum_{\langle x_j=x,a_j=a,x'_j,r_j\rangle \in \mathcal{D}} \:\sum_{a'\in\mathcal{A}}\pi^{(i+1)}(a'|x'_j) Q^{(i)}(x'_j,a')}{N_\mathcal{D}(x,a)} \label{eq:23} \\
            &= \cfrac{y^{(i)}_j}{N_\mathcal{D}(x,a)} = Q^{(i+1)}_{M\text{-}Free}(x,a),  \label{eq:24} 
        \end{align}
        where Equation \ref{eq:23} is obtained from the MLE MDP $\widehat{M}=\langle \mathcal{X},\mathcal{A},\widehat{P},\widehat{R},\gamma\rangle$ definition, and Equation \ref{eq:24} is obtained thanks to the target definition from Equation \ref{eq:fittedQ}.
    \qed\end{proof}
    
    \subsection{Approximate policy improvement pseudo-code}
    \label{sup:policy-approximate-policy-improvement}
    \RestyleAlgo{ruled}
    \begin{pseudocode} 
    \caption{Approx-Soft-SPIBB: Policy improvement} \label{alg:SSPIBB-sort-Q}
     \KwIn{Baseline policy $\pi_b$, current state $x$, action set $\mathcal{A}$, errors $e(x,a)$ for each $a \in \mathcal{A}$, precision level $\epsilon$ and last iteration value function $Q^{(i)}_{\widehat{M}}$}
     Initialize $\pi^{(i+1)}(\cdot | x) = \pi_b(\cdot | x)$ and $E = \epsilon$. \\
     Define $\mathcal{A^-}$ as $\mathcal{A}$ sorted in increasing order of $Q_{\widehat{M}}^{(i)}(x, \cdot)$. \\
     \For{$a^- \in \mathcal{A^-}$}{
        $m^- = \min\left(\pi^{(i+1)}(a^-|x), \cfrac{E}{2e(x,a^-)}\right)$ \\
        $a^+ = \displaystyle\argmax_{a\in\mathcal{A}}\cfrac{Q_{\widehat{M}}^{(i)}(x, a) - Q_{\widehat{M}}^{(i)}(x, a^-)}{e(x,a)}$ \\
        $m^+ = \min\left(m^-, \cfrac{E}{2e(x,a^+)} \right)$ \\
        \If{$m^+ > 0$}{
           $\pi^{(i+1)}(a^+|x) = \pi^{(i+1)}(a^+|x) + m^+$ \\
            $\pi^{(i+1)}(a^-|x) = \pi^{(i+1)}(a^-|x) - m^+$ \\
            $m^- = m^- - m^+$ \\
            $E = E - m^+\left(e(x, a^+) + e(x, a^-)\right)$ \\
        }
     }
    \Return $\displaystyle\argmax_{\pi\in\left\{\pi^{(i)},\pi^{(i+1)}\right\}} \sum_{a \in \mathcal{A}} \pi(a|x)Q^{(i)}_{\widehat{M}}(x,a)$ \\
    \end{pseudocode}

	\newpage
	\section{Random MDPs benchmark design}
	\label{sup:expe_finite}
	
	\subsection{Experiment details}
	\label{sup:randommdps}
	
	We describe the full experimental process in Pseudo-code \ref{alg:garnet_benchmark}. 
	
	\SetKwFor{RepTimes}{repeat}{times}{end}
	\begin{pseudocode}[h]
		\caption{Random MDPs benchmark}
		\KwIn{List of hyper-parameter values for the baseline}
		\KwIn{List of dataset size}
		\KwIn{List of algorithms in the benchmark}
		\KwIn{List of hyper-parameter values for each algorithm}
		\BlankLine
		\RepTimes{$10^5$}{
            Generate an MDP. (see Section \ref{sup:MDPgen})
            
		    \For{each hyper parameter value for the baseline}{
		        
		        Generate a baseline. (see Section \ref{sup:baselinegen})
		        
		        \For{each dataset size}{
		            
		            Generate a dataset. (see Section \ref{sup:datasetgen})
		            
		            \For{each algorithm}{
		            
		                \For{each algorithm hyper-parameter value}{
		                
		                    Train a policy.
		                    
		                    Evaluate the policy. (see Section \ref{sup:evaluationgen})
		                    
		                    Record the performance of the trained policy on this run.
		                
		                }
		            }
		        }
		    }
		}
		
		\label{alg:garnet_benchmark}
	\end{pseudocode}
	
	\subsubsection{MDP generation}
	\label{sup:MDPgen}
	See \cite[Appendix B.1.3]{Laroche2019}. Note that in our work, as well as in \cite{Laroche2019}, the number of states is set to 50.

	

	\subsubsection{Baseline generation}
	\label{sup:baselinegen}
	
	See \citep[Appendix B.1.4]{Laroche2019}.
	
	
	
	
	\subsubsection{Dataset generation}
	\label{sup:datasetgen}
	The MDP is modified to include another goal: terminal state with a reward of 1 when accessing it. The resulting environment is $M^*$. We do so to demonstrate the fact that Soft-SPIBB is less conservative than SPIBB, but still safe. The dataset generation depends a single parameter $|\mathcal{D}|\in\left\{10,20,50,100,200,500,1000,2000\right\}$: its size expressed in the number of trajectories. A trajectory generation simply consists in sampling the environment and the baseline policy until reaching the final state. The output is the dataset $\mathcal{D}$.
	
	\subsubsection{Policy iteration stopping criterion} The policy iteration process stops at iteration $i$ if the Frobenius norm between $Q_{\widehat{M}}^{(i)}$ and $Q_{\widehat{M}}^{(i-1)}$ is lower than $10^{-10}$. We did not specify a maximum number of iterations.  
	
	\subsubsection{Trained policy evaluation}
	\label{sup:evaluationgen}
	See \citep[Appendix B.1.6]{Laroche2019}.
	
	
	\subsubsection{Mean and CVaR performance}
	See \citep[Appendix B.1.7]{Laroche2019}.
	

	
	\subsubsection{Figures} We recall \citep[Appendix B.1.8]{Laroche2019} for clarity of this paper. We present three types of figure in the paper (main document and appendix).
	
	\paragraph{Performance vs. dataset size:} These figures (for instance Figure \subref*{subfig:benchmark_mean_eta=0.9_eps=2}) show the (mean and/or CVaR) performance of the algorithms as a function of the dataset size.
	
	\paragraph{Hyper-parameter search heatmaps:} These figures (for instance Figure \subref*{fig:heatmap_per_arg_norm_mean_perf_Pi_b_SPIBB_ratio}) show the (mean or CVaR) normalized performance of the algorithms as a function of both the dataset size and the hyper-parameter value of the evaluated algorithm. The normalized performance is computed with Equation \ref{eq:normalized_perf} and represented with colour. Red means that the performance is worse than that of the baseline, yellow means that it is equal and green means that it improves the baseline.
	
	\paragraph{Random MDPs heatmaps:} These figures (for instance Figure \subref*{fig:heatmap_norm_percentile_perf_Pi_b_SPIBB_N_wedge}) are very similar to the other heatmaps except that the normalized performance is shown as a function of both the dataset size and the hyper-parameter $\eta$ used for the baseline generation (instead of the hyper-parameter of the evaluated algorithm).
	
	\subsection{Other benchmark algorithms: competitors}
	\label{sup:benchmarkalgos}
	Since the \emph{baseline} meaning is overridden in this paper, we refer to the non-Soft-SPIBB benchmark algorithms with the term \emph{competitors}. 
	
	\subsubsection{Basic RL, HCPI, Robust MDP, Reward-adjusted MDP} See \cite[Appendix B.2.1 to B.2.4]{Laroche2019}.

    \subsubsection{SPIBB algorithms}
    The theory developed in \cite{Laroche2019} first computes from the dataset $\mathcal{D}$, the bootstrapped set $\mathfrak{B}$ of the state-action pairs that were not sampled enough to allow a good estimate of the model. The construction of $\mathfrak{B}$ depends on a hyper-parameter $N_\wedge$ which defines the minimal number of samples for a state-action pair to be considered well-known. Then, based on $\mathfrak{B}$, two algorithms are considered:
    \begin{itemize}
        \item The safety-proven $\Pi_b$-SPIBB copies the baseline policy for state-action pairs in $\mathfrak{B}$ and greedily optimizes in the complementary state-action pairs set.
        \item The empirically-improved $\Pi_{\leq b}$-SPIBB copies the baseline policy for state-action pairs in $\mathfrak{B}$ and greedily optimizes in the complementary state-action pairs set.
    \end{itemize}
    
    Table~\ref{tab:comprehensiveexample} illustrates the difference between $\Pi_b$-SPIBB and $\Pi_{\leq b}$-SPIBB in the policy improvement step of the policy iteration process. It shows how the baseline probability mass is locally redistributed among the different actions for the two policy-based SPIBB algorithms. We observe that for $\Pi_b$-SPIBB, the bootstrapped state-action pairs probabilities remain untouched whatever their $Q$-value estimates are. On the contrary, $\Pi_{\leq b}$-SPIBB removes all mass from the bootstrapped state-action pairs that are performing worse than the current $Q$-value estimates.
    
    Figure \ref{fig:SPIBB} show the result of the hyper-parameter search. The mean performance improves as the safety decreases. We choose $N_\wedge=10$ in our experiments because it seems to offer the best compromise and Soft-SPIBB turns out the improve at the same time SPIBB safety and mean performance under this setting.
    
    \subsection{More Random MDPs experiment results}
	\label{sup:random_MDPs_full_results}
	We provide next more raw results of our experiments in an arbitrary (not random) order: see Figures \ref{fig:random_mdps_full_mean_heatmaps_eta}, \ref{fig:random_mdps_full_heatmaps_eta}, \ref{fig:random_mdps_full_mean_heatmaps_eta_0.9}, \ref{fig:random_mdps_full_percentile_heatmaps_eta_0.1} and \ref{fig:random_mdps_RaMDP}.

    \clearpage
    
	\begin{table*}[t!]
		\caption{Policy improvement step at iteration $(i)$ for the two policy-based SPIBB algorithms.} 
		\centering
		\setlength\tabcolsep{3pt}
		\def\arraystretch{1.6}
		\small
		\begin{tabular}{|l|l|l|l|l|l|}
			\hline
			\normalsize$Q$-value estimate & \normalsize Baseline policy & \normalsize Boostrapping & \normalsize$\Pi_b$-SPIBB & \normalsize$\Pi_{\leq b}$-SPIBB \\ 
			\hline
			$Q_{\widehat{M}}^{(i)}(x,a_1) = 1$ & $\pi_b(a_1|x) = 0.1$ & $(x,a_1)\in\mathfrak{B}$ & $\pi^{(i+1)}(a_1|x) = 0.1$ &  $\pi^{(i+1)}(a_1|x) = 0$ \\
			$Q_{\widehat{M}}^{(i)}(x,a_2) = 2$ & $\pi_b(a_2|x) = 0.4$ & $(x,a_2)\notin\mathfrak{B}$ & $\pi^{(i+1)}(a_2|x) = 0$ &  $\pi^{(i+1)}(a_2|x) = 0$ \\
			$Q_{\widehat{M}}^{(i)}(x,a_3) = 3$ & $\pi_b(a_3|x) = 0.3$ & $(x,a_3)\notin\mathfrak{B}$ & $\pi^{(i+1)}(a_3|x) = 0.7$ &  $\pi^{(i+1)}(a_3|x) = 0.8$ \\
			$Q_{\widehat{M}}^{(i)}(x,a_4) = 4$ & $\pi_b(a_4|x) = 0.2$ & $(x,a_4)\in\mathfrak{B}$ & $\pi^{(i+1)}(a_4|x) = 0.2$ &  $\pi^{(i+1)}(a_4|x) = 0.2$ \\
			\hline
		\end{tabular}
		\normalsize
		\label{tab:comprehensiveexample}
	\end{table*}
	
	\begin{figure*}[t!]
		\centering
		\subfloat[Mean $\Pi_b$-SPIBB with $\eta=0.9$]{
			\includegraphics[trim = 10pt 140pt 45pt 60pt, clip, width=0.5\textwidth]{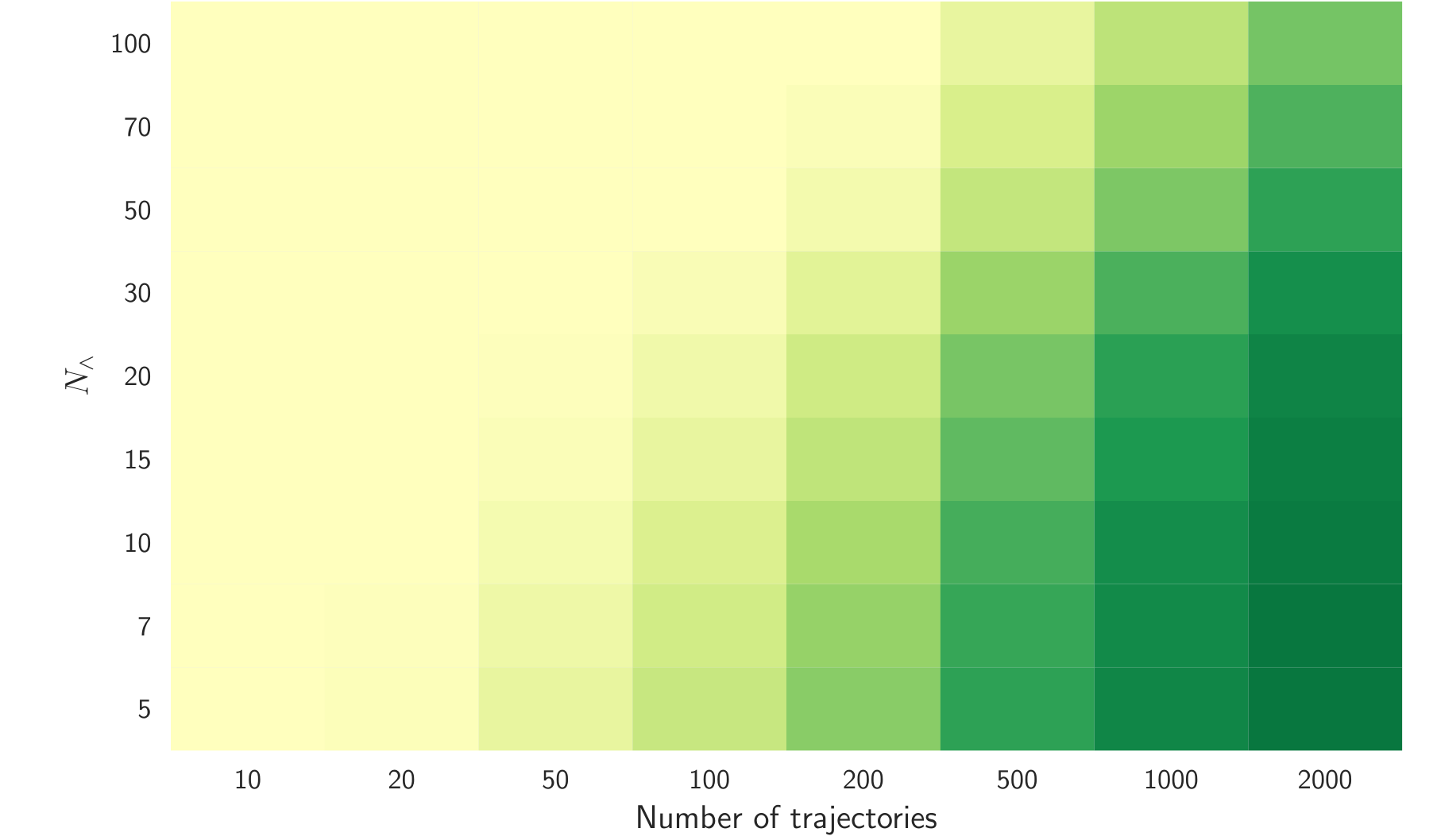}
			\label{fig:heatmap_per_arg_norm_mean_perf_Pi_b_SPIBB_ratio}
		}
		\subfloat[Mean $\Pi_{\leq b}$-SPIBB with $\eta=0.9$]{
			\includegraphics[trim = 10pt 140pt 45pt 60pt, clip, width=0.5\textwidth]{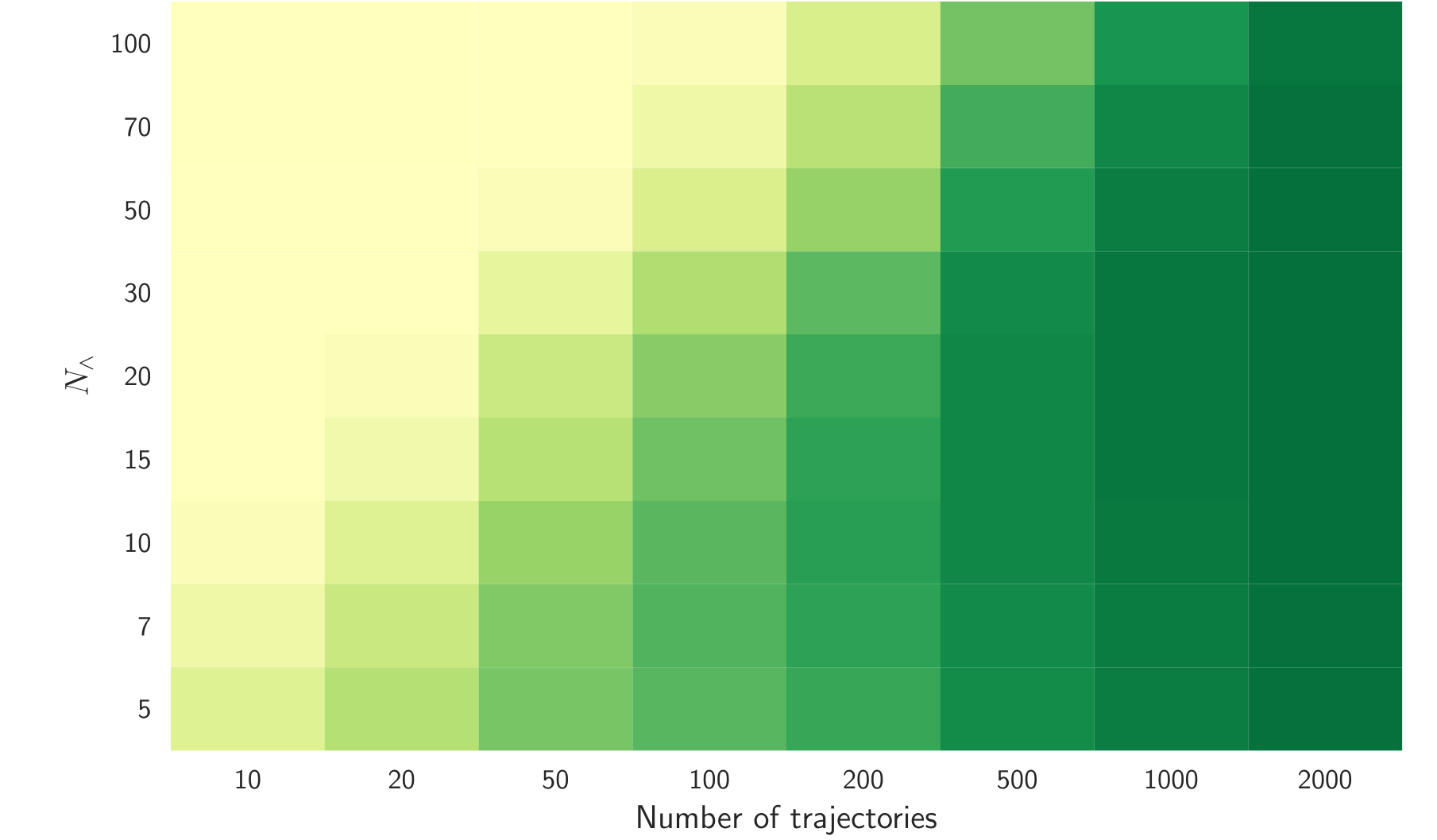}
			\label{fig:heatmap_per_arg_norm_mean_perf_Pi_leq_b_SPIBB_ratio}
		} \\
		\centering
		\subfloat[1\%-CVaR $\Pi_b$-SPIBB with $\eta=0.9$]{
			\includegraphics[trim = 10pt 140pt 45pt 60pt, clip, width=0.5\textwidth]{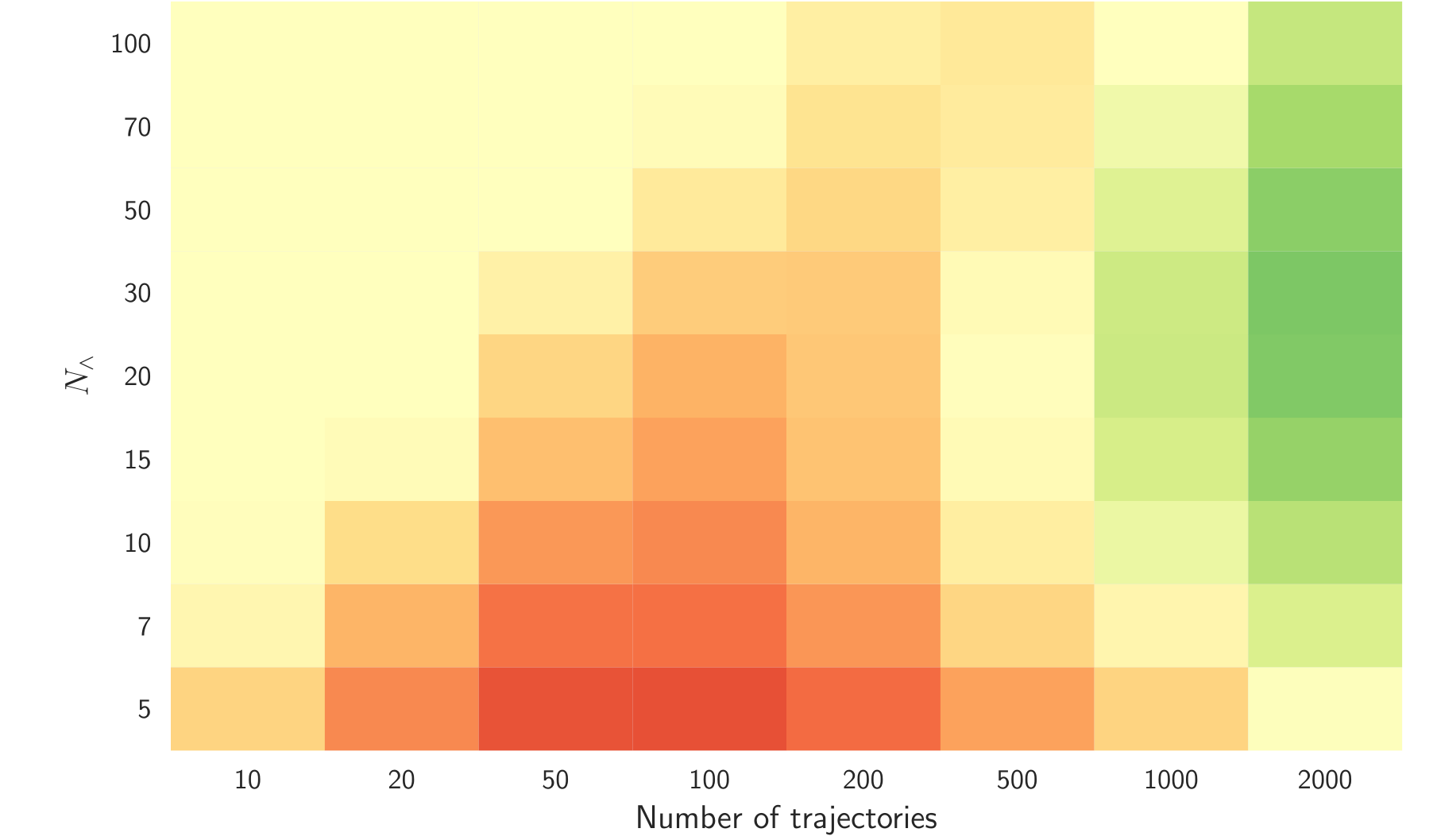}
			\label{fig:heatmap_per_arg_norm_percentile_perf_Pi_b_SPIBB_ratio}
		}
		\subfloat[1\%-CVaR $\Pi_{\leq b}$-SPIBB with $\eta=0.9$]{
			\includegraphics[trim = 10pt 140pt 45pt 60pt, clip, width=0.5\textwidth]{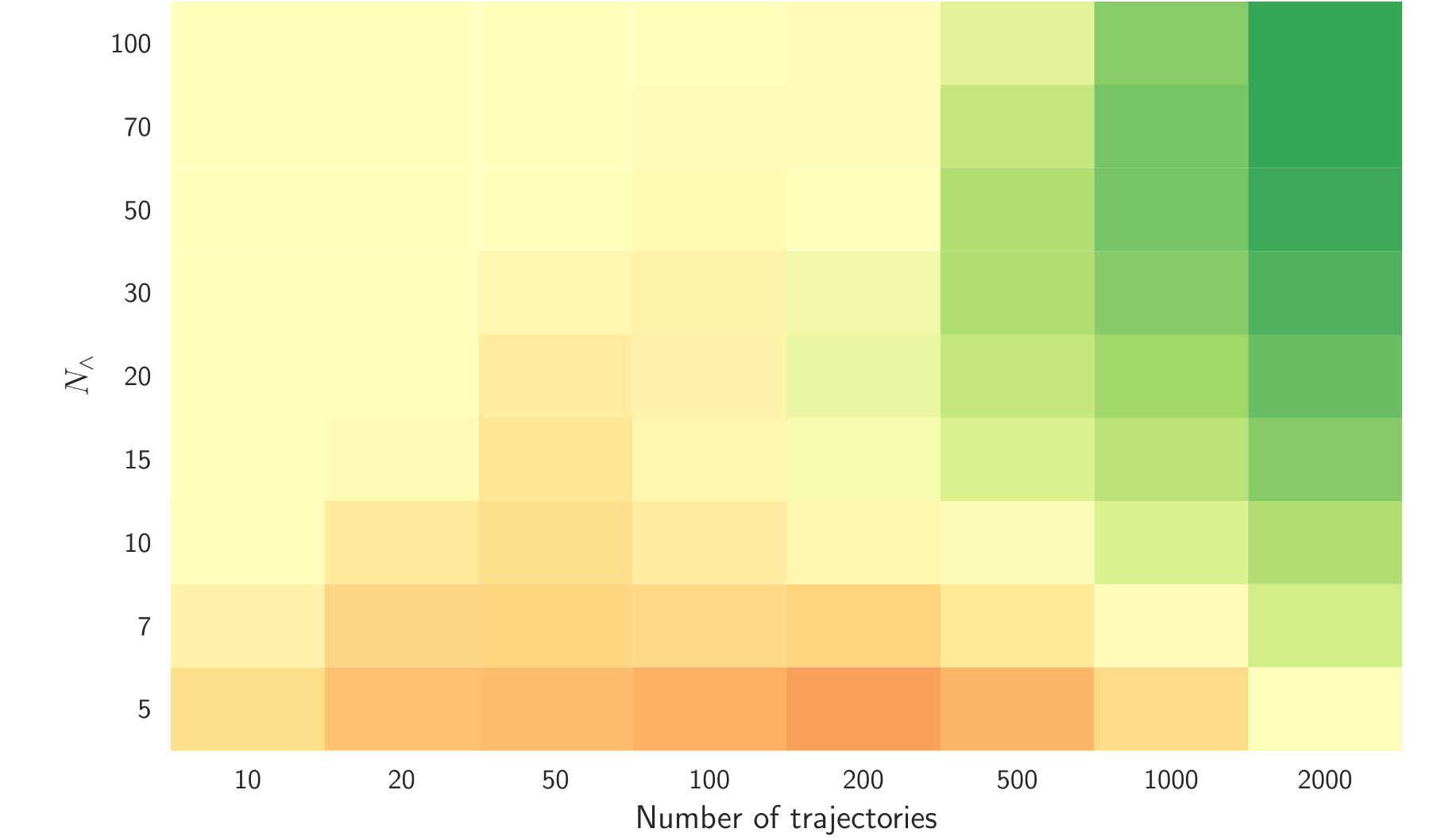}
			\label{fig:heatmap_per_arg_norm_percentile_perf_Pi_leq_b_SPIBB_ratio}
		} \\
		\centering
		\subfloat[1\%-CVaR $\Pi_b$-SPIBB ($N_\wedge=10$)]{
			\includegraphics[trim = 10pt 140pt 45pt 60pt, clip, width=0.5\textwidth]{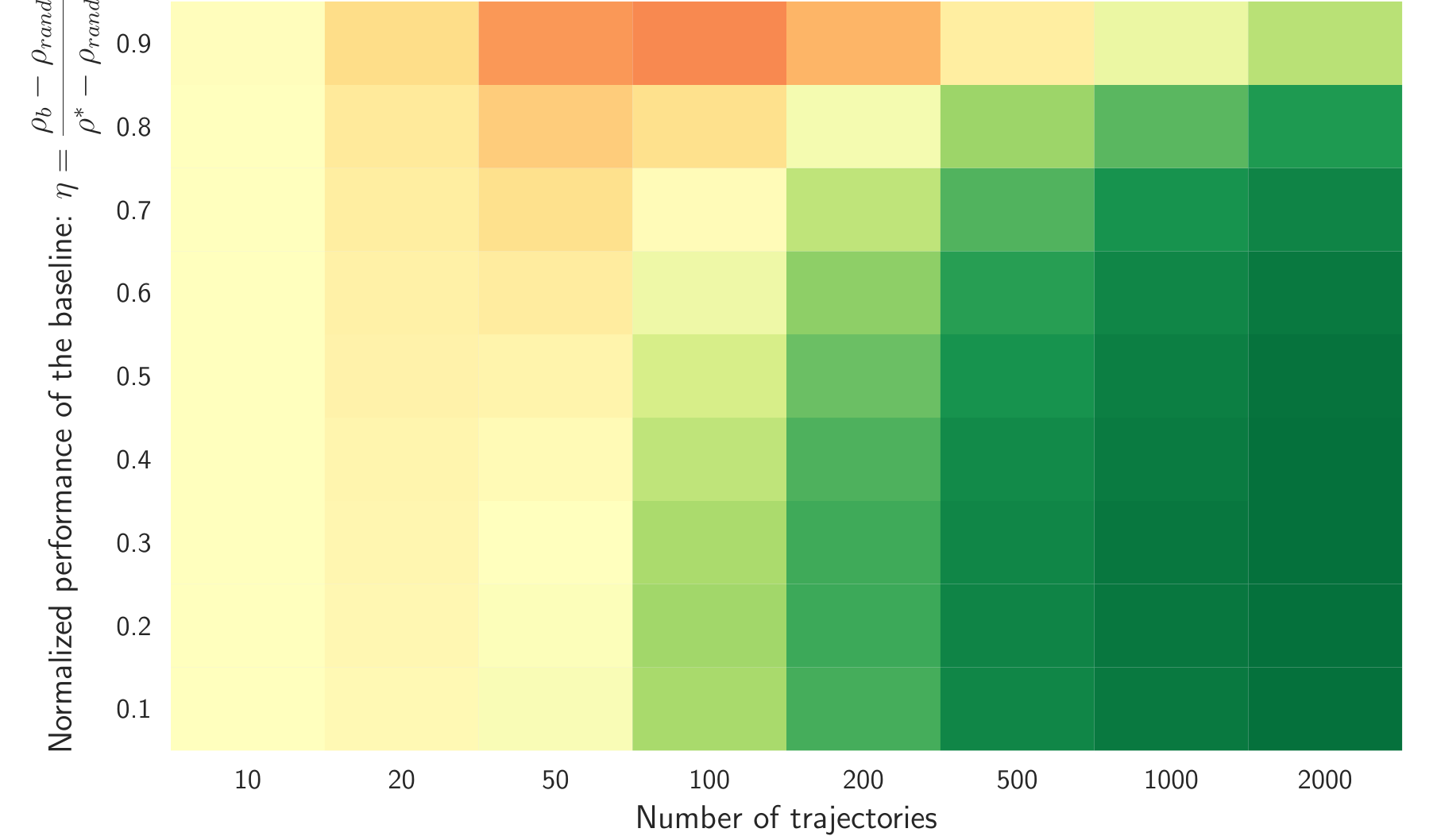}
			\label{fig:heatmap_norm_percentile_perf_Pi_b_SPIBB_N_wedge}
		}
		\subfloat[1\%-CVaR $\Pi_{\leq b}$-SPIBB ($N_\wedge=10$)]{
			\includegraphics[trim = 10pt 140pt 45pt 60pt, clip, width=0.5\textwidth]{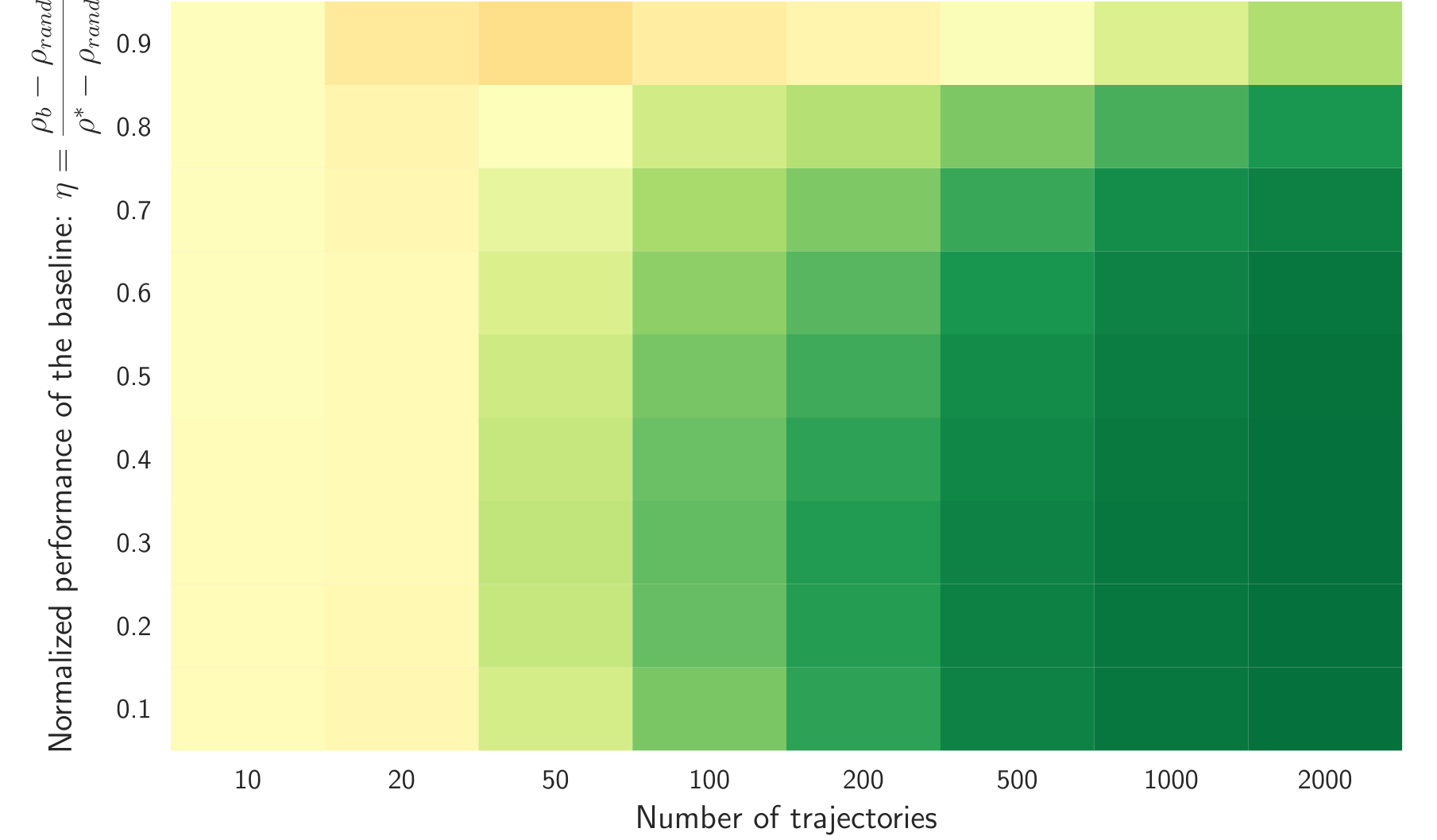}
			\label{fig:heatmap_norm_percentile_perf_Pi_leq_b_SPIBB_N_wedge}
		}
		\caption{Random MDPs (no additional goal): SPIBB hyper-parameter search results: (a-d) Mean and 1\%-CVaR performance heatmaps as a function of $N_\wedge$ (e-f) 1\%-CVaR performance heatmaps as a function of $\eta$ with the best hyper-parameter ($N_\wedge = 10$).}
		\label{fig:SPIBB}
	\end{figure*}
	
		\begin{figure*}[ht]
		\centering
		\subfloat[Mean: Basic RL.]{
			\includegraphics[trim = 10pt 140pt 45pt 60pt, clip, width=0.5\textwidth]{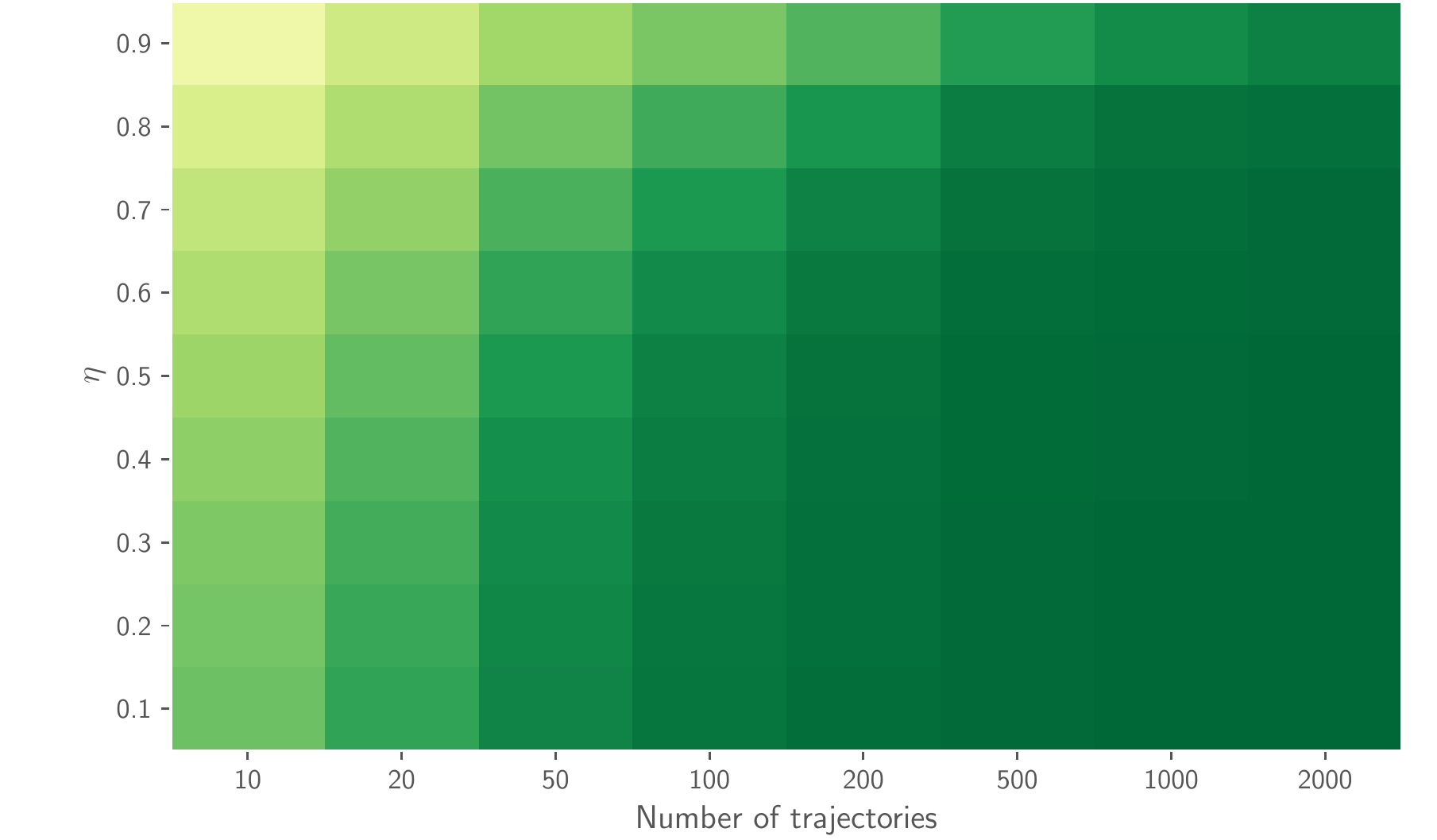}
			\label{fig:heatmap_eta_mean_perfrl}
		}
		\subfloat[Mean: RaMDP.]{
			\includegraphics[trim = 10pt 140pt 45pt 60pt, clip, width=0.5\textwidth]{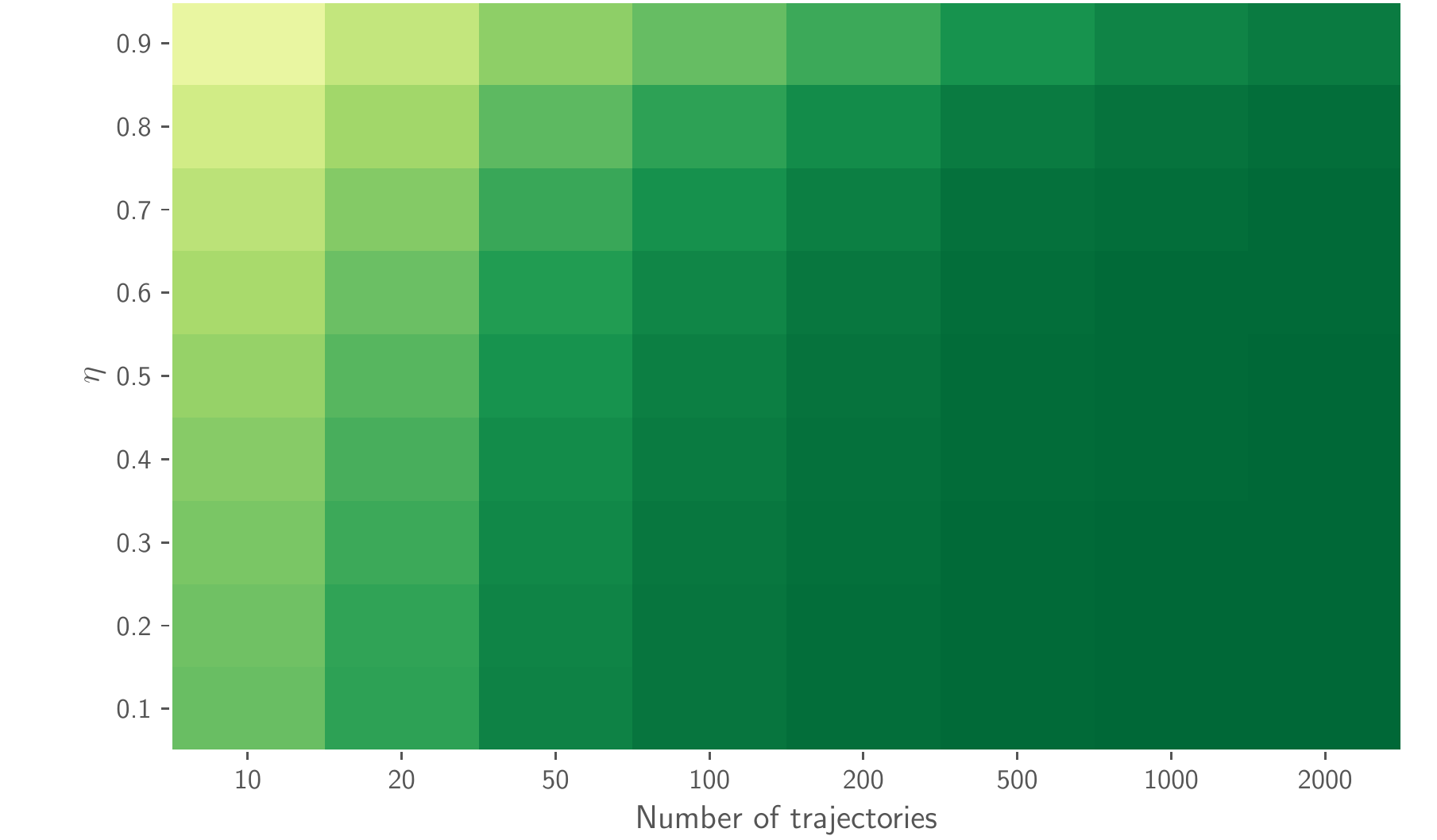}
			\label{fig:heatmap_eta_mean_ramdp}
		} \\
		\centering
		\subfloat[Mean: Robust MDP.]{
			\includegraphics[trim = 10pt 140pt 45pt 60pt, clip, width=0.5\textwidth]{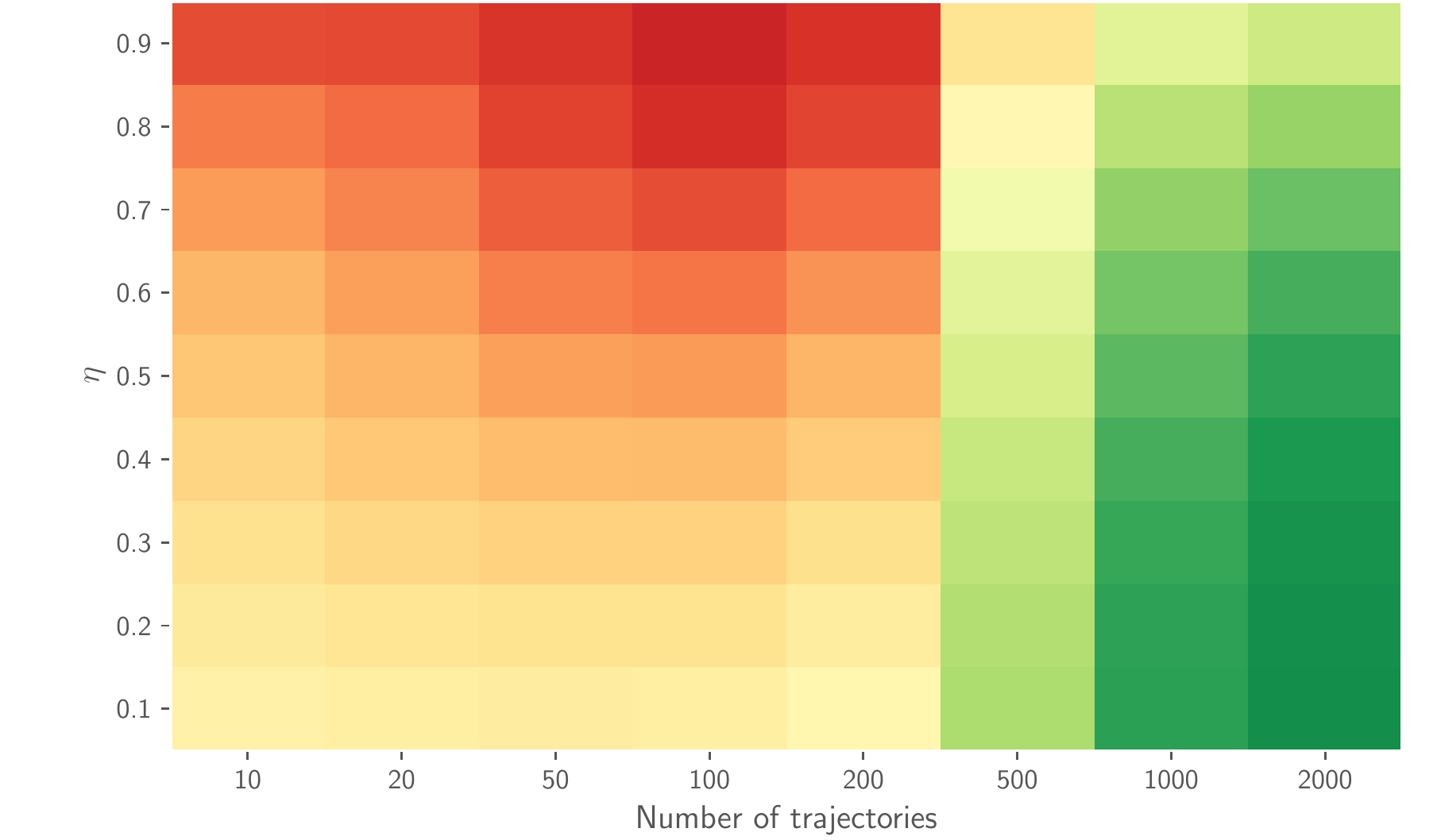}
			\label{fig:heatmap_eta_mean_rmdp}
		}
		\subfloat[Mean: HCPI doubly robust.]{
			\includegraphics[trim = 10pt 140pt 45pt 60pt, clip, width=0.5\textwidth]{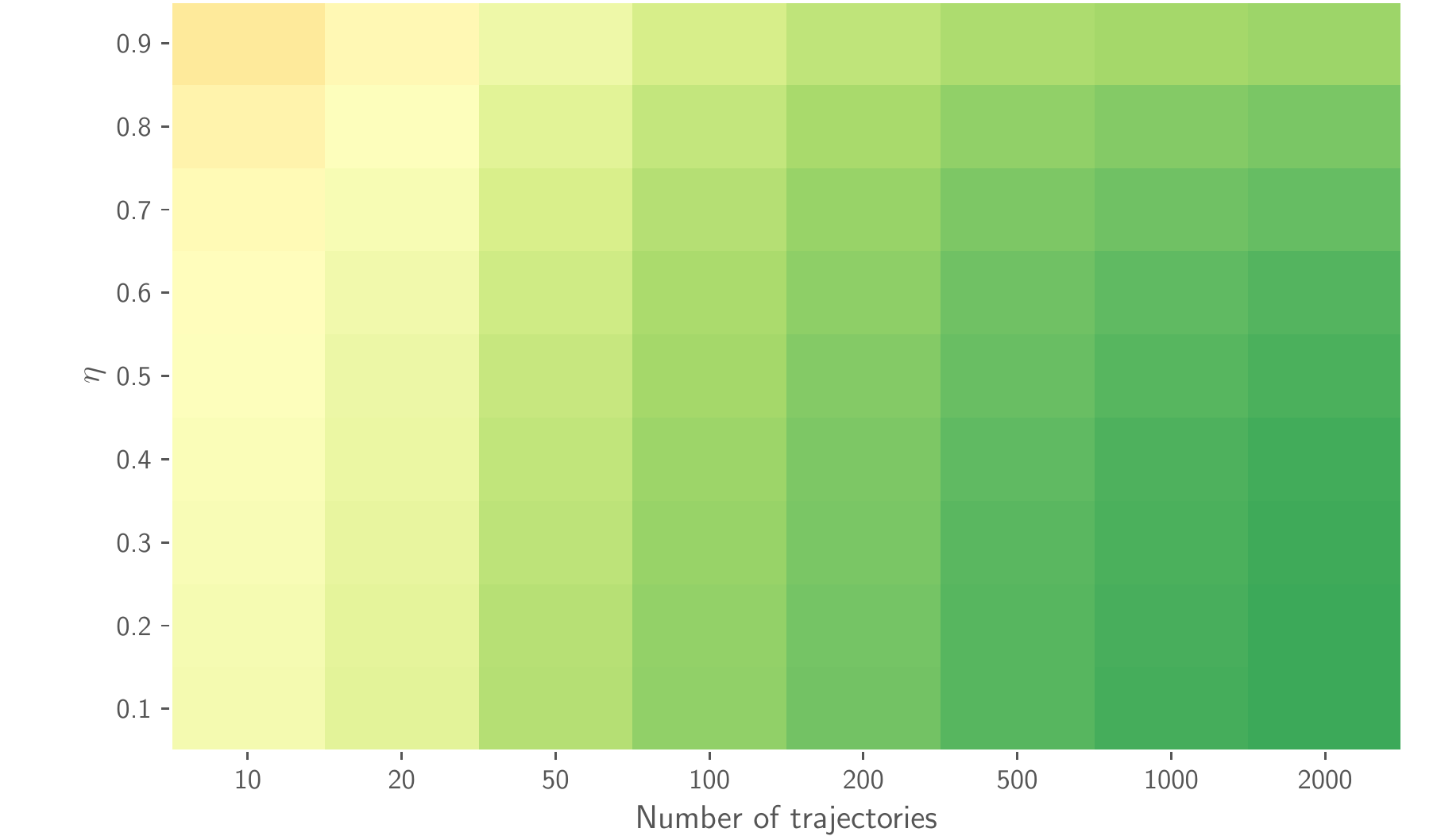}
			\label{fig:heatmap_eta_mean_hcpi}
		}\\
		\centering
		\subfloat[Mean: $\Pi_b$-SPIBB.]{
			\includegraphics[trim = 10pt 140pt 45pt 60pt, clip, width=0.5\textwidth]{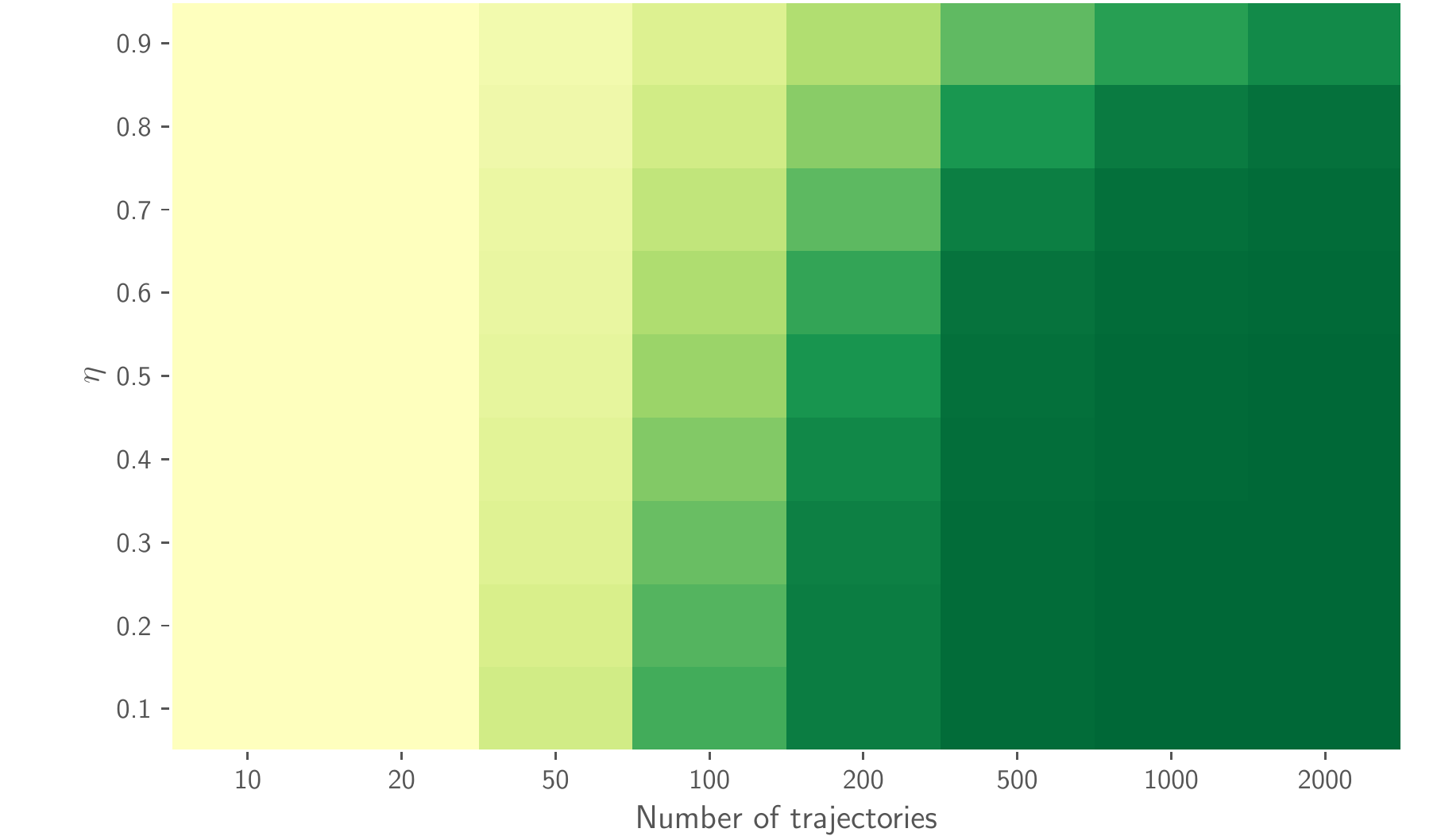}
			\label{fig:heatmap_eta_mean_pib}
		}
		\subfloat[Mean: $\Pi_{\leq b}$-SPIBB.]{
			\includegraphics[trim = 10pt 140pt 45pt 60pt, clip, width=0.5\textwidth]{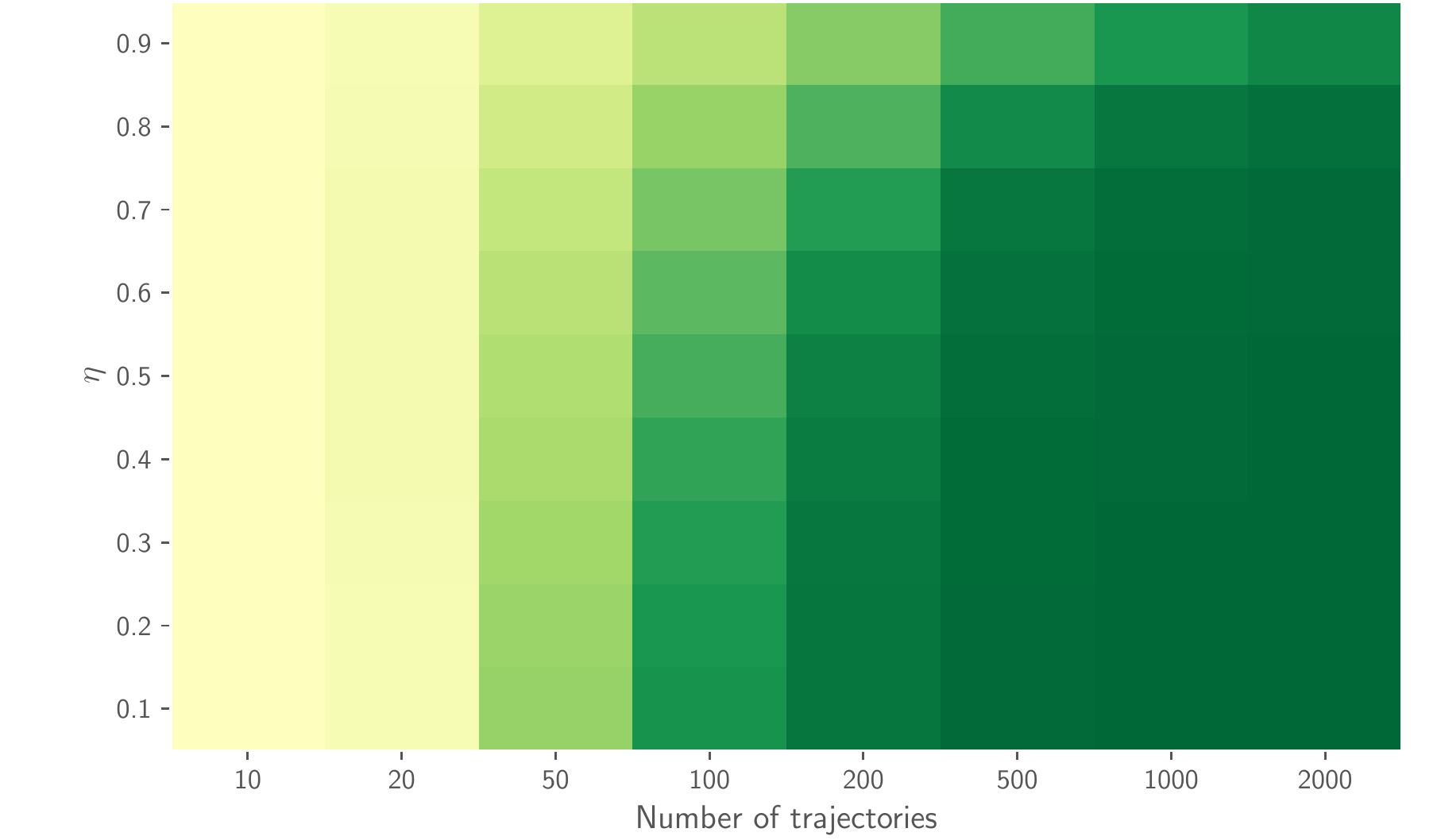}
			\label{fig:heatmap_eta_mean_pi_leq_b}
		} \\
		\centering
		\subfloat[Mean: Exact-Soft-SPIBB 1-step ($\epsilon=2$).]{
			\includegraphics[trim = 10pt 140pt 45pt 60pt, clip, width=0.5\textwidth]{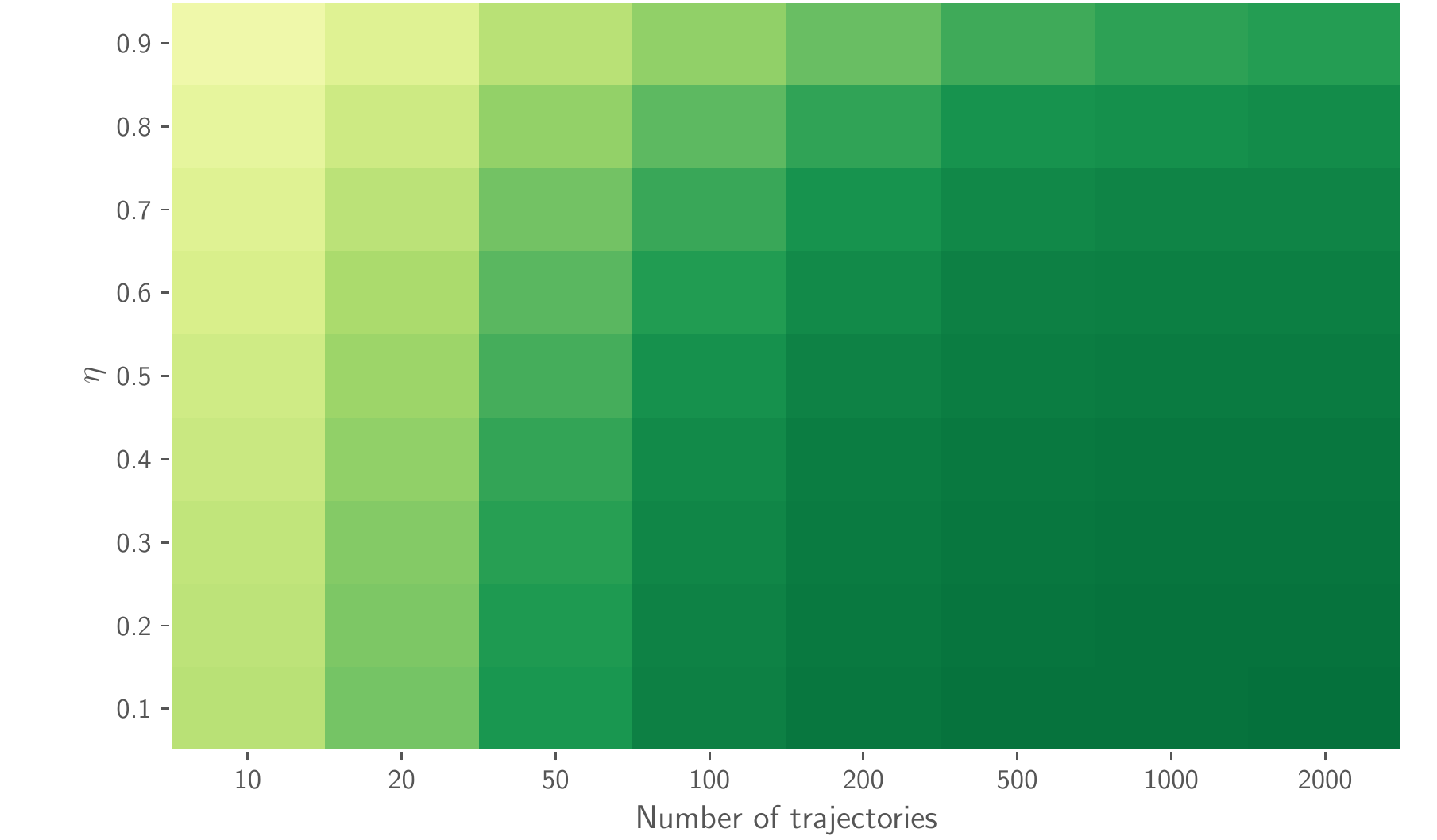}
			\label{fig:heatmap_eta_mean_exact_soft_spibb_1step_eps_2}
		}
		\subfloat[Mean: Exact-Soft-SPIBB ($\epsilon=2$).]{
			\includegraphics[trim = 10pt 140pt 45pt 60pt, clip, width=0.5\textwidth]{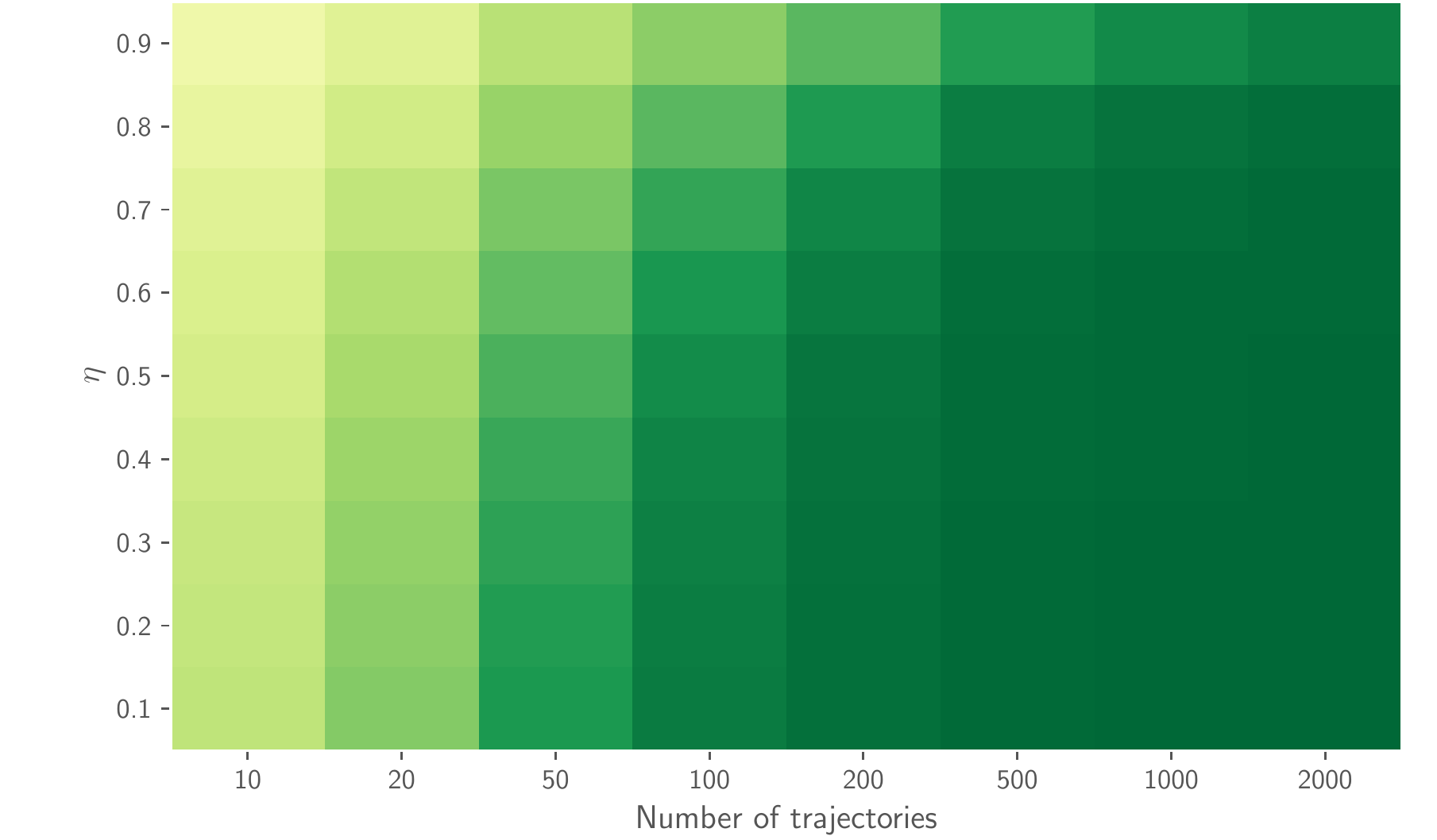}
			\label{fig:heatmap_eta_mean_exact_soft_spibb_eps_2}
		} \\
		\centering
		\subfloat[Mean: Approx-Soft-SPIBB 1-step ($\epsilon=2$).]{
			\includegraphics[trim = 10pt 140pt 45pt 60pt, clip, width=0.5\textwidth]{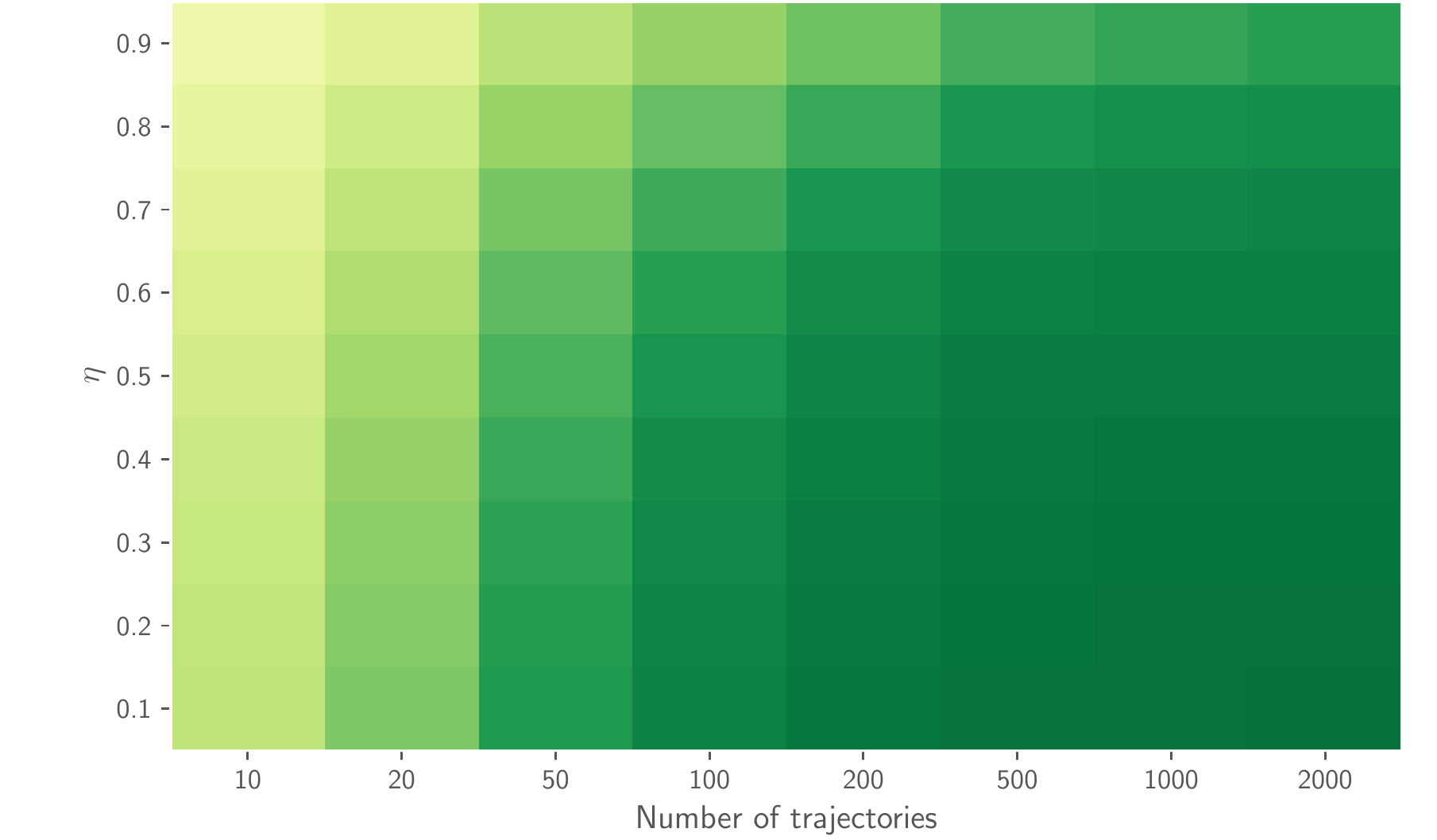}
			\label{fig:heatmap_eta_mean_approx_soft_spibb_1step_eps_2}
		}
		\subfloat[Mean: Approx-Soft-SPIBB  ($\epsilon=2$).]{
			\includegraphics[trim = 10pt 140pt 45pt 60pt, clip, width=0.5\textwidth]{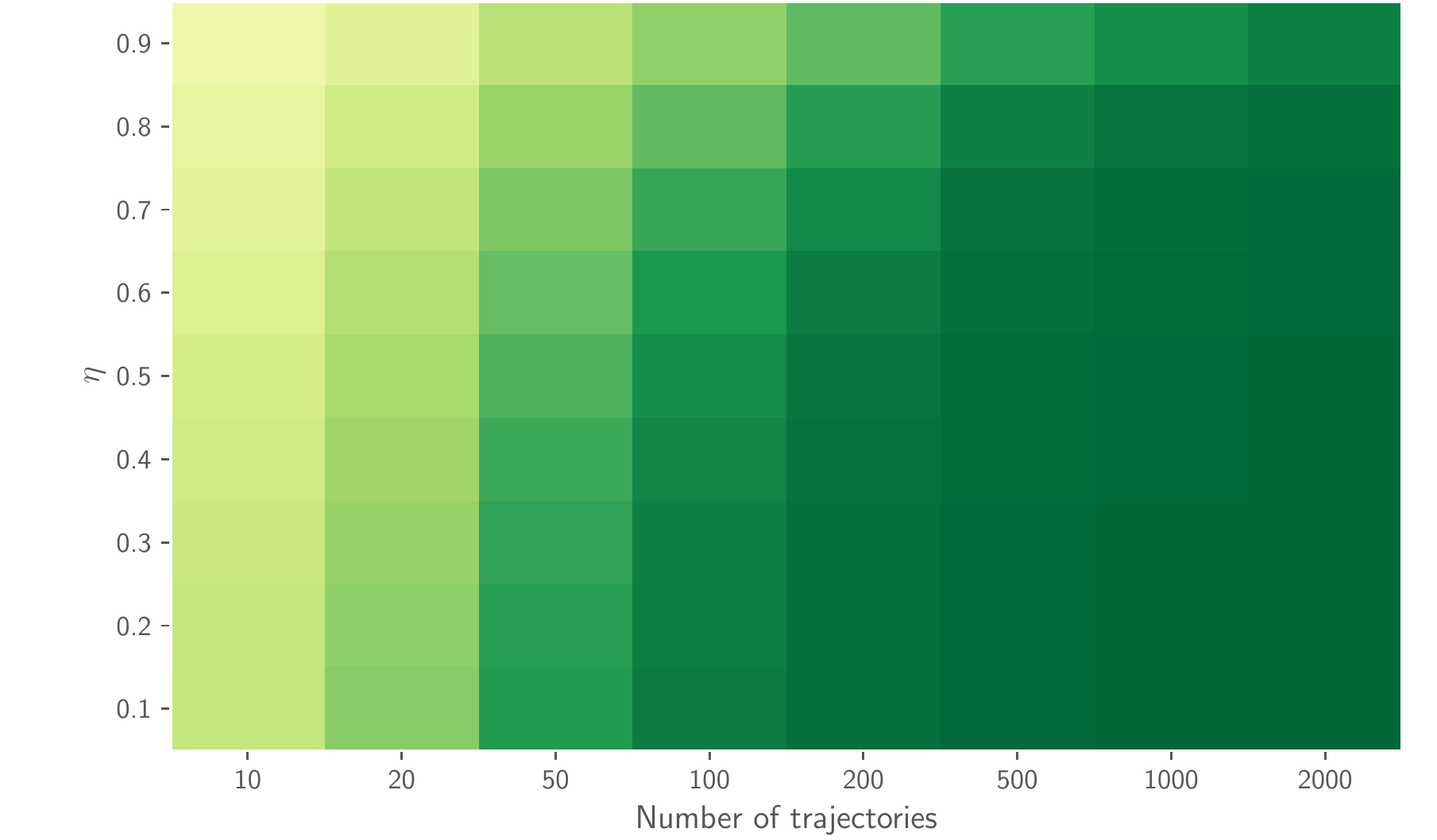}
			\label{fig:heatmap_eta_mean_approx_soft_spibb_eps_2}
		}
		\caption{Random MDPs: mean performance heatmaps.}
		\label{fig:random_mdps_full_mean_heatmaps_eta}
	\end{figure*}
	
	\begin{figure*}[t!]
		\centering
		\subfloat[0.1\%-CVaR: Basic RL.]{
			\includegraphics[trim = 10pt 140pt 45pt 60pt, clip, width=0.5\textwidth]{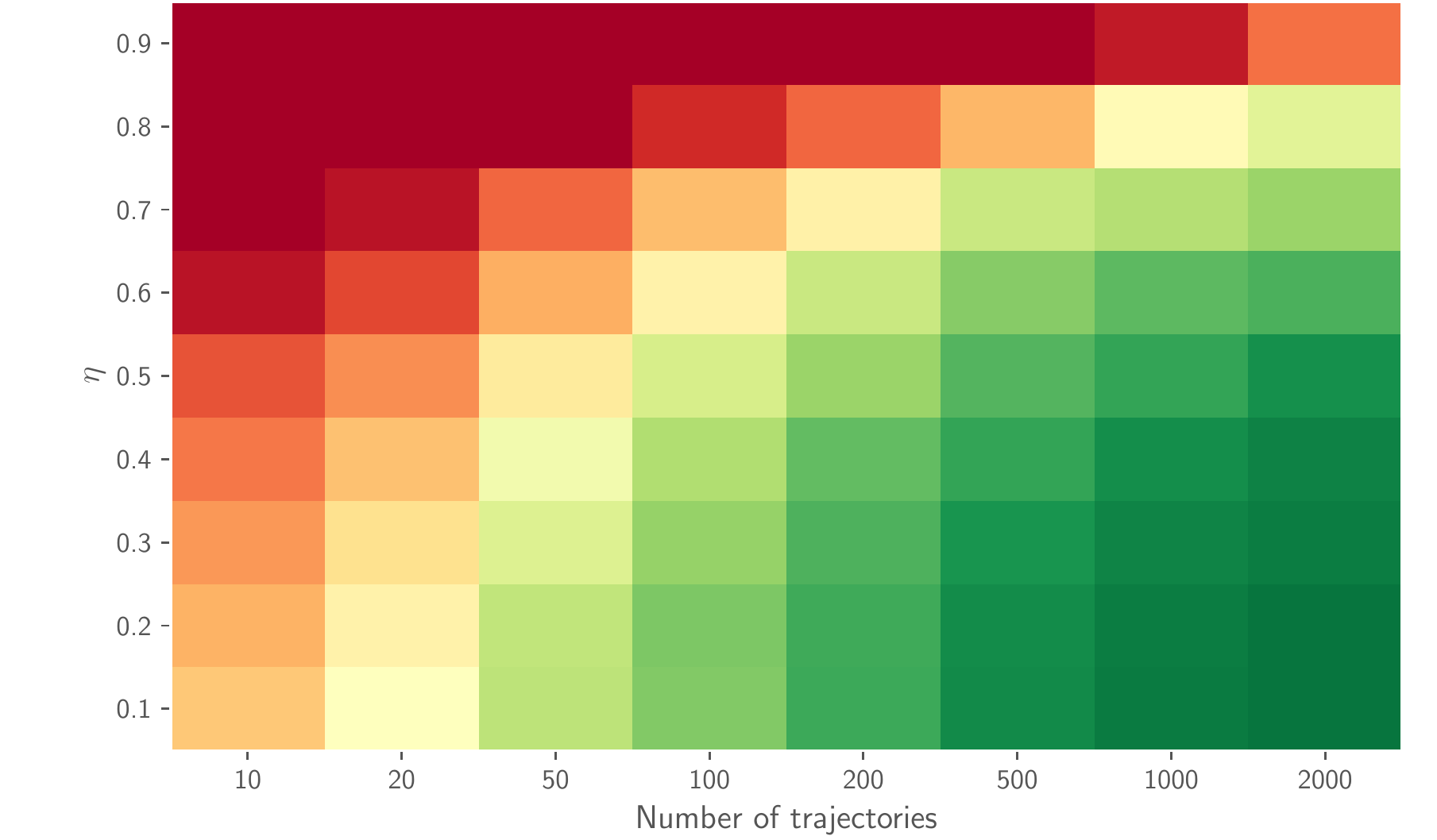}
			\label{fig:heatmap_eta_permille_perfrl}
		}
		\subfloat[0.1\%-CVaR: RaMDP.]{
			\includegraphics[trim = 10pt 140pt 45pt 60pt, clip, width=0.5\textwidth]{{figures/heatmaps_eta/heatmap_norm_permille_perf_RaMDP_epsilon=2}.pdf}
			\label{fig:heatmap_eta_permille_ramdp}
		} \\
		\centering
		\subfloat[0.1\%-CVaR: Robust MDP.]{
			\includegraphics[trim = 10pt 140pt 45pt 60pt, clip, width=0.5\textwidth]{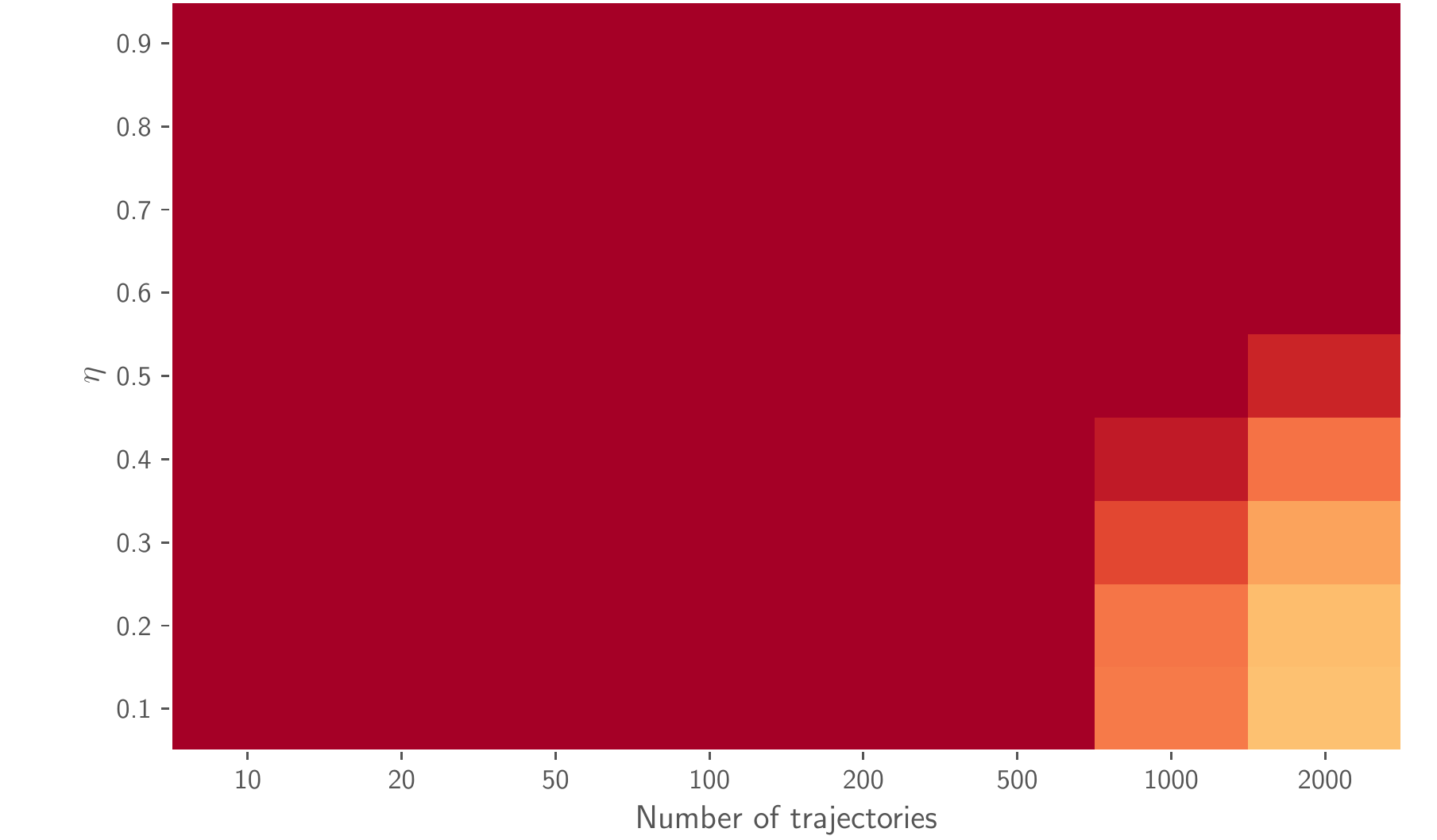}
			\label{fig:heatmap_eta_permille_rmdp}
		}
		\subfloat[0.1\%-CVaR: HCPI doubly robust.]{
			\includegraphics[trim = 10pt 140pt 45pt 60pt, clip, width=0.5\textwidth]{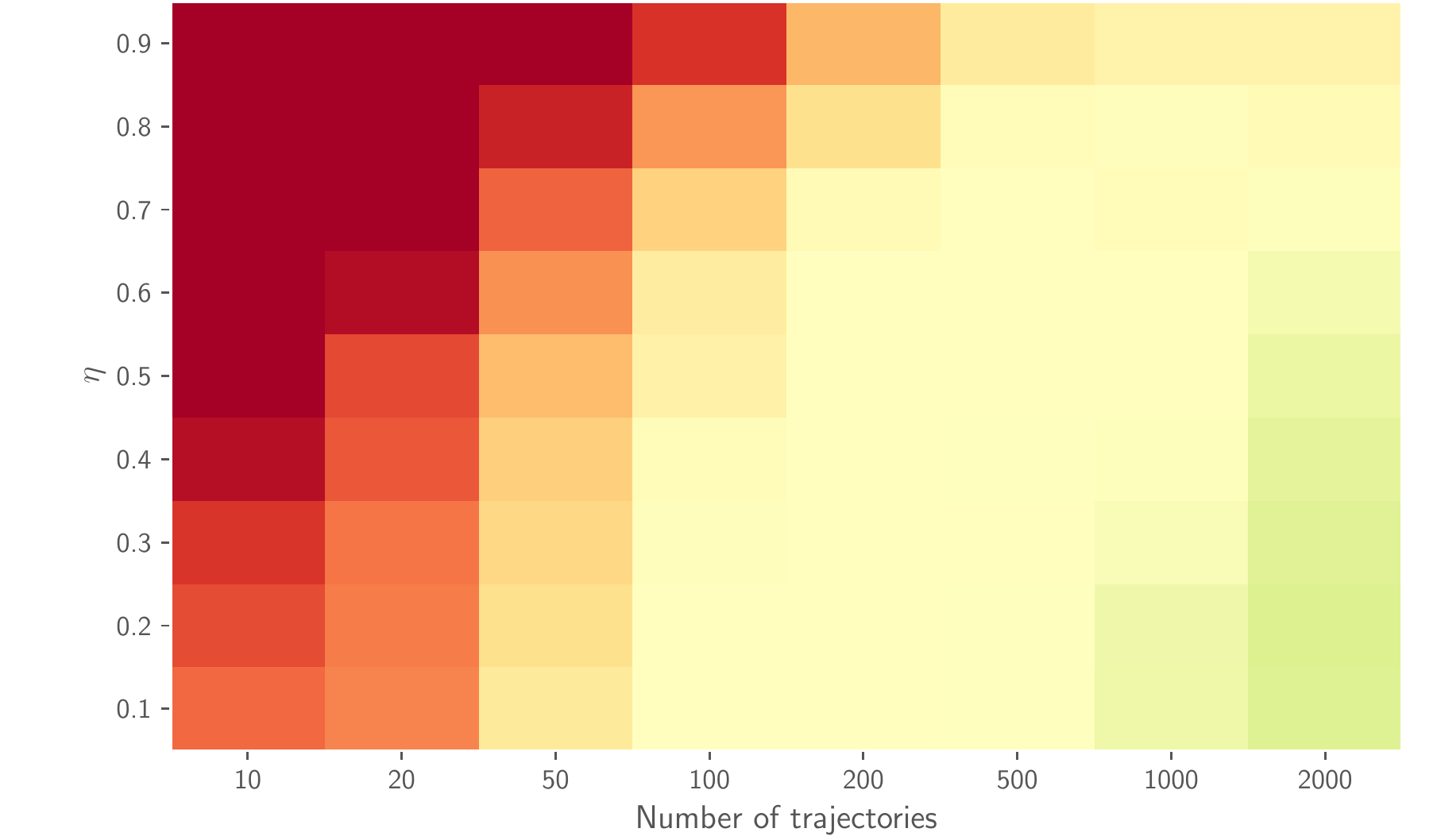}
			\label{fig:heatmap_eta_permille_hcpi}
		} \\
		\centering
		\subfloat[0.1\%-CVaR: $\Pi_b$-SPIBB.]{
			\includegraphics[trim = 10pt 140pt 45pt 60pt, clip, width=0.5\textwidth]{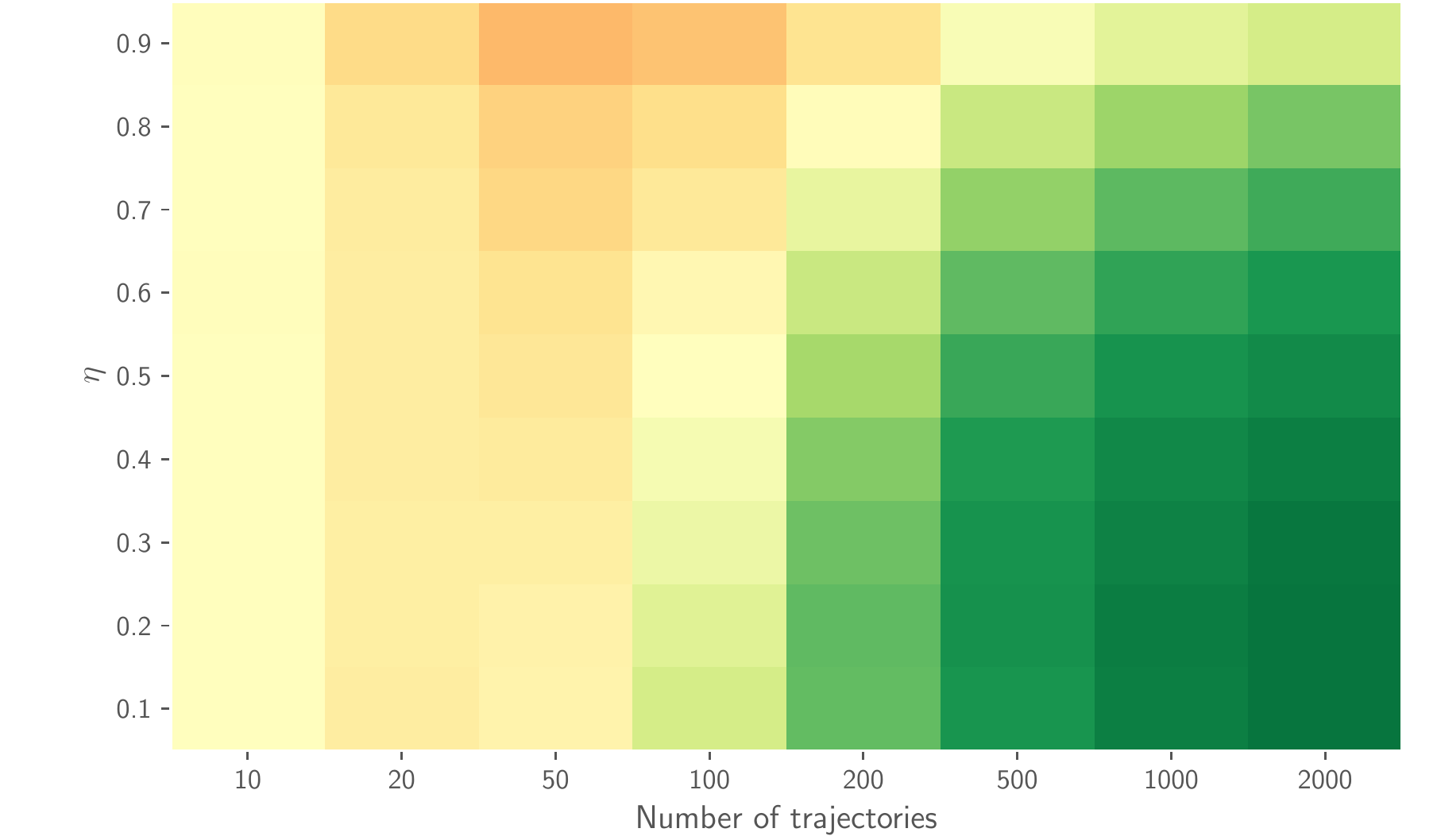}
			\label{fig:heatmap_eta_permille_pib}
		}
		\subfloat[0.1\%-CVaR: $\Pi_{\leq b}$-SPIBB.]{
			\includegraphics[trim = 10pt 140pt 45pt 60pt, clip, width=0.5\textwidth]{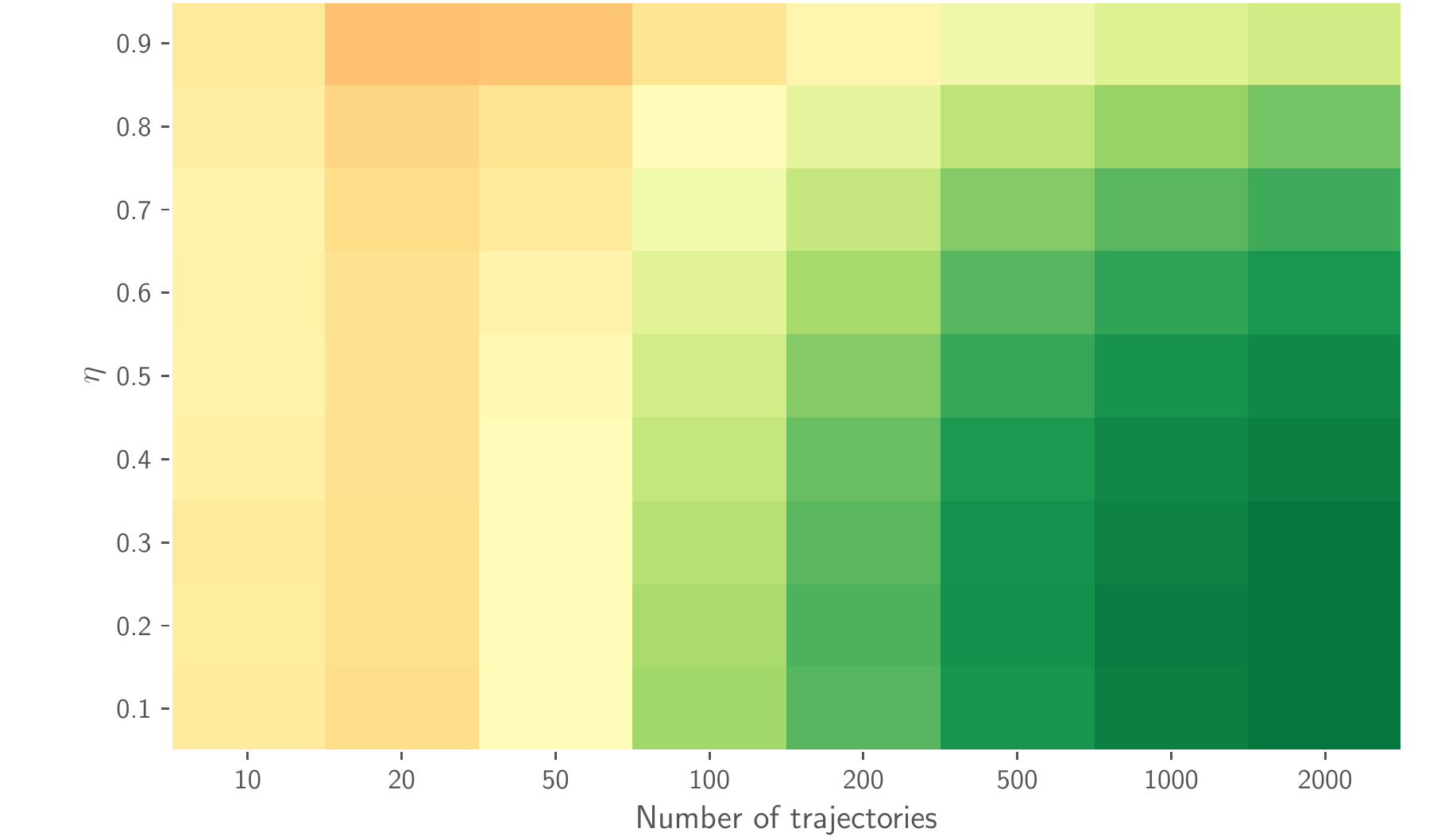}
			\label{fig:heatmap_eta_permille_pi_leq_b}
		} \\
		\centering
		\subfloat[0.1\%-CVaR: Exact-Soft-SPIBB 1-s ($\epsilon=2$).]{
			\includegraphics[trim = 10pt 140pt 45pt 60pt, clip, width=0.5\textwidth]{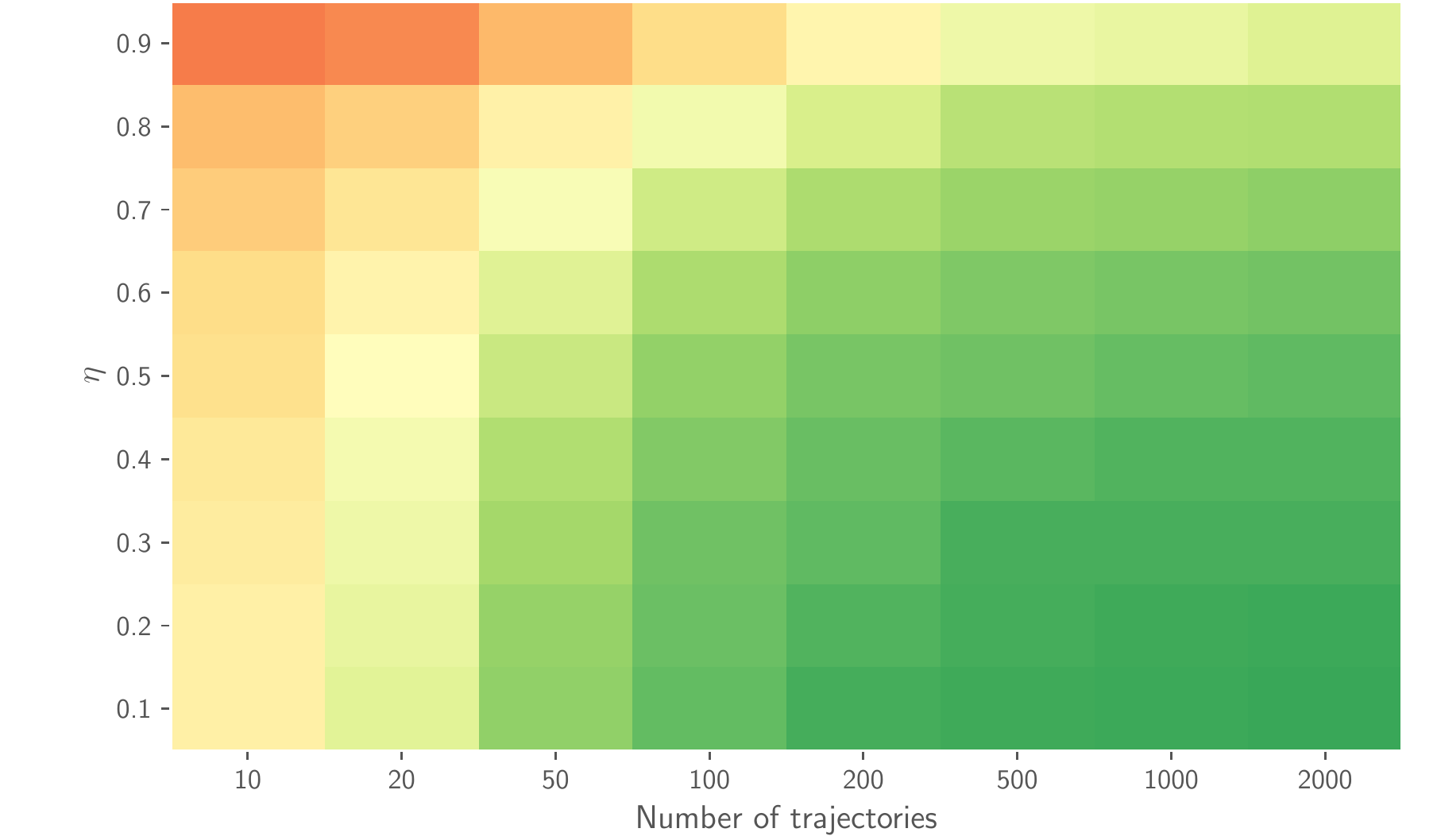}
			\label{fig:heatmap_eta_permille_exact_soft_spibb_1step_eps_2}
		}
		\subfloat[0.1\%-CVaR: Exact-Soft-SPIBB ($\epsilon=2$).]{
			\includegraphics[trim = 10pt 140pt 45pt 60pt, clip, width=0.5\textwidth]{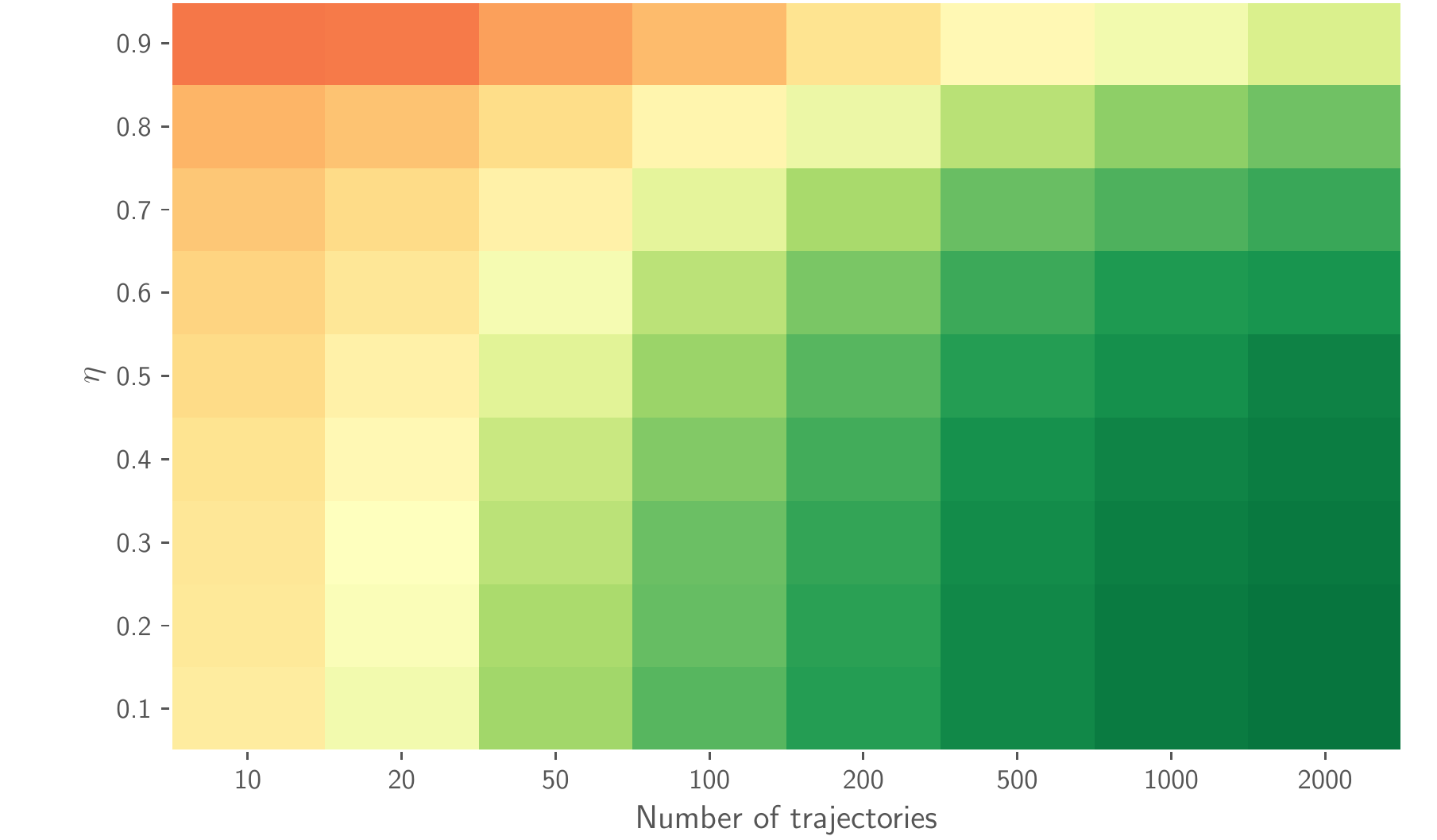}
			\label{fig:heatmap_eta_permille_exact_soft_spibb_eps_2}
		} \\
		\centering
		\subfloat[0.1\%-CVaR: Approx-Soft-SPIBB 1-s ($\epsilon=2$).]{
			\includegraphics[trim = 10pt 140pt 45pt 60pt, clip, width=0.5\textwidth]{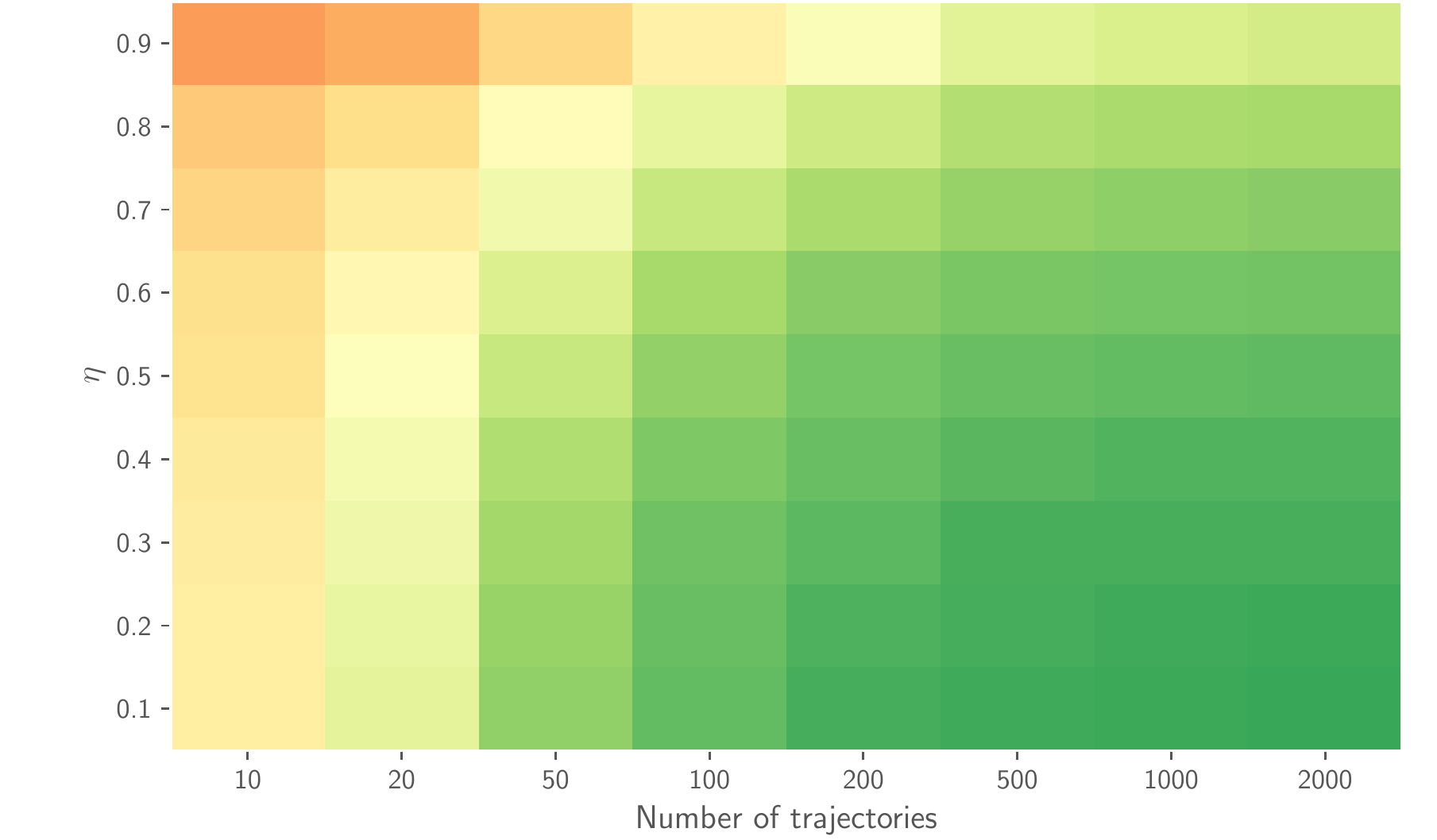}
			\label{fig:heatmap_eta_permille_approx_soft_spibb_1step_eps_2}
		}
		\subfloat[0.1\%-CVaR: Approx-Soft-SPIBB  ($\epsilon=2$).]{
			\includegraphics[trim = 10pt 140pt 45pt 60pt, clip, width=0.5\textwidth]{{figures/heatmaps_eta/heatmap_norm_permille_perf_soft_SPIBB_sort_Q_epsilon=2}.pdf}
			\label{fig:heatmap_eta_permille_approx_soft_spibb_eps_2}
		}
		\caption{Random MDPs: 0.1\%-CVaR performance heatmaps.}
		\label{fig:random_mdps_full_heatmaps_eta}
	\end{figure*}
	
	\begin{figure*}[t!]
		\centering
		\subfloat[Mean: Exact-Soft-SPIBB 1-step ($\eta=0.1$).]{
			\includegraphics[trim = 10pt 140pt 45pt 60pt, clip, width=0.5\textwidth]{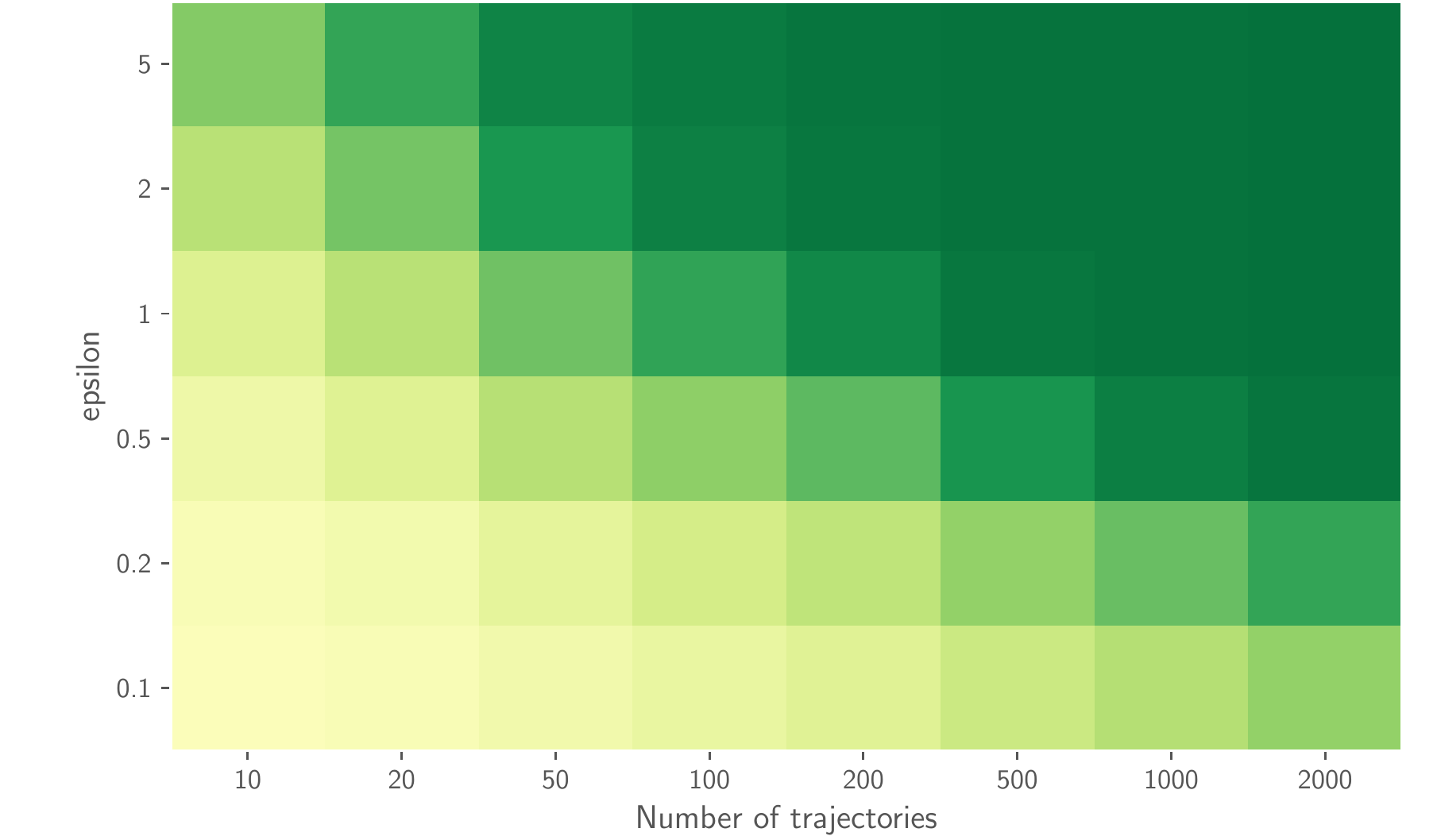}
			\label{fig:heatmap_eps_mean_exact_soft_spibb_1step_eta_0.1}
		}
		\subfloat[Mean: Approx-Soft-SPIBB 1-step ($\eta=0.1$).]{
			\includegraphics[trim = 10pt 140pt 45pt 60pt, clip, width=0.5\textwidth]{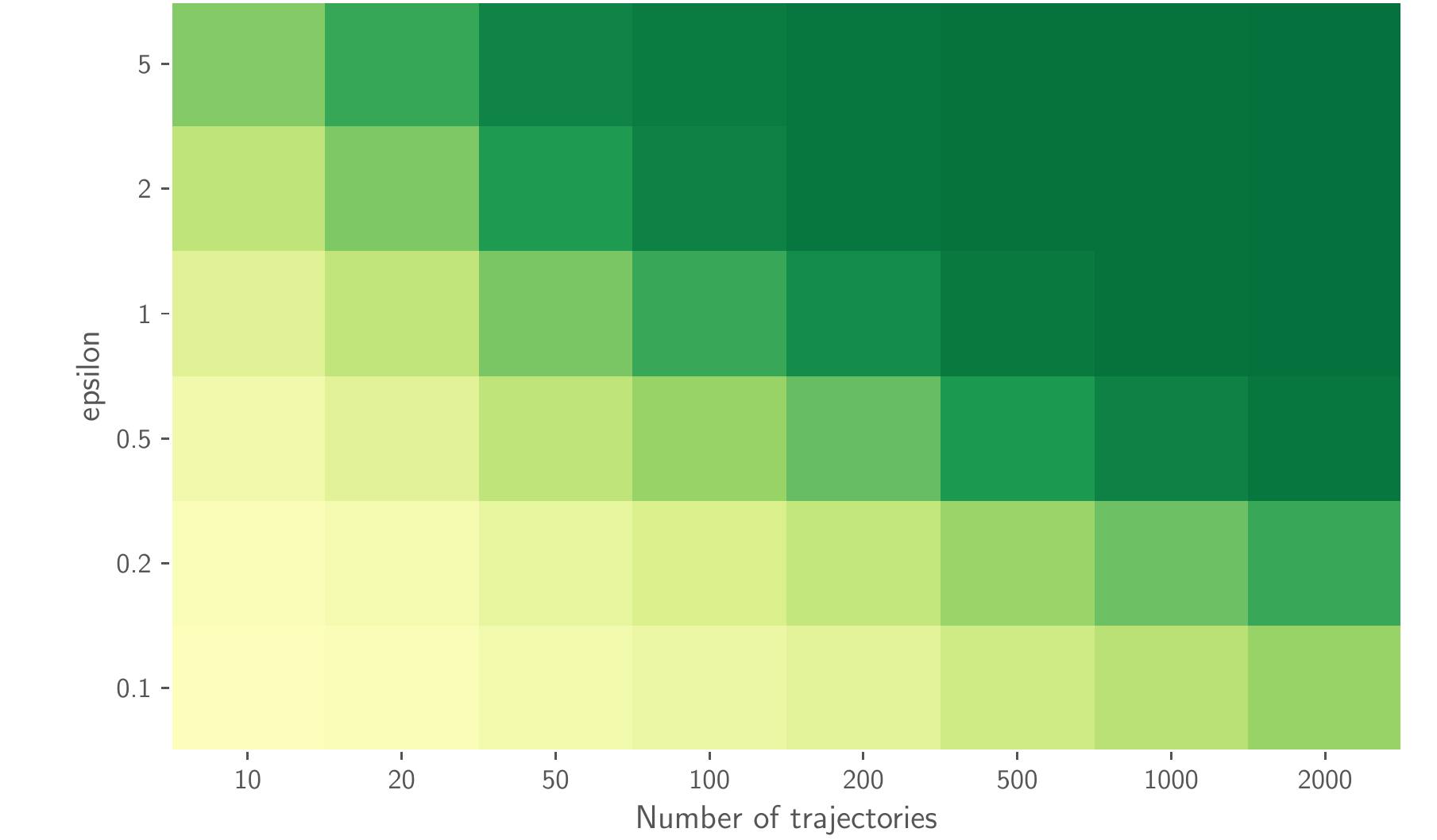}
			\label{fig:heatmap_eps_mean_approx_soft_spibb_1step_eta_0.1}
		} \\
		\centering
		\subfloat[Mean: Exact-Soft-SPIBB ($\eta=0.1$).]{
			\includegraphics[trim = 10pt 140pt 45pt 60pt, clip, width=0.5\textwidth]{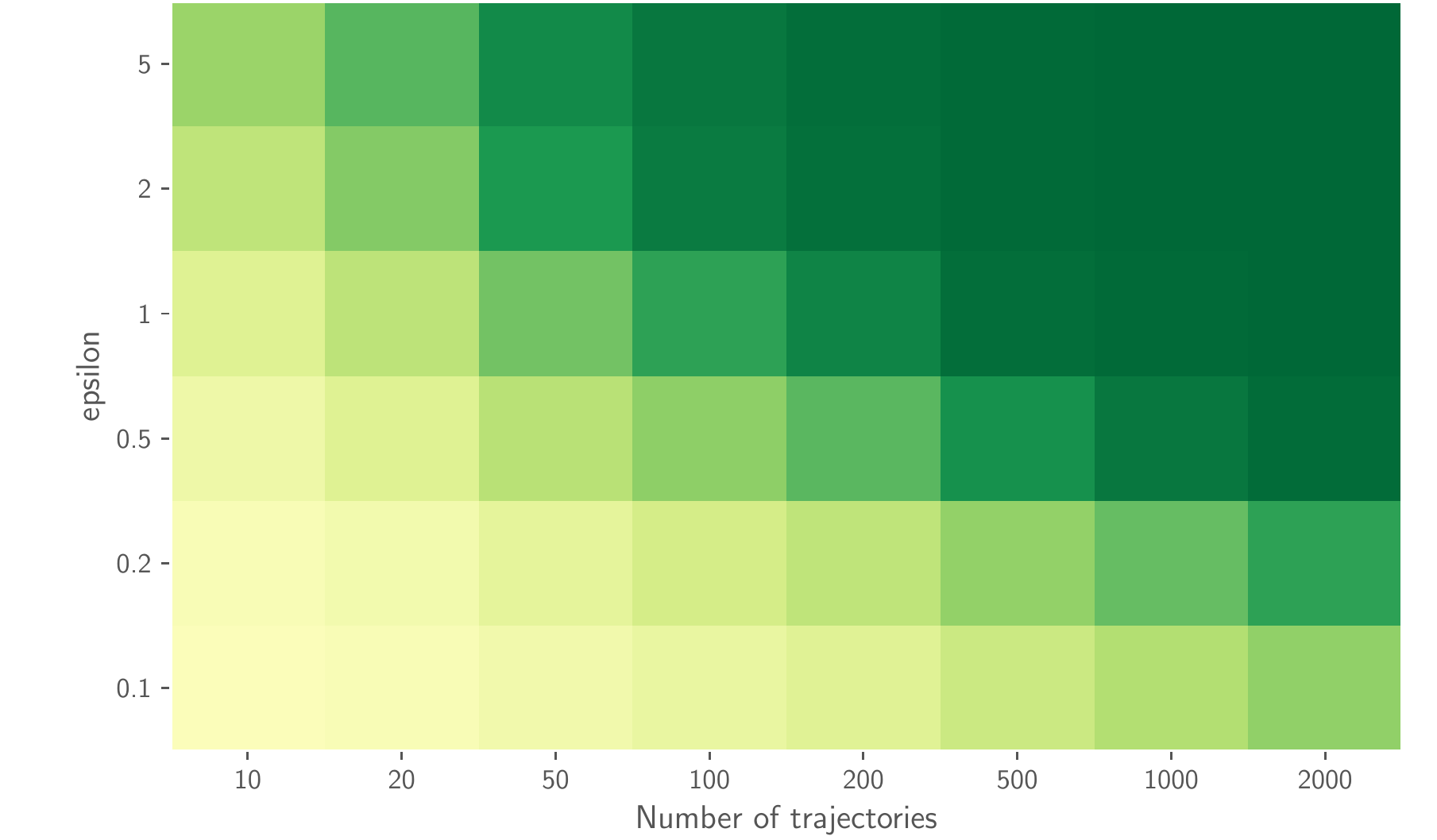}
			\label{fig:heatmap_eps_mean_exact_soft_spibb_eta_0.1}
		}
		\subfloat[Mean: Approx-Soft-SPIBB ($\eta=0.1$).]{
			\includegraphics[trim = 10pt 140pt 45pt 60pt, clip, width=0.5\textwidth]{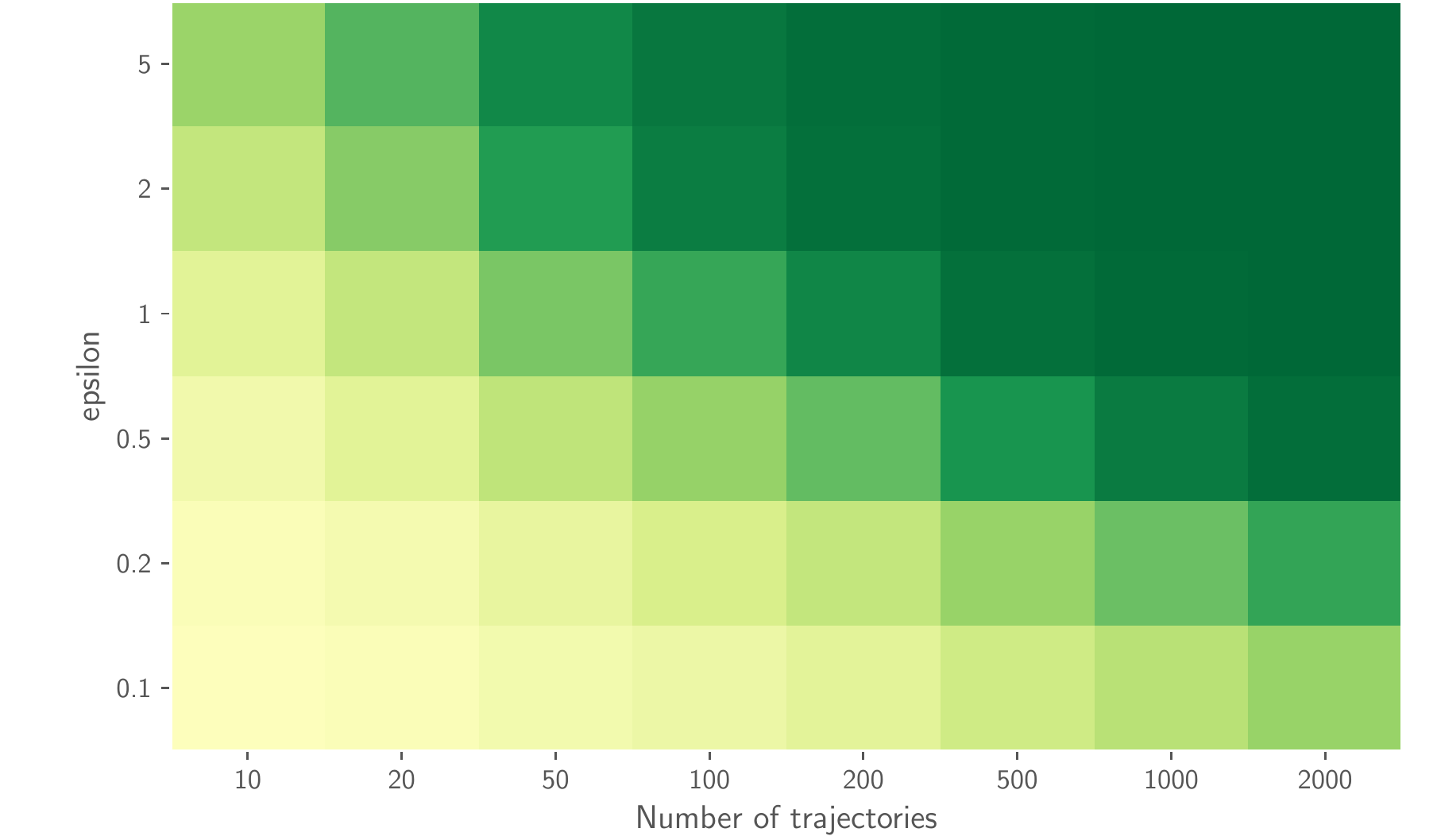}
			\label{fig:heatmap_eps_mean_approx_soft_spibb_eta_0.1}
		} \\
		\centering
		\subfloat[Mean: Exact-Soft-SPIBB 1-step ($\eta=0.9$).]{
			\includegraphics[trim = 10pt 140pt 45pt 60pt, clip, width=0.5\textwidth]{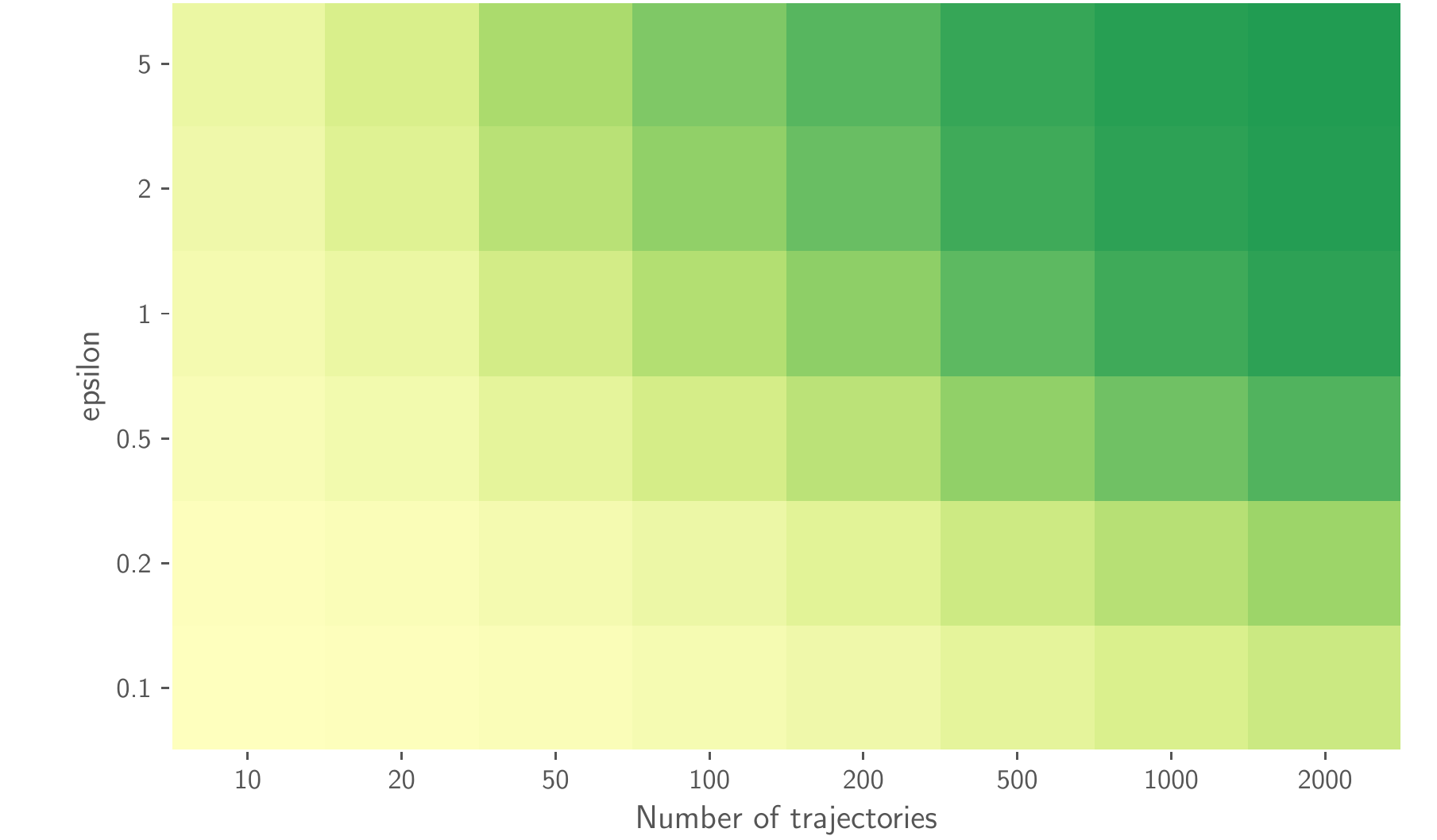}
			\label{fig:heatmap_eps_mean_exact_soft_spibb_1step_eta_0.9}
		}
		\subfloat[Mean: Approx-Soft-SPIBB 1-step ($\eta=0.9$).]{
			\includegraphics[trim = 10pt 140pt 45pt 60pt, clip, width=0.5\textwidth]{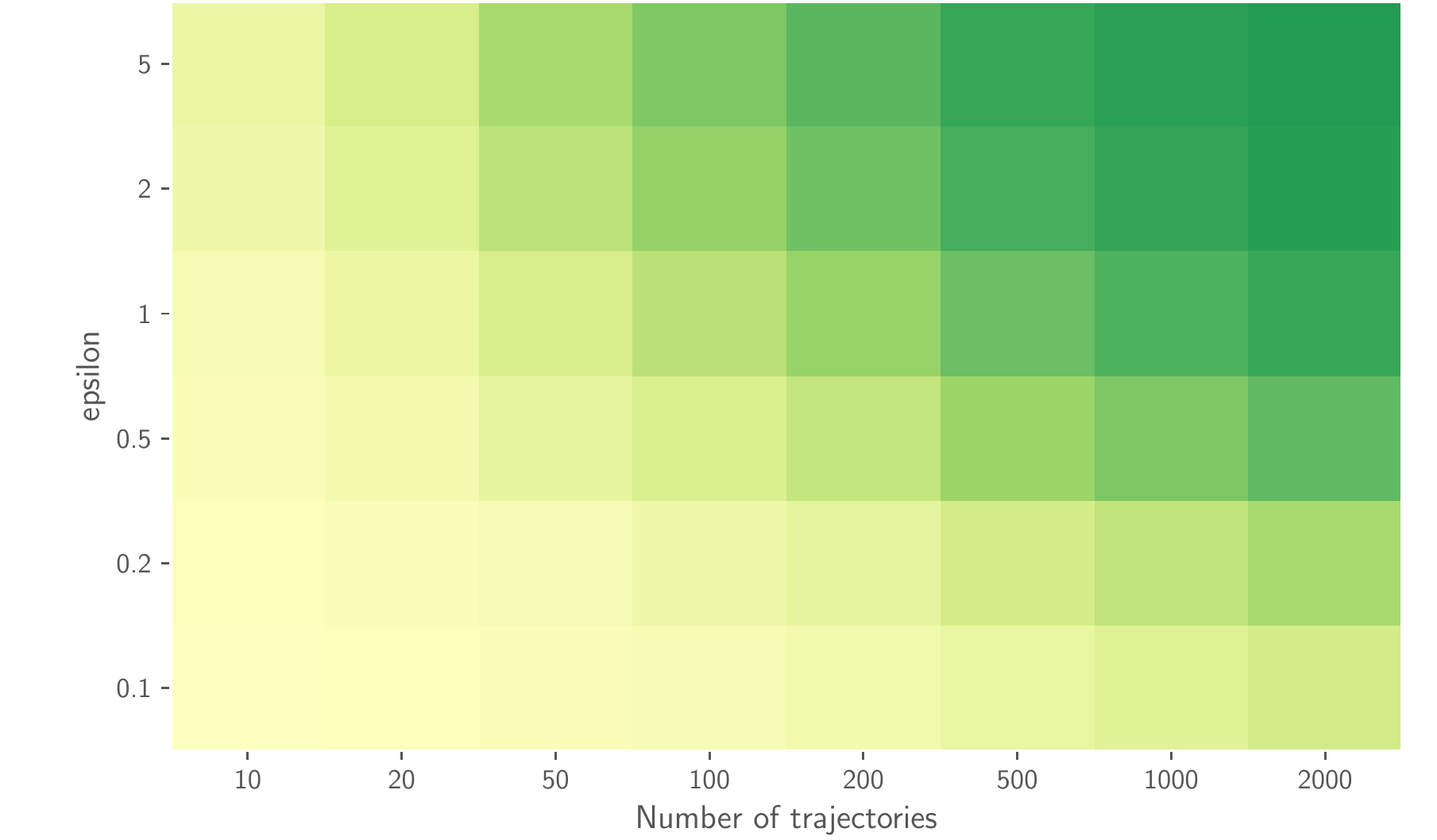}
			\label{fig:heatmap_eps_mean_approx_soft_spibb_1step_eta_0.9}
		} \\
		\centering
		\subfloat[Mean: Exact-Soft-SPIBB ($\eta=0.9$).]{
			\includegraphics[trim = 10pt 140pt 45pt 60pt, clip, width=0.5\textwidth]{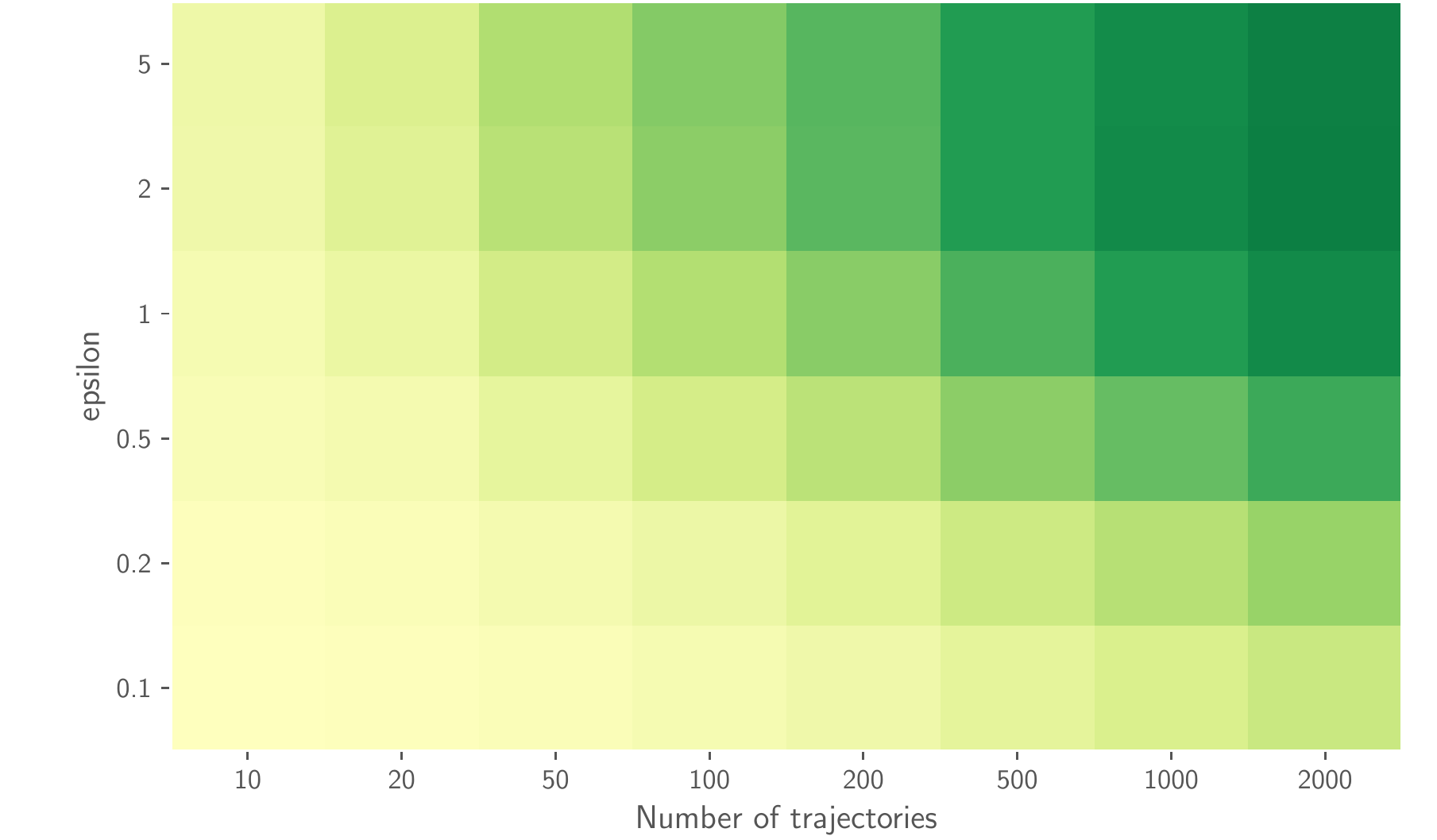}
			\label{fig:heatmap_eps_mean_exact_soft_spibb_eta_0.9}
		}
		\subfloat[Mean: Approx-Soft-SPIBB ($\eta=0.9$).]{
			\includegraphics[trim = 10pt 140pt 45pt 60pt, clip, width=0.5\textwidth]{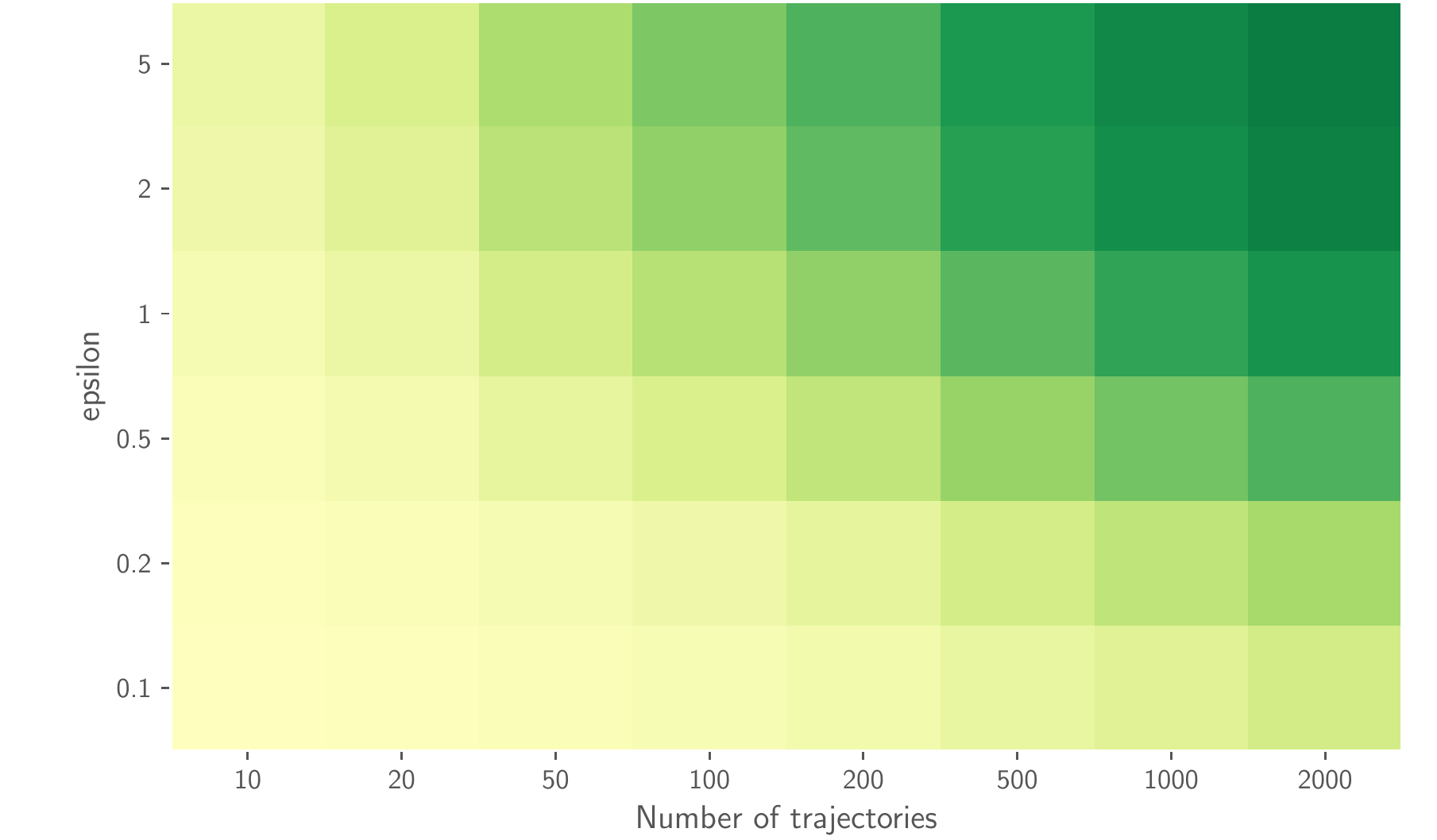}
			\label{fig:heatmap_eps_mean_approx_soft_spibb_eta_0.9}
		}
		\caption{Random MDPs: hyper-parameter mean performance heatmaps for Soft-SPIBB methods under a weak ($\eta = 0.1$) and a strong ($\eta = 0.9$) baseline.}
		\label{fig:random_mdps_full_mean_heatmaps_eta_0.9}
	\end{figure*}

	\begin{figure*}[t!]
		\centering
		\subfloat[1\%-CVaR: Exact-Soft-SPIBB 1-s ($\eta=0.1$).]{
			\includegraphics[trim = 10pt 140pt 45pt 60pt, clip, width=0.5\textwidth]{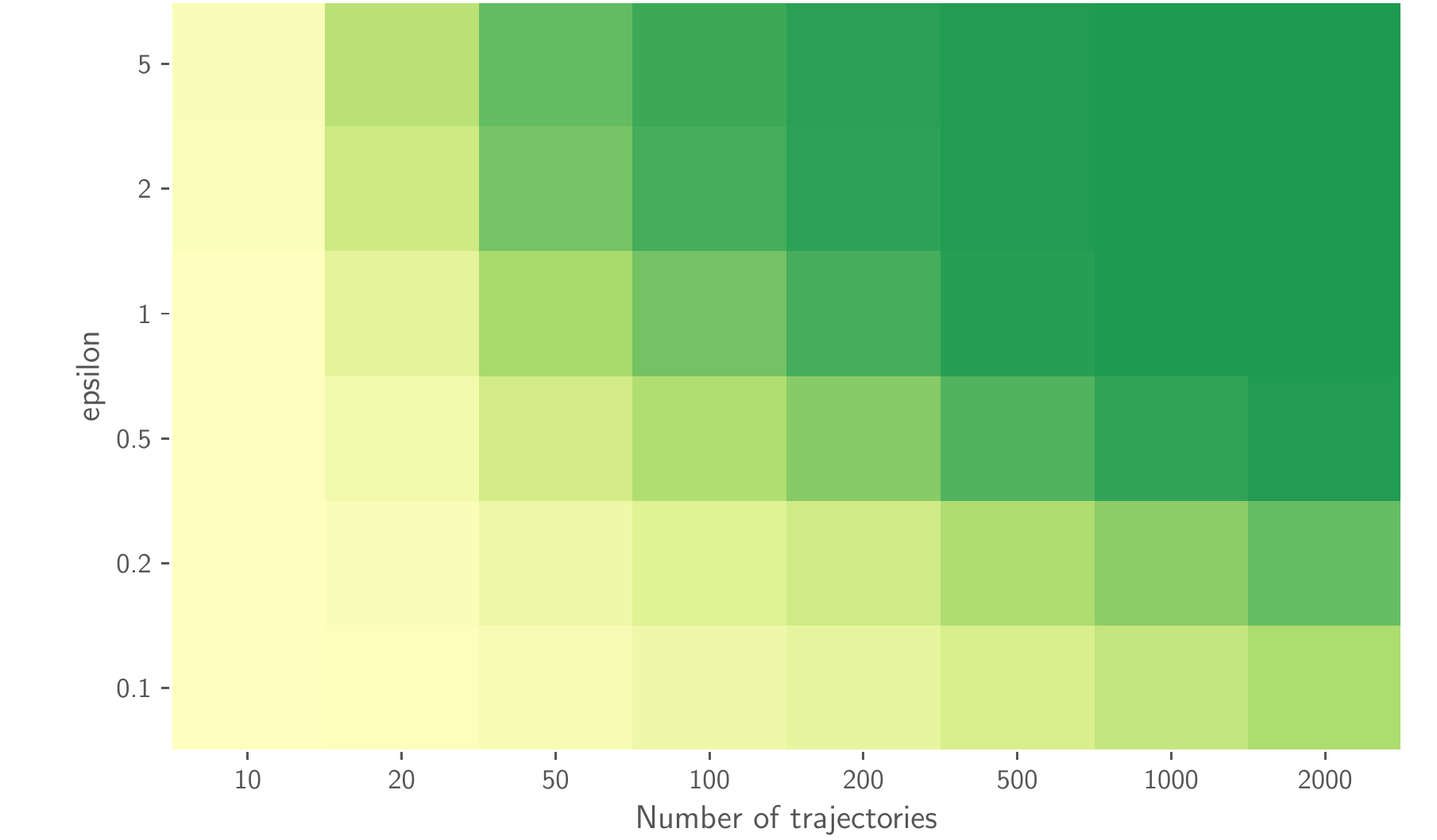}
			\label{fig:heatmap_eps_percentile_exact_soft_spibb_1step_eta_0.1}
		}
		\subfloat[1\%-CVaR: Approx-Soft-SPIBB 1-s ($\eta=0.1$).]{
			\includegraphics[trim = 10pt 140pt 45pt 60pt, clip, width=0.5\textwidth]{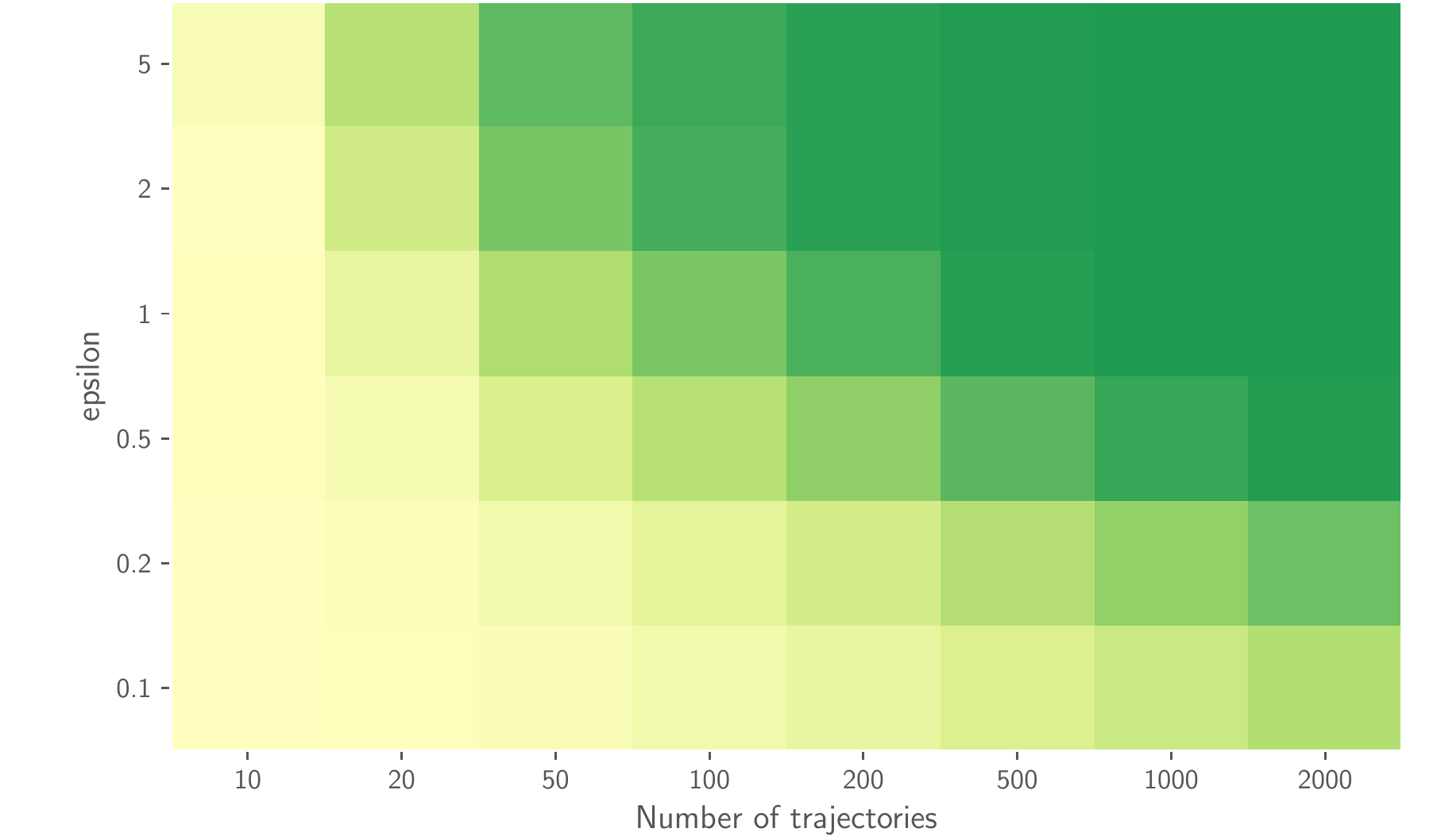}
			\label{fig:heatmap_eps_percentile_approx_soft_spibb_1step_eta_0.1}
		} \\
		\centering
		\subfloat[1\%-CVaR: Exact-Soft-SPIBB ($\eta=0.1$).]{
			\includegraphics[trim = 10pt 140pt 45pt 60pt, clip, width=0.5\textwidth]{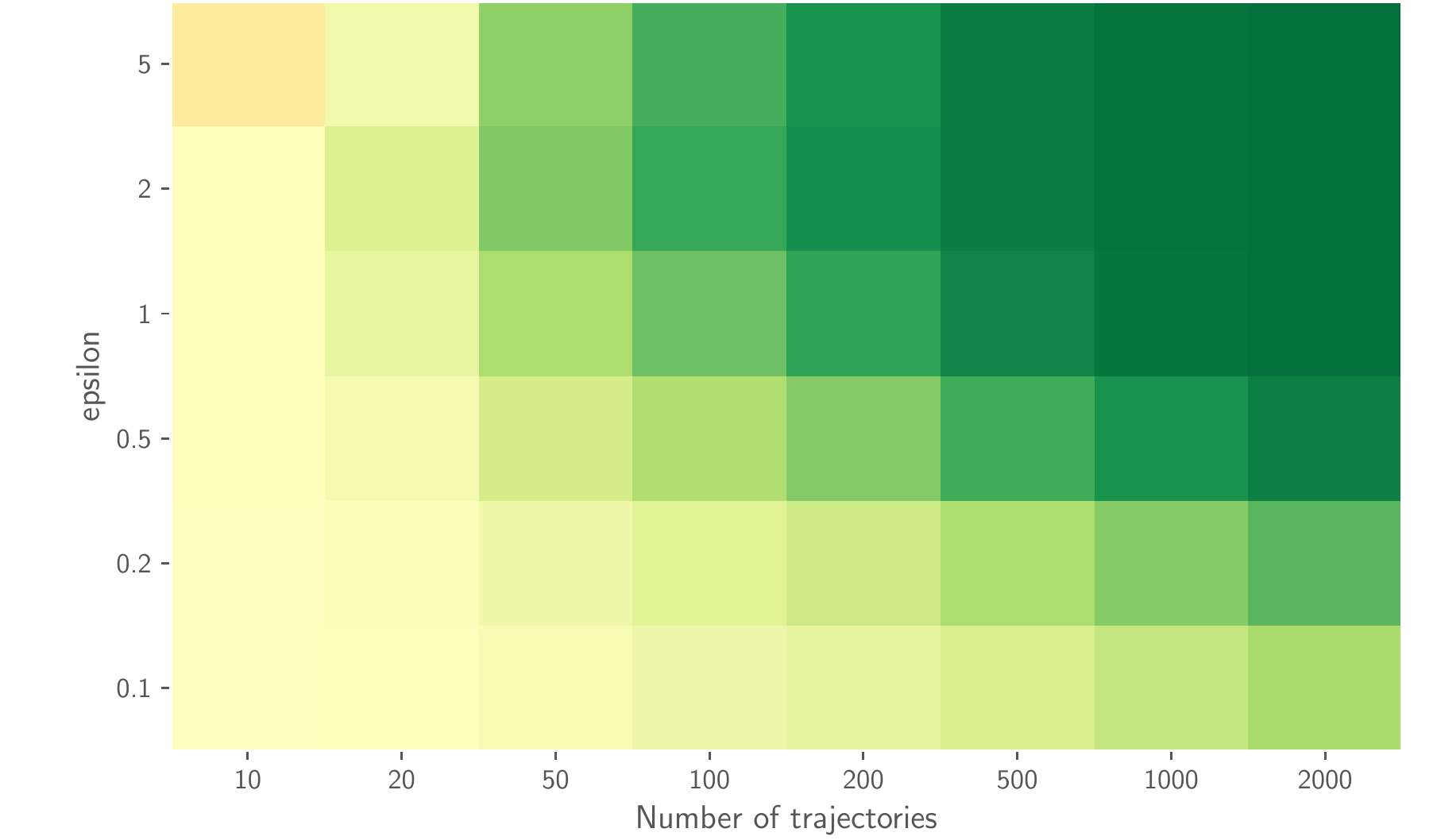}
			\label{fig:heatmap_eps_percentile_exact_soft_spibb_eta_0.1}
		}
		\subfloat[1\%-CVaR: Approx-Soft-SPIBB ($\eta=0.1$).]{
			\includegraphics[trim = 10pt 140pt 45pt 60pt, clip, width=0.5\textwidth]{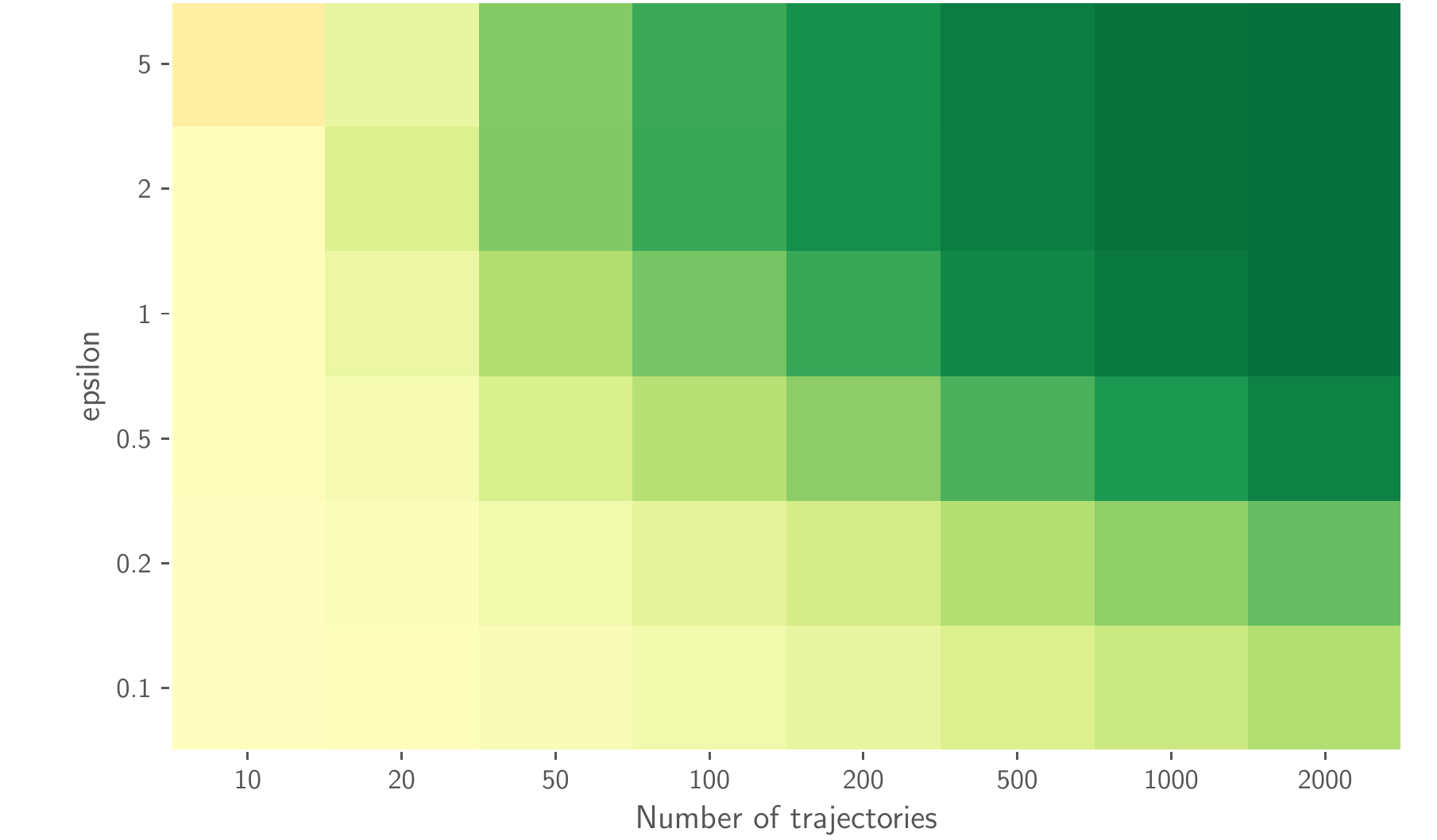}
			\label{fig:heatmap_eps_percentile_approx_soft_spibb_eta_0.1}
		}\\
		\centering
		\subfloat[1\%-CVaR: Exact-Soft-SPIBB 1-s ($\eta=0.9$).]{
			\includegraphics[trim = 10pt 140pt 45pt 60pt, clip, width=0.5\textwidth]{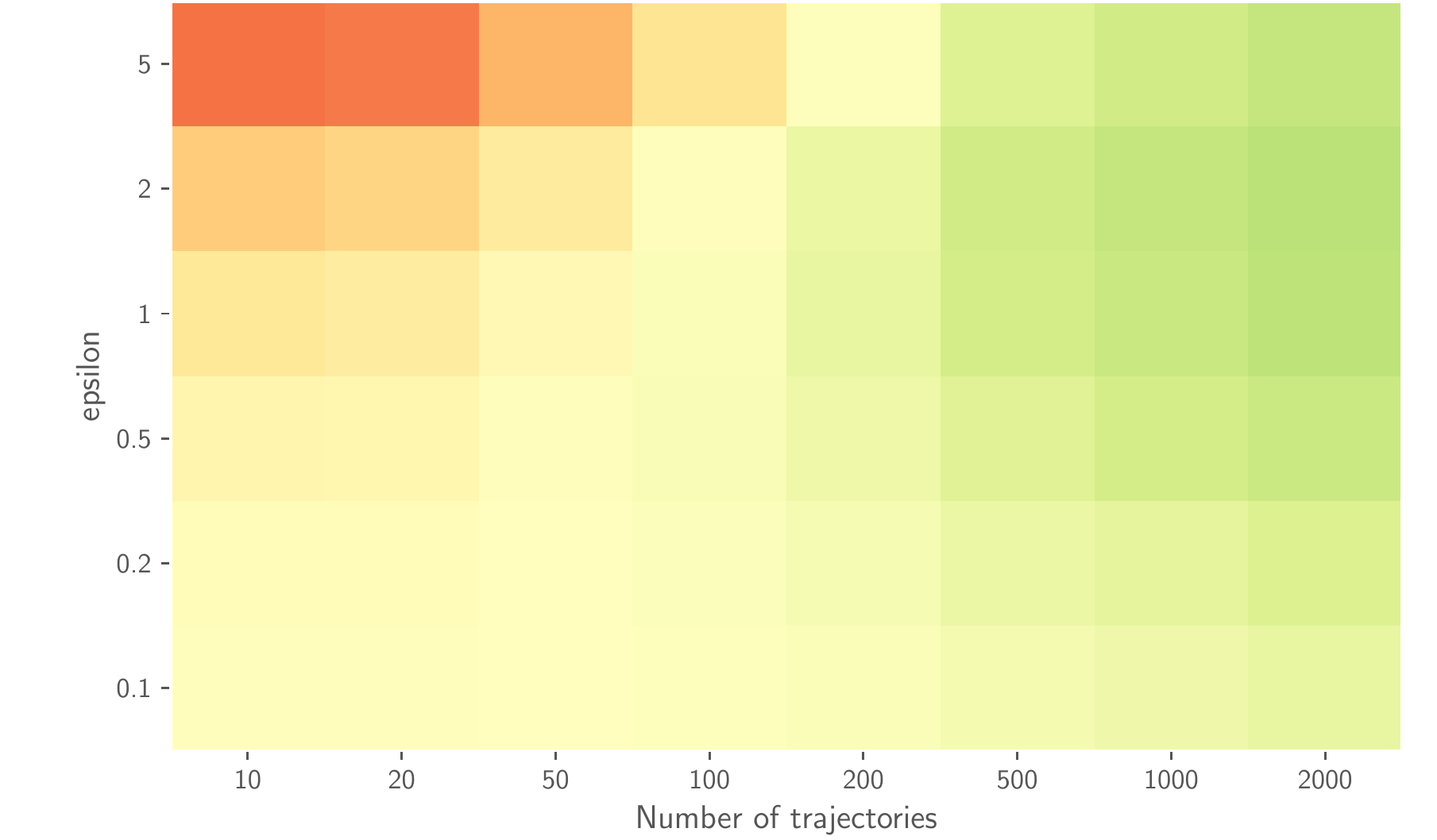}
			\label{fig:heatmap_eps_percentile_exact_soft_spibb_1step_eta_0.9}
		}
		\subfloat[1\%-CVaR: Approx-Soft-SPIBB 1-s ($\eta=0.9$).]{
			\includegraphics[trim = 10pt 140pt 45pt 60pt, clip, width=0.5\textwidth]{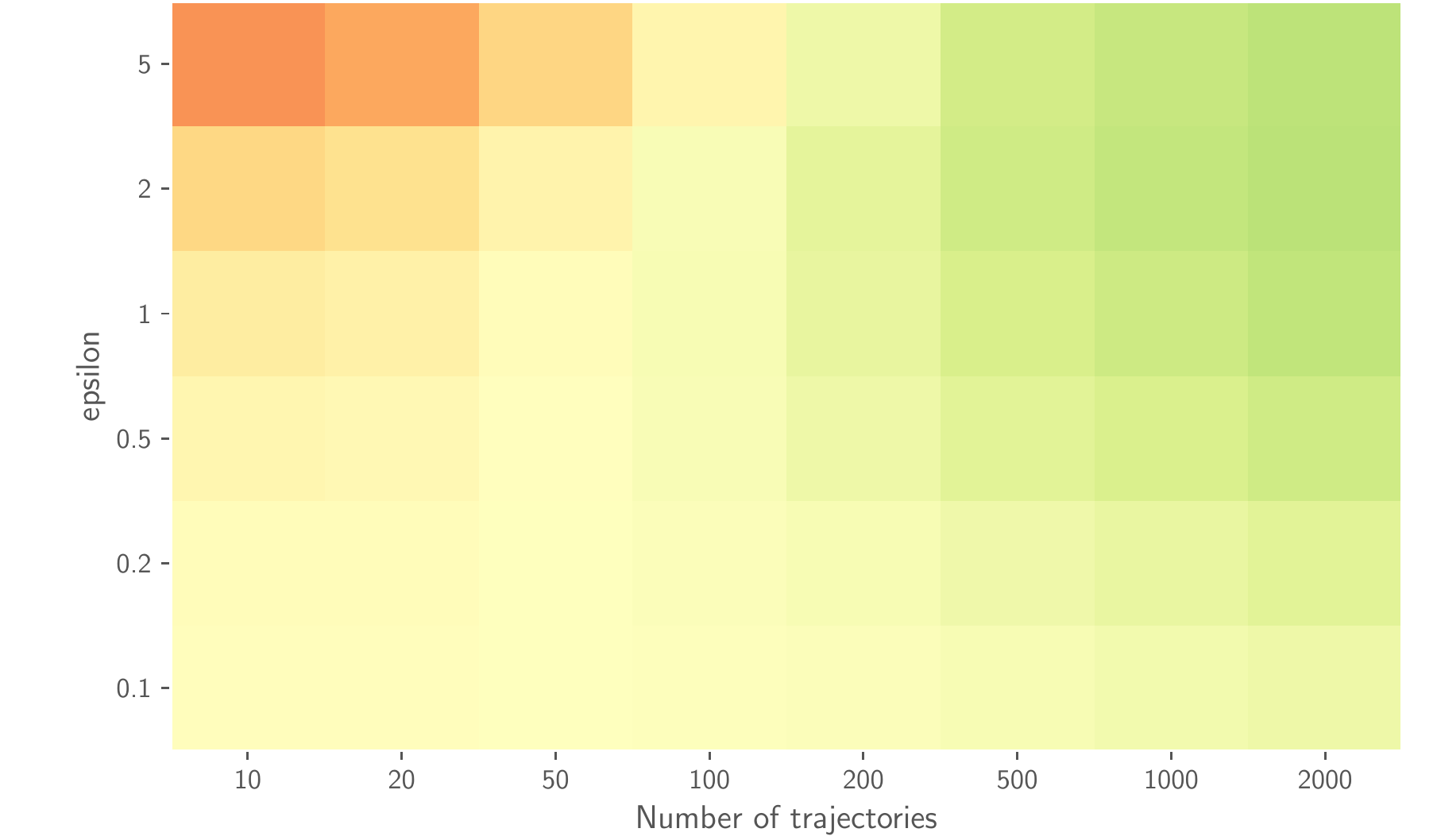}
			\label{fig:heatmap_eps_percentile_approx_soft_spibb_1step_eta_0.9}
		} \\
		\centering
		\subfloat[1\%-CVaR: Exact-Soft-SPIBB ($\eta=0.9$).]{
			\includegraphics[trim = 10pt 140pt 45pt 60pt, clip, width=0.5\textwidth]{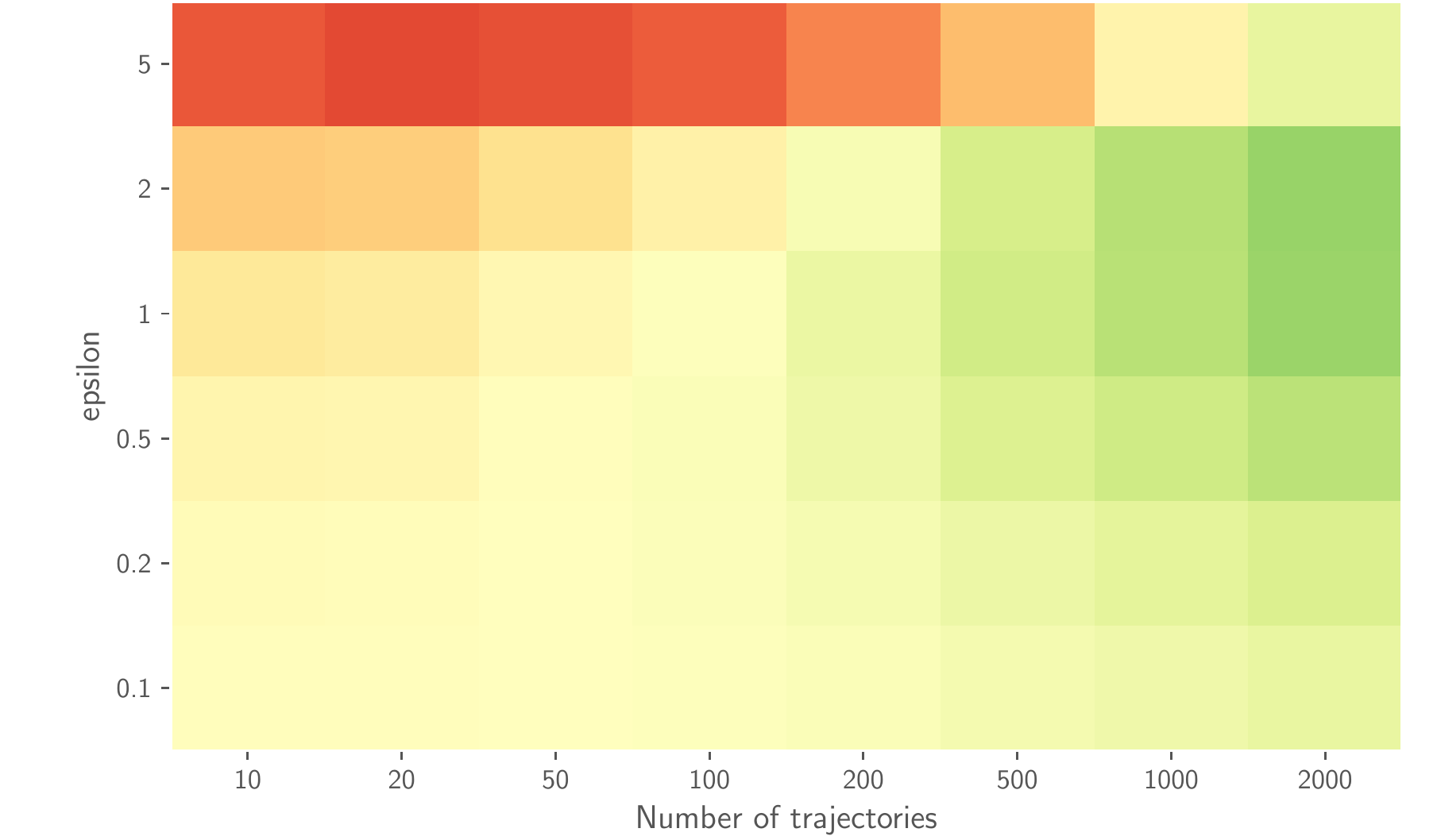}
			\label{fig:heatmap_eps_percentile_exact_soft_spibb_eta_0.9}
		}
		\subfloat[1\%-CVaR: Approx-Soft-SPIBB ($\eta=0.9$).]{
			\includegraphics[trim = 10pt 140pt 45pt 60pt, clip, width=0.5\textwidth]{{figures/heatmaps_eps/heatmap_per_arg_norm_percentile_perf_soft_SPIBB_sort_Q_ratio=0.9}.pdf}
			\label{fig:heatmap_eps_percentile_approx_soft_spibb_eta_0.9}
		}
		\caption{Random MDPs: hyper-parameter 1\%-CVaR performance heatmaps for Soft-SPIBB methods under a weak ($\eta = 0.1$) and a strong ($\eta = 0.9$) baseline.}
		\label{fig:random_mdps_full_percentile_heatmaps_eta_0.1}
	\end{figure*}

	\begin{figure*}[t!]
		\centering
		\subfloat[Mean: RaMDP ($\eta=0.1$).]{
			\includegraphics[trim = 10pt 140pt 45pt 60pt, clip, width=0.5\textwidth]{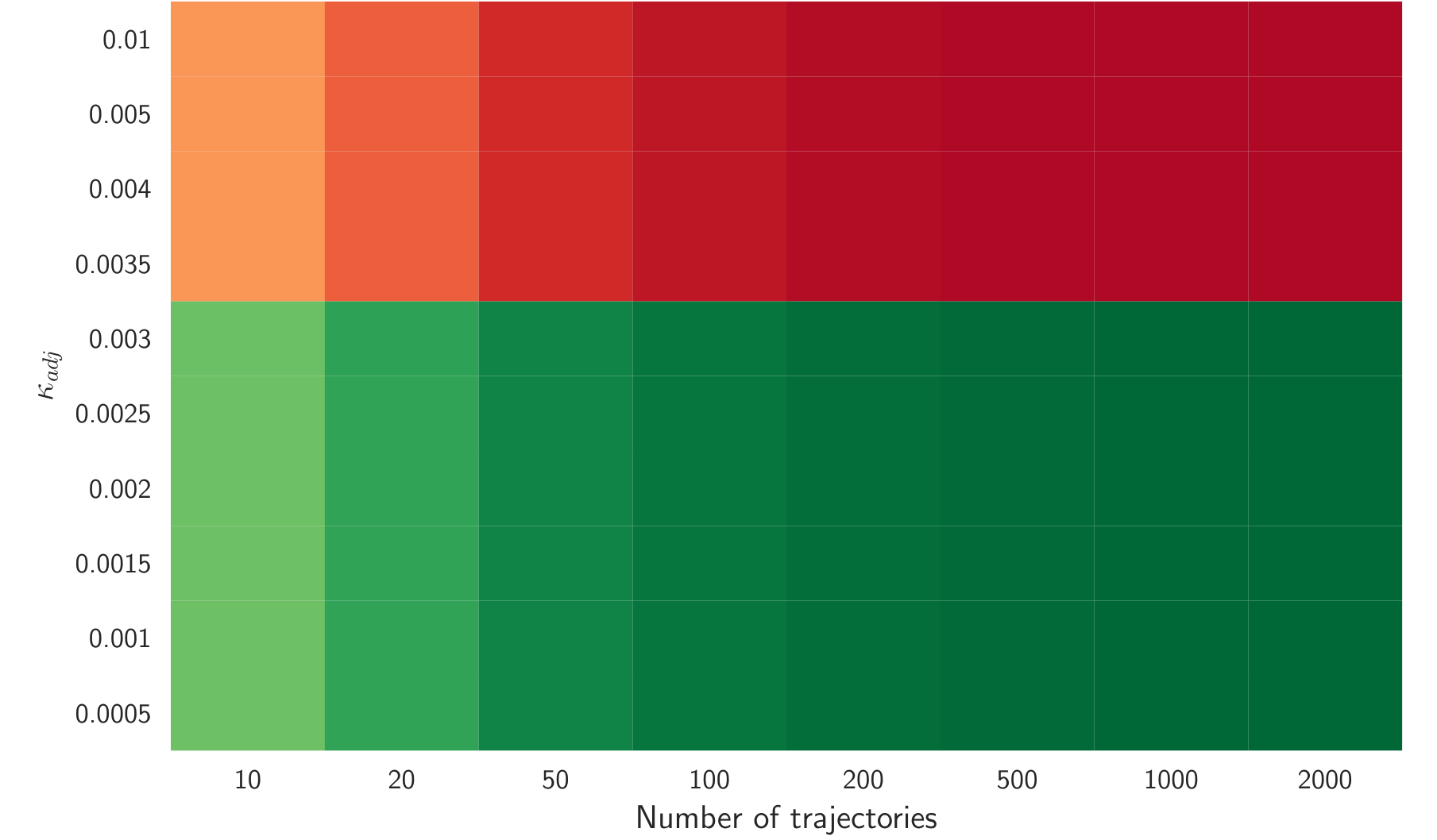}
		}
		\subfloat[1\%-CVaR: RaMDP ($\eta=0.1$).]{
			\includegraphics[trim = 10pt 140pt 45pt 60pt, clip, width=0.5\textwidth]{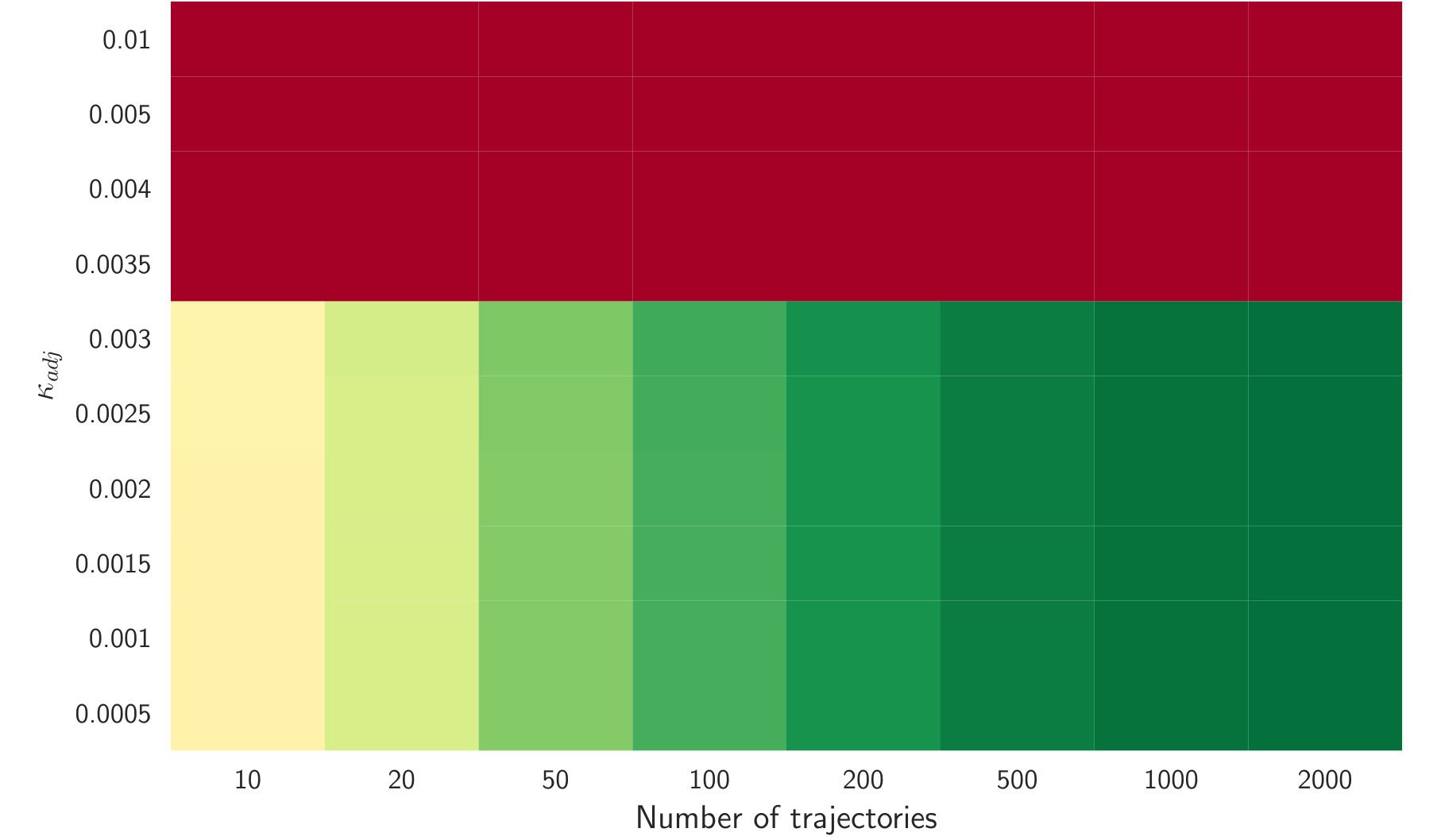}
		} \\
		\centering
		\subfloat[Mean: RaMDP ($\eta=0.4$).]{
			\includegraphics[trim = 10pt 140pt 45pt 60pt, clip, width=0.5\textwidth]{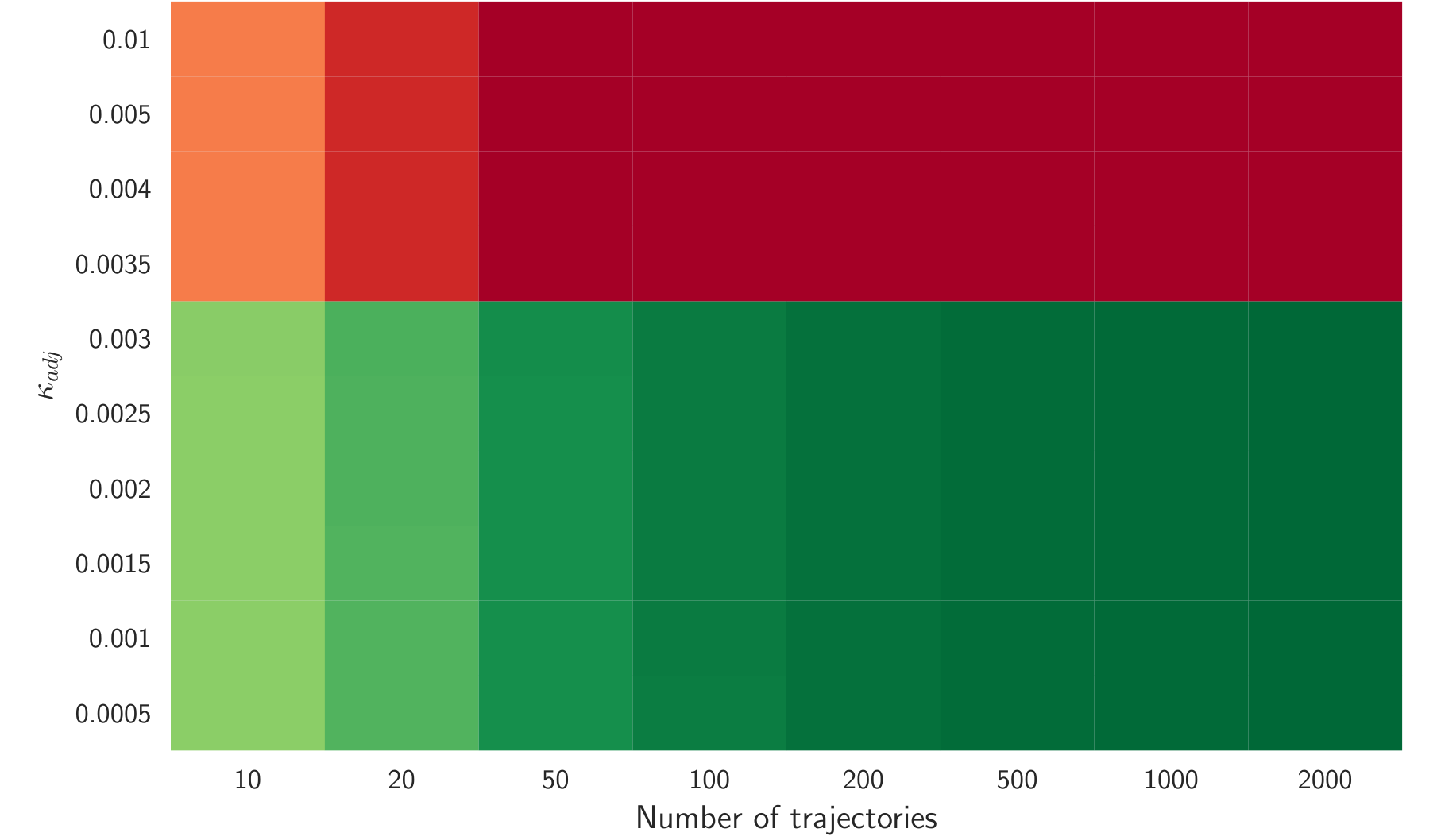}
		}
		\subfloat[1\%-CVaR: RaMDP ($\eta=0.4$).]{
			\includegraphics[trim = 10pt 140pt 45pt 60pt, clip, width=0.5\textwidth]{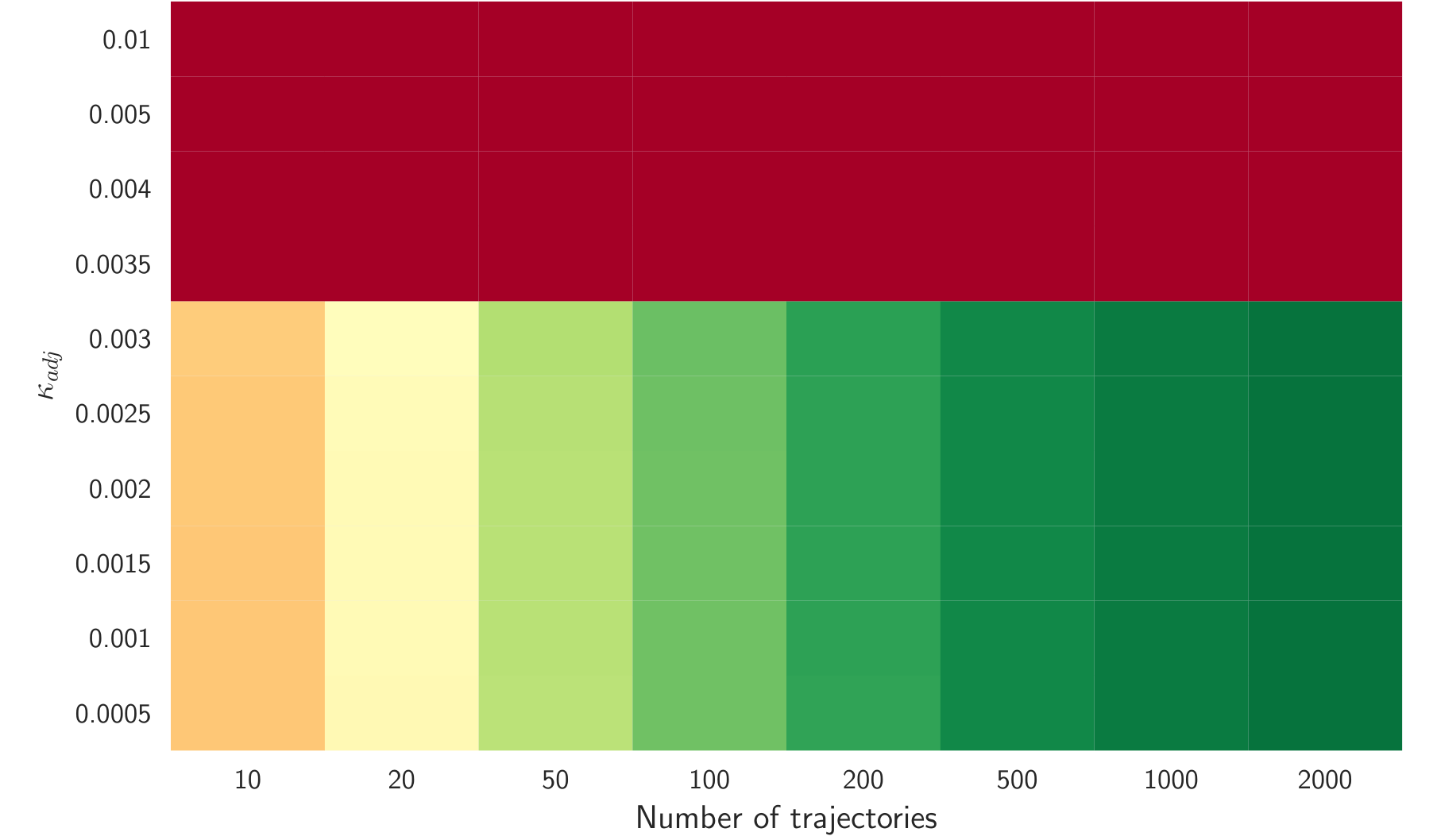}
		} \\
		\centering
		\subfloat[Mean: RaMDP ($\eta=0.6$).]{
			\includegraphics[trim = 10pt 140pt 45pt 60pt, clip, width=0.5\textwidth]{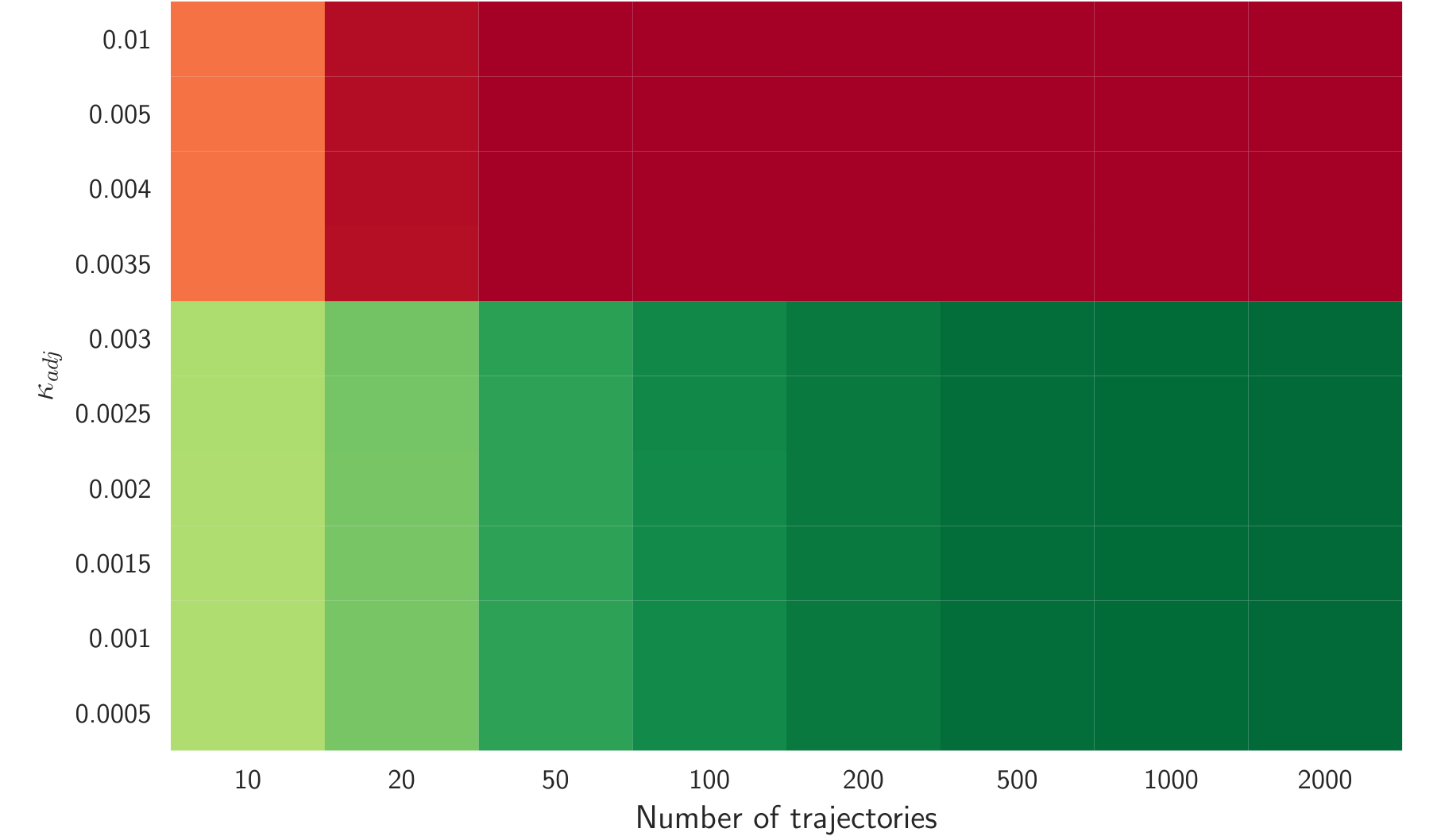}
		}
		\subfloat[1\%-CVaR: RaMDP ($\eta=0.6$).]{
			\includegraphics[trim = 10pt 140pt 45pt 60pt, clip, width=0.5\textwidth]{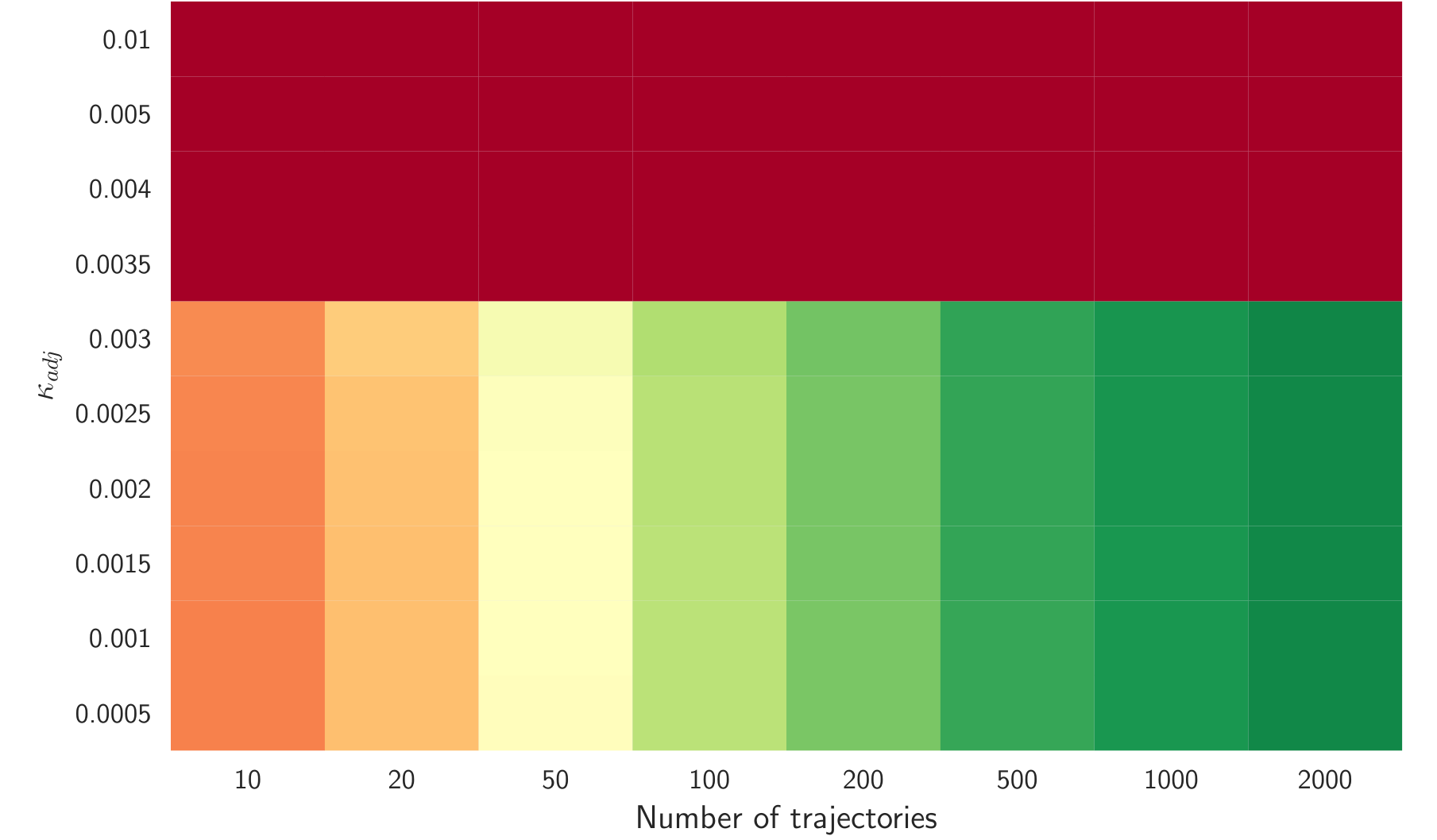}
		} \\
		\centering
		\subfloat[Mean: RaMDP ($\eta=0.9$).]{
			\includegraphics[trim = 10pt 140pt 45pt 60pt, clip, width=0.5\textwidth]{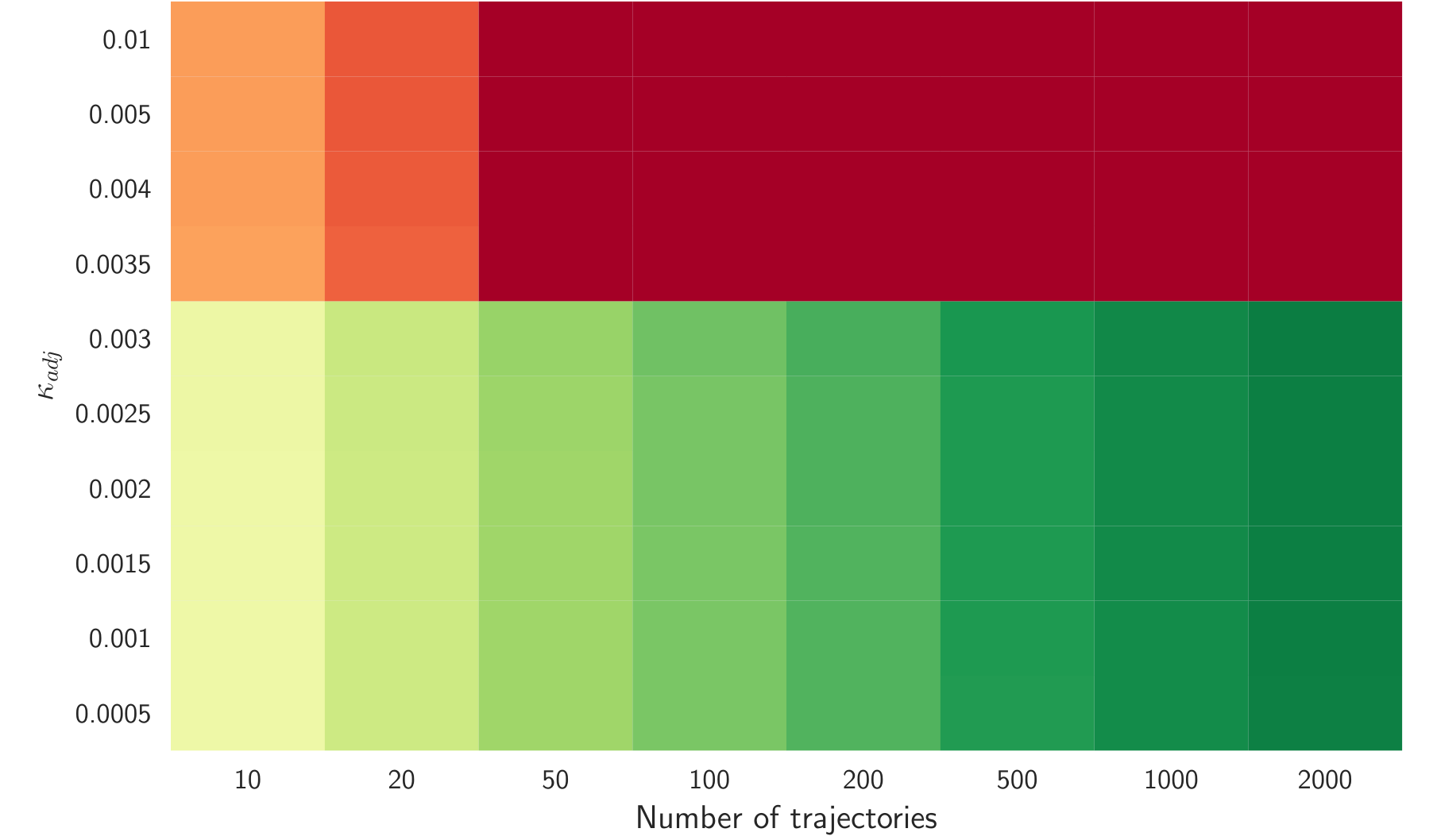}
		}
		\subfloat[1\%-CVaR: RaMDP ($\eta=0.9$).]{
			\includegraphics[trim = 10pt 140pt 45pt 60pt, clip, width=0.5\textwidth]{{figures/RaMDP/heatmap_per_arg_norm_percentile_perf_RaMDP_ratio=0.9}.pdf}
		} \\
		\caption{Random MDPs: hyper-parameter mean and 1\%-CVaR performance heatmaps for RaMDP methods under a weak ($\eta = 0.1$), a medium weak ($\eta = 0.4$), a medium strong ($\eta = 0.6$), and a strong ($\eta = 0.9$) baseline.}
		\label{fig:random_mdps_RaMDP}
	\end{figure*}
	
	\clearpage
	
	\section{Helicopter experiment details}
	\label{sup:expe_helicopter}
	
	
	\subsection{Details about the helicopter environment}
	\label{sup:dummy-parameters}
	See \citep[Appendix D.1]{Laroche2019}.

	\subsection{Helicopter experimental process}
	
	See Pseudo-code~\ref{alg:helicopter_benchmark}.
	\begin{pseudocode}[ht]
		\caption{Helicopter experimental process}
		\KwIn{List of algorithms}
		\KwIn{List of hyper-parameter values}
		\KwIn{List of dataset sizes}
		\BlankLine
		\RepTimes{$20$}{
		    \For{each dataset size}{
		            
	            Generate a dataset.
	            
	            Compute the pseudo-counts.
	            
	            \RepTimes{$15$}{
		            
        		    \For{each algorithm}{
            		    \For{each hyper-parameter value}{
            		    
        		            Train a policy.
        		                    
                            Evaluate the trained policy.
        		                    
                            Record the performance of the trained policy.
                        }
    		        }
    		    }
		    }
		}
		
		\label{alg:helicopter_benchmark}
	\end{pseudocode}
	
	\subsection{Details about the DQN implementations}
	\label{sup:helicopter_training}
    The batch version of DQN simply consists in replacing the experience replay buffer by the dataset we are training on. Effectively, we are not sampling from the environment anymore but from the transitions collected a priori following the baseline. RaMDP follows the same principle with a target modified according to the reward hacking:
    \begin{align}
	    y^{(i)}_j &= r_j - \cfrac{\kappa}{\sqrt{\tilde{N}_{\mathcal{D}}(x_j,a_j)}} + \gamma \max_{a'\in\mathcal{A}} Q^{(i)}(x_j',a'),
	\end{align}
    where $\tilde{N}_{\mathcal{D}}(x_j,a_j)$ is the pseudo-count. For the sake of simplicity and to be able to repeat several runs of each experiment efficiently, instead of applying pseudo-count methods from the literature~\citep{Bellemare2016,Fox2018,Burda2019}, we consider here a pseudo-count heuristic based on the Euclidean state-distance, and a task where it makes sense to do so. The pseudo-count of a state-action $(x,a)$ is defined as the sum of its similarity with the state-action pairs $(x_i,a_i)$ found in the dataset. The similarity between $(x,a)$ and $(x_i,a_i)$ is equal to 0 if $a_i\neq a$, and to $\max(0,1-5d(x,x_i))$ otherwise, where $d(\cdot,\cdot)$ is the Euclidean distance between two states:
    \begin{align}
	    \tilde{N}_{\mathcal{D}}(x,a) &= \sum_{\langle x_j,a_j=a,r_j,x'_j\rangle\in\mathcal{D}} \max(0,1-5d(x,x_j)),
	\end{align}
    
    The same pseudo-counts were used for all algorithms using it. In particular, they allowed to compute the error function at each state-action pair as (the cardinal of continuous state spaces cannot be used as in the tabular case):
    \begin{align}
        e_P(x,a) = \frac{1}{\sqrt{\tilde{N}_{\mathcal{D}}(x,a)}}.
    \end{align} 
    The same general methodology applies for Soft-SPIBB, according to the definitions from Section \ref{subsec:Soft-SPIBB-DQN}. Although, SPIBB is a special case of Soft-SPIBB, its implementation has been performed independently in order to be more computationally efficient. For SPIBB, the targets we are using for our $Q$-values updates verify the following modified Bellman equation:
    \begin{align}
	    y^{(i)}_j &= r_j + \gamma \max_{\pi\in\Pi_b} \sum_{a'\in\mathcal{A}} \pi(a'|x_j') Q^{(i)}(x_j',a')  \\
	    &= r_j + \gamma\sum_{(x_j',a')\in\mathfrak{B}} \pi_b(a'|x_j') Q^{(i)}(x_j',a') \nonumber \\
	    & \quad\quad + \gamma\left(\sum_{(x_j',a')\notin\mathfrak{B}} \pi_b(a'|x_j')\right) \max_{(x_j',a')\notin\mathfrak{B}} Q^{(i)}(x_j',a') \label{eq:spibb-DQN-app}
	\end{align}
    We notice in particular that when $\mathfrak{B} = \emptyset$ the targets fall back to the traditional Bellman ones.
    We used the now classic target network trick \citep{Mnih2015}, combined with Double-DQN \citep{HasseltGS15}.

    The network used for the baseline and for the algorithms in the benchmark is a fully connected network with 3 hidden layers of $32$, $128$ and $32$ neurons, initialized using he\_uniform \citep{he2015delving}. The network has $9$ outputs corresponding to the $Q$-values of the $9$ actions in the game.
    We train the $Q$-networks with RMSProp \citep{tieleman2012lecture} with a momentum of $0.95$ and $\epsilon = 10^{-7}$ on mini-batches of size $32$. The learning rate is initialized at $0.01$ and is annealed every $20$k transitions or every pass on the dataset, whichever is larger. The networks are trained for $2$k passes on the dataset, and are fully converged by that time. The models are trained with Pytorch \citep{pytorch}. The policy is tested for $1$k steps at the end of training, with the initial states of each trajectory sampled as described in Section \ref{sup:dummy-parameters}.

	\subsection{Additional experimental results}
	\label{sup:helicopter_more_results}
	
	See Table~\ref{tab:helicopter-numerical-results} and Figure~\ref{fig:Helicopter_experiments_sup}.
	
	\begin{table}[ht]
		\caption{Helicopter: numerical (mean and 10\%-CVaR) results with optimal hyper-parameters} 
		\centering
		\setlength\tabcolsep{3pt}
		\def\arraystretch{1.6}
		\begin{tabular}{|c|c|c|c|c|c|c|c|c|c|c|}
			\hline
			$|\mathcal{D}|$ & \multicolumn{2}{c|}{Baseline} & \multicolumn{2}{c|}{DQN} & \multicolumn{2}{c|}{RaMDP-DQN} & \multicolumn{2}{c|}{$\Pi_b$-SPIBB} & \multicolumn{2}{c|}{Soft-SPIBB} \\
			\hline
			 & Mean & CVaR & Mean & CVaR & Mean & CVaR & Mean & CVaR & Mean & CVaR  \\ 
			\hline
			1,000 & 2.30 & 1.59 & -0.91 & -1.00 & N/A & N/A & \textbf{2.74} & \textbf{1.91} & \textbf{2.72} & \textbf{1.88} \\
			3,000 & 2.30 & 1.59 & -0.72 & -1.00 & 0.85 & -0.70 & \textbf{3.19} & \textbf{2.37} & \textbf{3.23} & \textbf{2.38} \\
			6,000 & 2.30 & 1.59 & 0.07 & -1.00 & N/A & N/A & 3.30 & 2.41 & \textbf{3.39} & \textbf{2.54} \\
			10,000 & 2.30 & 1.59 & 0.22 & -1.00 & 3.21 & 1.65 & 3.39 & 2.46 & \textbf{3.61} & \textbf{2.69} \\
			\hline
		\end{tabular}
		\normalsize
		\label{tab:helicopter-numerical-results}
	\end{table}
	
	\begin{figure}[ht]
        \centering
    	\subfloat[Helicopter benchmark with $|\mathcal{D}|=6,000$]{
            \includegraphics[trim = 5pt 5pt 15pt 5pt, clip, width=\columnwidth,left]{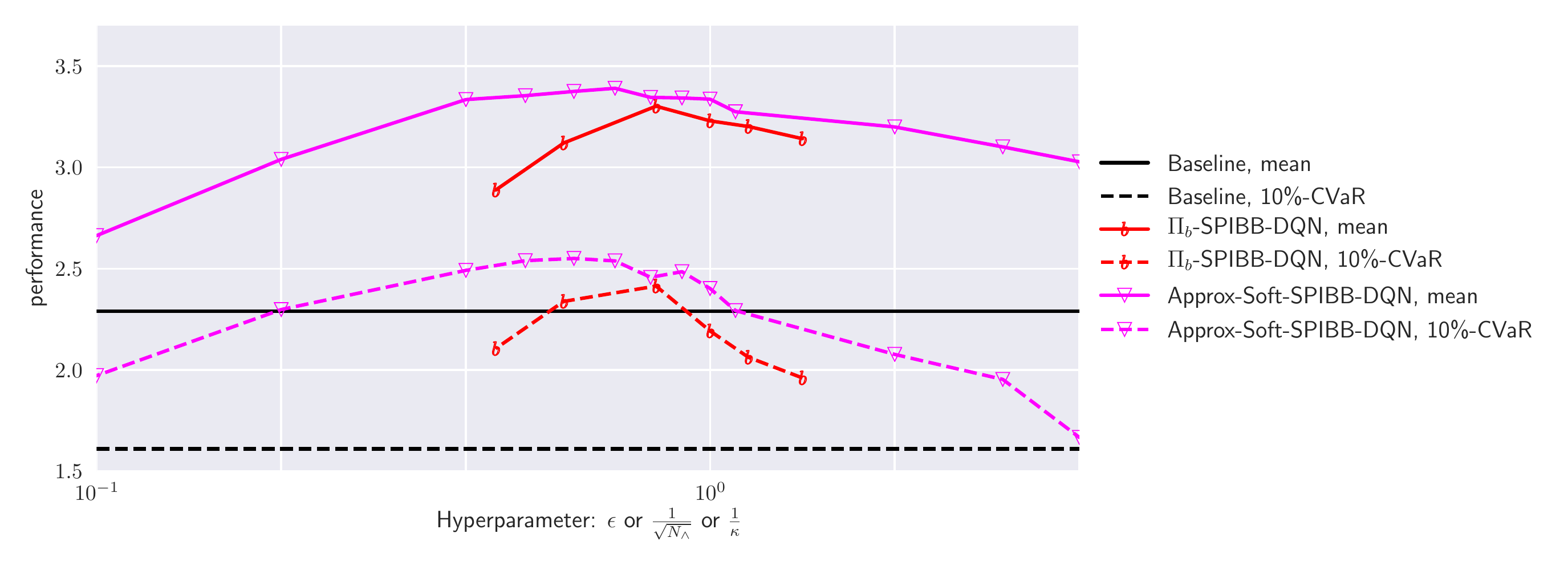}
            \label{fig:soft_spibb_dqn_6k}
    	}\\
        \centering
    	\subfloat[Helicopter benchmark with $|\mathcal{D}|=1,000$]{
            \includegraphics[trim = 5pt 5pt 15pt 5pt, clip, width=\columnwidth]{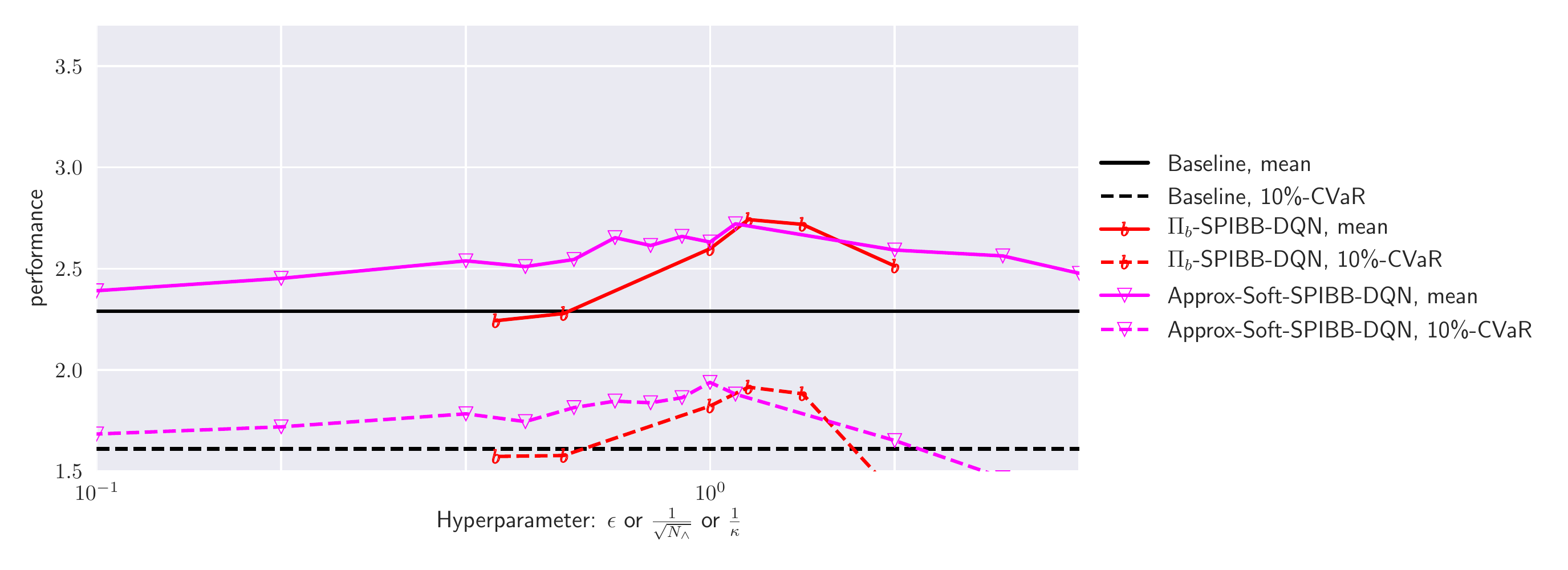}
            \label{fig:soft_spibb_dqn_1k}
    	}
        \caption{Helicopter: mean and 10\%-CVaR as a function of the hyper-parameter}
        \label{fig:Helicopter_experiments_sup}
    \end{figure}

	\clearpage
	\section{Reproducible, reusable, and robust Reinforcement Learning}
	This paper's objective is to improve the robustness and the reliability of Reinforcement Learning algorithms. Inspired from Joelle Pineau's talk at NeurIPS 2018 about reproducible, reusable, and robust Reinforcement Learning\footnote{https://nips.cc/Conferences/2018/Schedule?showEvent=12486}, we intend to also make our work reusable and reproducible.
	
    \newlist{arrowlist}{itemize}{1}
    \setlist[arrowlist]{label=$\Rightarrow$}
	\subsection{Pineau's checklist (slide 33)}
	For all algorithms presented, check if you include:
	\begin{itemize}
	    \item A clear description of the algorithm.
            \begin{arrowlist}
                \item See Section \ref{sec:spisbb}.
            \end{arrowlist}
	    \item An analysis of the complexity (time, space, sample size) of the algorithm.
            \begin{arrowlist}
                \item We provide formal analysis for the complexity in Section \ref{subsec:complexity}, and a rapid empirical assessment in Figure \ref{subfig:complarge} for the finite MDP analysis. For Soft-SPIBB-DQN, we empirically observed a $50\%$ increase in runtime compared to SPIBB-DQN.
            \end{arrowlist}
	    \item A link to downloadable source code, including all dependencies.
            \begin{arrowlist}
                \item The code is being made available. The code will be made public on acceptation. See Section \ref{sup:code}.
            \end{arrowlist}
	\end{itemize}
	
    For any theoretical claim, check if you include:
	\begin{itemize}
    	\item A statement of the result.
            \begin{arrowlist}
                \item See Theorems \ref{th:1-step}, \ref{thm:SSPIBB}, and \ref{thm:complexity}.
            \end{arrowlist}
    	\item A clear explanation of any assumptions.
            \begin{arrowlist}
                \item The main assumption for Theorem \ref{thm:SSPIBB} is formalized as Assumption \ref{ass:kappa}. See Section \ref{sec:spisbb} for discussion.
            \end{arrowlist}
    	\item A complete proof of the claim.
            \begin{arrowlist}
                \item See Section \ref{sup:proofs}.
            \end{arrowlist}
	\end{itemize}
	
    For all figures and tables that present empirical results, check if you include:
	\begin{itemize}
    	\item A complete description of the data collection process, including sample size.
            \begin{arrowlist}
                \item See Sections \ref{sec:emp}, \ref{sup:MDPgen}, and \ref{sup:dummy-parameters}.
            \end{arrowlist}
    	\item A link to downloadable version of the dataset or simulation environment.
            \begin{arrowlist}
                \item See Section \ref{sup:code}.
            \end{arrowlist}
    	\item An explanation of how sample were allocated for training / validation / testing.
            \begin{arrowlist}
                \item The complete dataset is used for training. There is no need for validation set. Testing is performed in the true environment.
            \end{arrowlist}
    	\item An explanation of any data that was excluded.
            \begin{arrowlist}
                \item Does not apply to our simulated environments.
            \end{arrowlist}
    	\item The range of hyper-parameters considered, method to select the best hyper-parameter configuration, and specification of all hyper-parameters used to generate results.
            \begin{arrowlist}
                \item See Sections \ref{sup:benchmarkalgos} and \ref{sup:helicopter_training}.
            \end{arrowlist}
    	\item The exact number of evaluation runs.
            \begin{arrowlist}
                \item 100,000+ for finite MDPs experiments and 300 for SPIBB-DQN experiments.
            \end{arrowlist}
    	\item A description of how experiments were run.
            \begin{arrowlist}
                \item See Sections \ref{sec:emp}, \ref{sup:expe_finite}, and \ref{sup:expe_helicopter}.
            \end{arrowlist}
    	\item A clear definition of the specific measure or statistics used to report results.
            \begin{arrowlist}
                \item Mean and X\% conditional value at risk (CVaR), described in Sections \ref{sec:emp} and \ref{sup:evaluationgen}.
            \end{arrowlist}
    	\item Clearly defined error bars.
            \begin{arrowlist}
                \item Given the high number of runs we considered, the error bar are too thin to be displayed. Any difference visible with the naked eye is significant. We use CVaR everywhere instead to account for the uncertainty.
            \end{arrowlist}
    	\item A description of results including central tendency (e.g. mean) and variation (e.g. stddev).
            \begin{arrowlist}
                \item All our work is motivated and analyzed with respect to this matter.
            \end{arrowlist}
    	\item The computing infrastructure used.
            \begin{arrowlist}
                \item For the finite-MDPs experiment, we used clusters of CPUs. The full results were obtained by running the benchmarks with 100 CPUs running independently in parallel during 24h. For the helicopter experiment, we used a GPU cluster. However, only one GPU is necessary for a single run. Using a cluster allowed to launch several runs in parallel and considerably sped up the experiment. On a single GPU (a GTX 1080 Ti), a dataset of $|\mathcal{D}| = 10$k transitions is generated in $~5$ seconds. The dataset generation scales linearly in $|\mathcal{D}|$. Computing the counts for that dataset takes approximately $20$ minutes, it scales quadratically with the size of the dataset. As far as training is concerned, $2000$ passes on a dataset of $10$k transitions takes around $25$ minutes, it scales linearly in $N$. Finally, evaluation of the trained policy on $10$k trajectories takes $15$ minutes. It scales linearly in $|\mathcal{D}|$ as it requires the computation of the pseudo-count for each state encountered during the evaluation and this pseudo-count computation is linear in $|\mathcal{D}|$. Overall, a single run for a dataset of $10$k transitions takes around one hour.
            \end{arrowlist}
	\end{itemize}
	
	\subsection{Code attached to the submission}
	\label{sup:code}
    The attached code can be used to reproduce the experiments presented in the submitted paper. It is split into two projects: one for finite MDPs, and one for Soft-SPIBB-DQN.
    \begin{itemize}
        \item \url{https://github.com/RomainLaroche/SPIBB}
        \item \url{https://github.com/rems75/SPIBB-DQN}
    \end{itemize}
    
    \subsubsection{Finite MDPs}
	\label{sup:code_finite}
    
    \paragraph{Prerequisites}
    The finite MDP project is implemented in Python 3.5 and only requires *numpy* and *scipy*.
    
    \paragraph{Content}
    We include the following:
    \begin{itemize}
        \item Libraries of the following algorithms:
            \begin{itemize}
                \item Basic RL,
                \item Soft-SPIBB:
                    \begin{itemize}
                        \item Exact-Soft-SPIBB (1-step or not),
                        \item Approx-Soft-SPIBB (1-step or not),
                    \end{itemize}
                \item SPIBB:
                    \begin{itemize}
                        \item $\Pi_b$-SPIBB,
                        \item $\Pi_{\leq b}$-SPIBB,
                    \end{itemize}
                \item HCPI:
                    \begin{itemize}
                        \item doubly-robust,
                        \item importance sampling,
                        \item weighted importance sampling,
                        \item weighted per decision IS,
                        \item per decision IS,
                    \end{itemize}
                \item Robust MDP,
                \item and Reward-adjusted MDP.
            \end{itemize}
        \item Environments:
            \begin{itemize}
                \item Random MDPs environment.
            \end{itemize}
        \item Random MDPs experiment of Section \ref{sec:RandomMDPs}. Run:
        
    		\footnotesize{\texttt{python soft\_randomMDPs\_main.py \#name\_of\_experiment\# \#random\_seed\#}}
    		
    \end{itemize}

    \paragraph{Not included} We DO NOT include the following:
    \begin{itemize}
        \item The hyper-parameter search (appendix, Section \ref{sup:benchmarkalgos}): it should be easy to re-implement.
        \item The figure generator: it has too many specificities to be made understandable for a user at the moment. Also, it is not hard to re-implement with one's own visualization tools.
    \end{itemize}
    
    \paragraph{License} This project is BSD-licensed.
    
    \subsubsection{Soft-SPIBB-DQN}
	\label{sup:code_spibb_dqn}

    \paragraph{Prerequisites}
    Soft-SPIBB-DQN is implemented in Python 3 and requires the following libraries: PyTorch, pickle, glob, yaml, argparse, numpy, yaml, pathlib, csv, scipy and click.
    
    \paragraph{Content}
    The SoftSPIBB-DQN project contains the helicopter environment, the baseline used for our experiments and the code required to generate datasets and train vanilla DQN, RaMDP-DQN, SPIBB-DQN and Soft-SPIBB-DQN.
    
    \paragraph{Not included} We DO NOT include the following:
    \begin{itemize}
        \item The multi-CPU/multi-GPU implementation: its structure is too much dependent on the cluster tools. It would be useless for somebody from another lab and might divulge author affiliations.
    \end{itemize}
    
    \paragraph{License} This project is BSD-licensed.

\end{document}